\setlist[1]{labelindent=\parindent}
\setlist[enumerate,1]{label = \Roman*., ref = \Roman*}
\setlist[enumerate,2]{label = \Alph*., ref = \Alph*}
\setlist[enumerate,3]{label = \roman*., ref = \roman*}
\setlist[enumerate,4]{label = \alph*., ref = \alph*}
\numberwithin{equation}{section}
\newtheorem{theorem}{Theorem}[section]
\newtheorem{proposition}[theorem]{Proposition}
\newtheorem{lemma}[theorem]{Lemma}
\newtheorem{definition}[theorem]{Definition}
\theoremstyle{remark}
\newtheorem{remark}[theorem]{Remark}
\newcommand{\divv}{\mathrm{div}}
\newcommand{\G}{\mathcal{G}}
\newcommand{\R}{\mathbb{R}}
\newcommand{\M}{\mathcal{M}}
\newcommand{\ba}{\begin{array}}
\newcommand{\ea}{\end{array}}
\newcommand{\bthm}{\begin{theorem}}
\newcommand{\ethm}{\end{theorem}}
\newcommand{\bprop}{\begin{proposition}}
\newcommand{\eprop}{\end{proposition}}
\newcommand{\blemma}{\begin{lemma}}
\newcommand{\elemma}{\end{lemma}}
\newcommand{\beqn}{\begin{equation}}
\newcommand{\eeqn}{\end{equation}}
\newcommand{\EE}{\mathcal{E}}
\newcommand{\EEE}{\hat{\mathcal{E}}}
\newcommand{\beqns}{\begin{equation*}}
\newcommand{\eeqns}{\end{equation*}}
\newcommand{\X}{\mathcal{X}}
\newcommand{\Y}{\mathcal{Y}}
\renewcommand{\L}{\mathcal{L}}
\renewcommand{\leq}{\leqslant}
\renewcommand{\geq}{\geqslant}
\definecolor{mygreen}{rgb}{0.1,0.75,0.2}
\newcommand{\nc}{\normalcolor}
\newcommand{\N}{\mathbb{N}}
\newcommand{\veps}{\varepsilon}
\newcommand{\eps}{\veps}
\DeclareMathOperator{\diag}{diag}
\newcommand{\Rd}{{\mathord{\mathbb R}^d}}
\def\P{{\mathcal P}}
\definecolor{darkgreen}{rgb}{0.1,0.6,0.1}
\definecolor{darkred}{rgb}{0.6,0,0}
\definecolor{lightgray}{rgb}{0.5,0.5,0.5}
\newcommand{\1}{\mathbf{1}}
\title[From spectral clustering to mean shift]{Clustering dynamics on graphs: from spectral clustering to mean shift through Fokker-Planck interpolation}
\author{Katy Craig, Nicol\'as Garc\'ia Trillos, and Dejan Slep\v cev}
\date{\today}                                        
\begin{document}

\begin{abstract}
In this work we build a unifying framework to interpolate between density-driven and geometry-based algorithms for data clustering, and specifically, to connect the mean shift algorithm with spectral clustering at discrete and continuum levels. We seek this connection through the introduction of 
 Fokker-Planck equations on data graphs. Besides introducing new forms of mean shift algorithms on graphs, we provide new theoretical insights on the behavior of the family of diffusion maps in the large sample limit as well as provide new connections between diffusion maps and mean shift dynamics on a fixed graph. Several numerical examples illustrate our theoretical findings and highlight the benefits of interpolating density-driven and geometry-based clustering algorithms. 
\end{abstract}

\maketitle

\setcounter{tocdepth}{2}
\tableofcontents



\section{Introduction}

In this work we establish new connections between two popular but seemingly unrelated families of methodologies used in unsupervised learning. The first family that we consider is density based and includes mode seeking clustering approaches such as the mean-shift algorithm introduced in \cite{Fukunaga75MS} and reviewed in \cite{Carreira16overview}, while
the second family is based on spectral geometric ideas applied to graph settings and includes methodologies such as Laplacian eigenmaps \cite{belkin2003laplacian}, and spectral clustering \cite{JordanNg01}. After discussing the mean shift algorithm in Euclidean space and reviewing a family of spectral methods for clustering on graphs, we seek these connections in two different ways. First, motivated by some heuristics at the continuum level (i.e. infinite data setting level), we take a suitable dynamic perspective and introduce appropriate interpolating \emph{Fokker-Planck} equations on data graphs. This construction is inspired by the variational formulation in Wasserstein space of Fokker-Planck equations at the continuum level and utilizes recent geometric formulations of PDEs on graphs. Second, we revisit the diffusion maps from \cite{Coifman1} (with an extended range for the parameter indexing the family) and in particular show that, when parameterized conveniently and in the large data limit, the family of diffusion maps is closely related to the same family of continuum dynamics motivating our Fokker-Planck interpolations on graphs. At the finite data level, we show that by taking an extreme value of the parameter indexing the diffusion maps we can retrieve a specific graph version of the mean shift algorithm introduced in \cite{koontz1976graph}. Our new theoretical insights are accompanied by extensive numerical examples aimed at illustrating the benefits of interpolating density and geometry driven clustering algorithms.






To begin our discussion, let us recall that unsupervised learning is one of the fundamental settings in machine learning where the goal is to find structure in a data set $\X$ without the aid of any labels associated to the data. For example, if the data set $\X$ consisted of images of animals, a standard task in unsupervised learning would be to recognize the structure of groups in $\X$ without using information of the actual classes that may be represented in the data set (e.g. \{dog, olinguito, caterpillar, \dots\}); in the literature this task is known as \emph{data clustering} and will be our main focus in this paper. Other unsupervised learning tasks include dimensionality reduction \cite{van2009dimensionality}, anomaly detection \cite{Hodge2004}, among others. 

When clusters are geometrically simple, for example when they are dense sets of points separated in space, elementary clustering methods, like $k$-means or $k$-median clustering, are sufficient to identify the clusters. However, in practice, clusters are often geometrically complex due to the natural variations that objects that belong to similar classes may have and also due to invariances that some object classes posses. To handle such data sets there is a large class of clustering algorithms, including the ones that will be explored in this paper, that are described as two-step procedures consisting of an embedding step and an actual clustering step where a more standard, typically simple, clustering method is used on the embedded data. In mathematical terms, in the first step the goal is to construct a map
\[ \Psi: \X \rightarrow \Y \]
between the data points in $\X$ and a space $\Y$ (e.g. $\Y=\R^k$ for some small $k$, or in general a metric space) to ``disentangle" the original data as much as possible; in the second step the actual clustering is obtained by running a simple clustering algorithm such as $K$-means (if the set $\Y$ is the Euclidean space, for example). While both steps are important and need careful consideration, it is in the first step, and specifically in the choice of $\Psi$, that most clustering algorithms differ from each other. For example, as we will see in section \ref{sec:MeanShift}, in some version of mean shift, $\Psi$ is induced by gradient ascent dynamics of a density estimator starting at the different data points (when the data points are assumed to lie in Euclidean space), whereas in spectral methods for clustering in graph-based settings the map $\Psi$ is typically built using the low lying spectrum of a suitable graph Laplacian. At its heart, different choices of $\Psi$ capture different heuristic interpretations of the loosely defined notion of ``data cluster". Some constructions have a density-driven flavor (e.g. mean shift), while others are inspired by geometric considerations (e.g. spectral clustering). The notions of density-based and geometry-based algorithms, however, are not monolithic, and each individual algorithm has nuances and drawbacks that are important to take into account when deciding whether to use it or not in a given situation; in our numerical experiments we will provide a series of examples that highlight some of the qualitative weaknesses of different clustering algorithms, density or geometry driven. Our main aim is to introduce a new mathematical framework to interpolate between these two seemingly unrelated families of clustering algorithms, and to provide new insights for existing interpolations like \textit{diffusion maps}.

In the rest of this introduction we present some background material that is used in the remainder.

\subsection{Mean-shift based methods}
\label{sec:MeanShift}

Let $\X= \{ x_1, \dots, x_n \}$ be a data set in $\R^d$.  One heuristic way to define data ``clusters" is to describe them as regions in space of high concentration of points separated by areas of low density. One algorithm that uses this heuristic definition is the \emph{mean shift algorithm}; see  \cite{Carreira16overview}. In essence, mean shift is a hill climbing scheme that seeks the local modes of a density estimator constructed from the observed data in an attempt to identify data clusters. 

To make the discussion more precise, let us first describe the setting where the points $x_1, \dots, x_n$ are obtained by sampling a probability distribution supported on the whole $\R^d$ with (unknown) smooth enough density $\rho : \R^d \rightarrow \R $. The first step in mean shift is to build an estimator for $\rho$ of the form:
\[ \hat{\rho}(x):= \frac{1}{n \delta^d} \sum_{i=1}^n \kappa\left( \frac{|x- x_i|}{\delta} \right), \]
where $  \kappa : \R_+ \rightarrow \R_+$ is an appropriately normalized kernel, and $\delta >0$ is a suitable bandwidth; for simplicity, we can take the standard Gaussian kernel $\kappa(s)= \frac{1}{\sqrt{2\pi}}\exp^{-s^2/2}$. Then, for every point $x_i$ one considers the iterations:
\begin{equation}  \label{eqn:MS1}
 x_i(t+1) = x_{i}(t) + \nabla  \log  \hat{\rho} (x_{i}(t))
\end{equation}
starting at time $t=0$ with $x_i(0) = x_i$. Each data point $x_i$ is then mapped to its associated $x_i(T)$ for some user-specified $T$, implicitly defining in this way a map $\Psi$ as described in the introduction. It is important to notice that mean shift is, in the way introduced above, a \emph{monotonic} scheme, i.e. $\hat{\rho}(x_i(t))$ is non-decreasing as a function of $t\in \N$. In \cite{CarreiraPerpMeanShift2} this is shown to be a consequence of a deeper property that in particular relates mean shift with the expectation-maximization algorithm applied to a closely related problem. The name mean-shift originates from the fact that iterate $x_i(t+1)$ in \eqref{eqn:MS1} coincides with the mean of some distribution that is centered around the iterate $x_i(t)$, and thus \eqref{eqn:MS1} can be described as a ``mean shifting" scheme.


 We now introduce the continuum analogue of the mean shift algorithm \eqref{eqn:MS1}. Namely \eqref{eqn:MS1}
can be seen as the iterates of the Euler scheme, for time step $h=1$ of the following ODE system
\begin{equation}
\begin{cases}
\dot{x}_i(t) = \nabla \log \hat{\rho}( x_i(t)), & t >0
\\ x_i(0) = x_i.
\end{cases}
\label{eqn:flowMS}
\end{equation}
In the remainder we will abuse the terminology slightly and refer to the above continuous time dynamics as \textit{mean shift}. We note that the $\hat \rho$ is monotonically increasing along the trajectories of \eqref{eqn:flowMS}. 

To utilize the mean shift dynamics for clustering, for some prespecified time $T>0$ (that at least theoretically can be taken to be infinity under mild conditions on $\rho$), we consider the embedding map:
\[ \Psi_{MS} (x_i) :=  x_i(T), \quad x_i \in \X. \]
 When the number of data points $n$ in $\X$ is large, and the bandwidth $h$ is small enough (but not too small!), one can heuristically expect that the gradient lines of the density estimator $\hat{\rho}$ resemble those of the true density $\rho$; see for example \cite{arias-casttro16meanshift}. In particular, with an appropriate tuning of bandwidth  $\delta$ as a function of $n$, and with a large value of $T$ defining the time horizon for the dynamics, one can expect $\Psi_{MS}$ to send the original data points to a set of points that are close to the local modes of the density $\rho$. In short, mean shift is expected to cluster the original data set by assigning points to the same cluster if they belong to the same basin of attraction of the gradient ascent dynamics for the density $\rho$. 
 \medskip

If the density $\rho$ is supported on a manifold, $\M$, embedded in $\R^d$, and that information is available, one can consider mean shift dynamics restricted to the manifold. 
Indeed to define manifold mean shift we just need to consider the flow ODE \eqref{eqn:flowMS}, where $\nabla$ is replaced by the gradient on $\M$,  which for manifolds in $\R^d$ is just the projection of $\nabla$ to the tangent space $T_x \M$.  We notice that this extends to manifolds with boundary where at boundary point $x \in \partial \M$ one is projecting to the interior half-space $T^{in}_x \M$. 
 We denote the projection to the tangent space (interior tangent space at the boundary) by $P_{T\M}$ and write the resulting ODE as
 \begin{equation}
\begin{cases}
\dot{x}_i(t) = P_{T\M} \nabla \log(\hat{\rho})( x_i(t)), & t >0
\\ x_i(0) = x_i \in \M.
\end{cases}
\label{eqn:flowMSM}
\end{equation}




\subsubsection{Lifting the dynamics to the Wasserstein space}
\label{sec:MSWassSpace}

Looking forward to our discussion in subsequent sections where we introduce mean shift algorithms on graphs, it is convenient to rewrite \eqref{eqn:flowMS} in an alternative way using dynamics in the \emph{Wasserstein} space $\mathcal{P}_2(\R^d)$. As it turns out, the ODE \eqref{eqn:flowMS} is closely related to an ODE in $\mathcal{P}_2(\R^d)$ (i.e. the space of Borel probability measures over $\R^d$ with finite second moments). Precisely, we consider:
\begin{equation}
\partial _t \mu_t+ \divv( \nabla \log(\hat{\rho})\mu_t )=0 , \quad t >0,
	\label{eqn:FokkerPlanckRd}
	\end{equation}
with initial datum $\mu_0=\delta_{x_i}$; equation \eqref{eqn:FokkerPlanckRd} must be interpreted in the weak sense (see Chapter 8.1. in \cite{AGS}). Indeed, when $\mu_0=\delta_{x_i}$, it is straightforward to see that the solution to \eqref{eqn:FokkerPlanckRd} is given by $\mu_t=\delta_{x_i(t)}$ where $x_i(\cdot)$ solves the ODE \eqref{eqn:flowMS} in the base space $\R^d$. What is more, in the same way that \eqref{eqn:flowMS} can be understood as the gradient descent dynamics for $- \log(\hat{\rho})$ in the base space $\R^d$, it is possible to interpret \eqref{eqn:FokkerPlanckRd} directly as the gradient flow of the potential energy:
\begin{equation}
 \EE(\mu):= -\int_{\R^d} \log(\hat{\rho}(x)) d\mu(x), \quad \mu \in \mathcal{P}_2(\R^d)   
 \label{eqn:EnergyGradFlow}
\end{equation}
with respect to the Wasserstein metric $d_W$, which in dynamic form reads:
\begin{equation} \label{eq:BB}
   d_W^2(\nu, \tilde \nu) = \inf _{t \in [0,1] \mapsto (\nu_t, \vec{V}_t) }\int_{0}^{1}\int_{\R^{d}}|\vec{V}_t|^{2}\hspace{1mm}d\nu_{t}dt, 
\end{equation}
where the infimum is taken over all solutions $(\nu_t, \vec{V}_t)$ to the continuity equation
	\begin{equation*}\label{Conteqn}
	\partial_t \nu_t + \text{div}( \nu_t \vec{V}_t) =0,
	\end{equation*}
	with $\nu_0=\nu$ and $\nu_1=\tilde\nu$.

The previous discussion suggests the following alternative definition for the embedding map associated to mean shift:
\[ \Psi_{MS}(x_i):= \mu_{i,T} \in \mathcal{P}_2(\R^d),  \]
where in order to obtain $\mu_{i,T}$ we consider the gradient flow dynamics $\EE$ in the Wasserstein space initialized at the point $\mu_0=\delta_{x_i}$ (i.e. equation \eqref{eqn:FokkerPlanckRd}). While this new interpretation may seem superfluous at first sight given that $\mu_{i,T}=\delta_{x_i(T)}$, we will later see that working in the space of probability measures is convenient, as this alternative representation motivates new versions of mean shift algorithms for data clustering on structures such as weighted graphs; see section \ref{sec:MSAonM}.

\subsection{Spectral methods}
\label{sec:SpecMethods}

Let us now discuss another family of algorithms used in unsupervised learning that are based on ideas from spectral geometry. The input in these algorithms is a collection of edge weights $w$ describing the similarities between data points in $\X$; we let $n = |\X|$. For simplicity, we assume that the weight function $w: \X \times \X \rightarrow \R$ is symmetric and that all its entries are non-negative. We further assume that the weighted graph $\G =(\X , w)$ is connected in the sense that for every $x,x' \in \X$ there exists a path $x_0, \dots, x_m \in \X$ with $x_0=x$, $x_m = x'$ and $w(x_l, x_{l+1})>0$ for every $l=0,\dots, m-1$. At this stage we do not assume any specific geometric structure in the data set $\X$ nor on the weight function $w$ (in section \ref{sec:CotninuumLimits}, however, we focus our discussion on proximity graphs).

Let us now give the definition of well known graph analogues of gradient, divergence, and Laplacian operators. To a function $\phi: \X \rightarrow \R$ we associate a \emph{discrete gradient}, a function of the form $\nabla_\G \phi: \X \times \X \rightarrow \R$ defined by:
\[ \nabla_\G \phi(x,x') := \phi(x') - \phi(x). \]
Given a function $U: \X \times \X \rightarrow \R$ (i.e. a discrete vector field) we define its \emph{discrete divergence} as the function $\divv_\G U: \X \rightarrow \R$ given by
\[ \divv _\G \hspace{1mm} U (x) := \frac{1}{2}\sum_{x'} ( U(x',x)- U(x,x'))w(x,x').  \]
With these definitions we can now introduce the \emph{unnormalized Laplacian} associated to the graph $\G$ as the operator $\Delta_\G: L^2(\X) \rightarrow L^2(\X) $ defined according to:
\[ \Delta_\G:=\divv_\G\circ \nabla_\G,  \]
or more explicitly as
 \begin{equation}\Delta_\G u (x_i) = \sum_{j} ( u(x_i) - u(x_j) ) w(x_i, x_j), \quad x_i \in \X , \quad u \in L^2(\X). 
 \label{eqn:UnnormalizedGraphLapl}
\end{equation} 
From the representation $\Delta_\G = \divv_\G \circ \nabla_\G$ it is straightforward to verify that $\Delta_\G$ is a self-adjoint and positive semi-definite operator with respect to the $L^2(\X)$ inner product (i.e. the Euclidean inner product in $\R^n$ after identifying  real valued functions on $\X$ with $\R^n$). It can also be shown that $\Delta_\G$ has zero as an eigenvalue with multiplicity equal to the number of connected components of $\G$ (in this case $1$ by assumption); see \cite{vonLuxburg2007Tutorial}. Moreover, even when the multiplicity of the zero eigenvalue is uninformative about the group structure of the data set, the low-lying spectrum of $\Delta_\G$ still carries important geometric information for clustering. In particular, $\Delta_\G$'s small eigenvalues and their corresponding eigenvectors contain information on \emph{bottlenecks} in $\G$ and on the corresponding regions that are separated by them; the connection between the spectrum of $\Delta_\G$ and the bottlenecks in $\G$ is expressed more precisely with the relationship between Cheeger constants and Fiedler eigenvalues; see \cite{vonLuxburg2007Tutorial}. With this motivation in mind, \cite{belkin2003laplacian} introduced a nonlinear transformation of the data points known as \emph{Laplacian eigenmap}:
\[ x_i \in \X \longmapsto \left( \begin{matrix} \phi_1(x_i) \\ \vdots \\ \phi_k(x_i) \end{matrix} \right) \in \R^k, \]
where $\phi_1, \dots, \phi_k$ are the eigenvectors corresponding to the first $k$ eigenvalues of $\Delta_\G$. The above Laplacian eigenmap as well as other similar transformations serve as the embedding map in the first step in most spectral methods for partitioning and data clustering. Said clustering algorithms have a rich history, and related ideas have been present in the literature for decades, see 
\cite{JordanNg01, ShiMalik, vonLuxburg2007Tutorial} and references within. For example, some versions of spectral clustering consider a conformal transformation of the Laplacian eigenmap in which coordinates in the embedding space are rescaled differently according to corresponding eigenvalues and the choice of a time-scale parameter. More precisely, one may consider:
\[ \hat{\Psi}_{SC}(x_i) := \left( \begin{matrix} e^{-T \lambda_1} \phi_1(x_i) \\ \vdots \\ e^{-T\lambda_k}\phi_k(x_i) \end{matrix} \right) \in \R^k, \quad x_i \in \X, \]
for some $T>0$, where in the above $\lambda_l$ represents the eigenvalue corresponding to the eigenvector $\phi_l$. In section \ref{sec:MSGraphs} we provide a dynamic interpretation of the map $\hat{\Psi}_{SC}$.

\begin{remark}
There are several other ways in the literature to construct the embedding maps $\Psi$ from graph Laplacian eigenvectors. In \cite{JordanNg01}, for example, an extra normalization step across eigenvectors is considered for each data point. By introducing this extra normalization step one effectively maps the data points into the unit sphere in Euclidean space. The work \cite{Schiebinger} argues in favor of this type of normalization and proposes the use of an angular version of $k$-means clustering on the embedded data set. The work \cite{HosseiniHoffmann} also analyzes the geometric structure of spectral embeddings, both at the data level, as well as at the continuum population level. The normalization step in \cite{JordanNg01} can also be motivated from a robustness to outliers perspective if one insists on running $k$-means with the $\ell^2$ metric and not with for example the $\ell^1$ metric. As discussed in the introduction, constructing a data embedding is only part of the full clustering problem. What metric and what clustering method should be used on the embedded data are important practical and theoretical questions. In subsequent sections our embedded data points will have the form of probability vectors. Several metrics could then be used to cluster the embedded data points ($TV, L^2, W_2$, etc), each with its own advantages and disadvantages (theoretical or computational). The emphasis of our discussion in the rest of the paper, however, will be on the embedding maps themselves. We leave the analysis of the effect of different metrics used to cluster the embedded data for future work. 
\end{remark}

\subsubsection{Normalized versions of the graph Laplacian}
Different versions of graph Laplacians can be constructed to 
include additional information about vertex degrees as well as to normalize the size of eigenvalues. We distinguish between two ways to normalize the graph Laplacian $\Delta_\G$. One is based on reweighing operators and the other on renormalizing edge weights.  
\medskip

\emph{Operator-based renormalizations.} To start, we first write the graph Laplacian $\Delta_\G$ in matrix form. For that purpose, let $W = [w(x_i,x_j)]_{i,j}$ be the matrix of weights, and let 
\begin{equation}
d(x_i) = \sum_{x_j \not = x_i} w(x_i,x_j) 
\label{eqn:degree}
\end{equation}
be the weighted degrees; in the remainder we may also use the notation $d_i = d(x_i)$ whenever no confusion arises from doing so. Let $D = \diag(d_1, \dots, d_n)$ be the diagonal matrix of degrees. The Laplacian $\Delta_\G$ can then be written in matrix form as:
\begin{equation}
   \Delta_\G = D-W.
\end{equation}
In terms of this matrix representation the \emph{normalized} graph Laplacian, as introduced in  \cite{vonLuxburg2007Tutorial}, can be written as 
\begin{equation}
   L = D^{-\frac12}  \Delta_\G  D^{-\frac12} = I -D^{-\frac12} W D^{-\frac12} .
\end{equation}
Notice that the matrix $L$ is symmetric, and positive semidefinite as it follows directly from the same properties for $\Delta_\G$. The \emph{random-walk} graph Laplacian, on the other hand, is given by 
\begin{equation}
   L^{rw} =  D^{-1} \Delta_\G  = I -D^{-1} W . 
\end{equation}

\begin{remark}
\label{rem:EigenvecRWLapl}
It is straightforward to show that the matrix $L^{rw}$ is similar to the matrix $L$ and thus the random walk Laplacian has the same eigenvalues as the normalized graph Laplacian. Moreover, if we explicitly use the representation $L^{rw^T} = D^{1/2} L D^{-1/2}$, where $L^{rw^T}$ is the transpose of $L^{rw}$, we can see that if $\tilde u$ is an eigenvector of $L$ with eigenvalue $\lambda$, then $\phi = D^{1/2}\tilde \phi$ is an eigenvector of $L^{rw^T}$ with eigenvalue $\lambda$. In particular, since $L$ is symmetric, we can find a collection of vectors $\phi_1, \dots, \phi_n \in \R^n$ that form an orthonormal basis (with respect to the inner product $\langle D^{-1} \cdot ,\cdot \rangle $) for $\R^n$ and where each of the $\phi_l$ is an eigenvector for $L^{rw^T}$.

We use the above observation in section \ref{sec:MoreGeneralSpectralEmbeddings} and later at the beginning of section \ref{sec:MSGraphs}. 
\end{remark}


\medskip

\emph{Edge weights renormalizations.} Here the idea is to adjust the weights of the graph $\G=(\X, w)$ and use one of the Laplacian normalizations introduced before on the new graph.  One of the most popular families of graph Laplacian normalizations based on edge reweighing was introduced in \cite{Coifman1} and includes the generators for the so called \emph{diffusion maps} which we now discuss. 

For a given choice of parameter $\alpha \in (-\infty,1]$, we construct new edge weights, $w_\alpha(x,y)$, as follows:
\[ w_\alpha(x,y) := \frac{w(x,y)}{d(x)^\alpha  d(y)^\alpha } , \]
where recall $d$ is the weighted vertex degree. On the new graph $(\X, w_\alpha)$ one can consider all forms of graph Laplacian discussed earlier. In the sequel, however, we follow \cite{Coifman1} and restrict our attention to the reweighed random-walk Laplacian which in matrix form can be written as
\[ L_\alpha^{rw} = I - D_\alpha^{-1} W_\alpha , \]
where $W_\alpha$ is the matrix of edge weights of the new graph and $D_\alpha$ its associated degree matrix, i.e. $d_\alpha(x) = \sum_{y \not = x} w_\alpha(x,y)$.
We note that the weight matrix $W_\alpha$ is still symmetric.  

Let $Q^{rw}_\alpha$ be the \textit{weighted diffusion rate} matrix
\[ Q^{rw}_\alpha := - C_\alpha L^{rw}_\alpha \]
where $C_\alpha$ is a positive constant that we introduce for modeling purposes. In particular, in section \ref{sec:CotninuumLimits} we will see that, in the context of proximity graphs on data sampled from a distribution on a manifold $\M$, by choosing the constant $C_\alpha$ appropriately we can ensure a desirable behavior of $Q^{rw}_\alpha$ as the number of data points in $\X$ grows. Notice that we may alternatively write $Q^{rw}_\alpha$ as a function:
\begin{align}
\label{Qrw} 
Q^{rw}_\alpha(x,y) &:= {C_\alpha} \; \begin{cases} \frac{w_\alpha(x,y)}{\sum_{z \not =x } w_\alpha(x,z)} &\text{ if } x \neq y , \\
 - 1 &\text{ if } x =y. \end{cases} 
\end{align}

\begin{remark}
In this paper we consider the range $\alpha \in (-\infty,1]$ and not $[0,1]$ as usually done in the literature. One important point that we stress throughout the paper is that by considering the interval $(-\infty, 1]$ we obtain an actual interpolation between density-based (in the form of some version of  mean shift) and geometry-driven clustering algorithms. 
\end{remark}

\subsubsection{More general spectral embeddings.} 
\label{sec:MoreGeneralSpectralEmbeddings}

Following Remark \ref{rem:EigenvecRWLapl}, for a given $\alpha \in (-\infty,1]$ we consider an orthonormal basis (relative to the inner product $\langle D_\alpha^{-1} \cdot , \cdot \rangle$)   $\phi_1, \dots, \phi_n$ of eigenvectors of $L_\alpha^{rw^T}$ with corresponding eigenvalues $\lambda_1 \leq \lambda_2 \leq \dots  \leq \lambda_n$, and define
\begin{equation}
\label{eqn:SpectralEmbeddingDiffusionMaps}
\hat{\Psi}'_{\alpha}(x_i):=  \left( \begin{matrix} e^{-T \lambda_1} \tilde \phi_1(x_i)  \\ \vdots \\ e^{-T\lambda_k} \tilde \phi_k(x_i)  \end{matrix} \right) \in \R^k, \quad x_i \in \X,
\end{equation}
where in the above, $\tilde{\phi}_l = D_\alpha^{-1/2}\phi_l$. We can see that the map $\hat{\Psi}_\alpha'$ has the same form as the map $\hat{\Psi}_{SC}$ at the beginning of section \ref{sec:SpecMethods}. In section \ref{sec:MSGraphs} we provide a more dynamic interpretation of the map $\hat{\Psi}'_\alpha$.





\subsection{Outline}

Having discussed the mean shift algorithm in Euclidean setting (or in general on a submanifold $\M$ of $\R^d$), as well as some spectral methods for clustering in the graph setting, in what follows we attempt to build bridges between geometry based and density driven clustering algorithms. Our first step is to introduce general data embedding maps $\Psi$ associated to the dynamics induced by arbitrary rate matrices on $\X$. We will then define data graph analogues of mean shift dynamics. We do this in section \ref{sec:MSGraphs} where we define a new version of mean shift on graphs inspired by the discussion in section \ref{sec:MSWassSpace}, and review other versions of mean shift on graphs such as Quickshift \cite{vedaldi2008quick} and KNF \cite{koontz1976graph}. In section \ref{sec:FokkerPlanckGraphsInt} we discuss two versions of Fokker-Planck equations on graphs which serve as interpolating dynamics between geometry and density driven dynamics for clustering on data graphs. One version is based on a direct interpolation between diffusion and mean shift (the latter one as defined in section \ref{sec:MSGraphsWass}) and is inspired by Fokker-Planck equations at the continuum level. The second version is an extended version of the diffusion maps from \cite{Coifman1} obtained by appropriate reweighing and normalization of the data graph. In section \ref{sec:DiffMpasKNF} we show that the KNF mean shift dynamics can be seen as a particular case of the family of diffusion maps when the parameter indexing this family is sent to negative infinity. This result is our first concrete connection between mean shift algorithms and spectral methods for clustering.  In section \ref{sec:CotninuumLimits} we study the continuum limits of the Fokker-Planck equations introduced in section \ref{sec:FokkerPlanckGraphs} when the graph of interest is a proximity graph. This analysis will allow us to provide further insights into diffusion maps, mean shift, and spectral clustering. In section \ref{sec:Numerics} we present a series of numerical experiments aimed at illustrating some of our theoretical insights.



\section{Mean shift and Fokker-Planck dynamics  on graphs}
\label{sec:MSGraphs}

Consider a weighted graph $\G=(\X, w)$ as in section \eqref{sec:SpecMethods}. Let $\P(\mathcal{X})$ denote the set of probability measures on $\X$ which we identify with $n$-dimensional vectors. All of the dynamics we consider can be written as (continuous time) Markov chains on graphs. 
 
 \begin{definition}
 \label{def:MarkovChain}
 A \emph{continuous time Markov chain} $u: [0,T] \to \P(\mathcal{X})$ is a solution to the ordinary differential equation
 \begin{align} \label{Markovchain}
     \begin{cases}
     \partial_t {u}_t(y) = \sum_{x \in \mathcal{X}} u_t(x) Q(x,y) , \quad t >0 \\
     u_0 = u^0 
     \end{cases}
 \end{align}
 where $u^0 \in \P(\mathcal{X})$ and $Q: \mathcal{X} \times \mathcal{X} \to \mathbb{R}$ is a  \emph{transition rate matrix} $Q$; that is, $Q$ satisfies  $Q(x,y) \geq 0$ for $y \neq x$ and $Q(x,x) = -\sum_{y \neq x} Q(x,y)$.
 
Notice that in terms of matrix exponentials, the solution to \eqref{Markovchain} can be written as
 \begin{align*}
u_t(y) = \sum_{x \in \mathcal{X}} u^0(x) {(e^{t Q})(x,y)}.
 \end{align*}
 \end{definition}

 \begin{remark}
 \label{rem:RowVectors}
 The operation on the right hand side of the first equation in \eqref{Markovchain} can be interpreted as a matrix multiplication of the form $u_t Q$, where $u_t$ is interpreted as a row vector. Alternatively, we can use the transpose of $Q$ and write $Q^T u_t$ if we interpret $u_t$ as a column vector. 
 \end{remark}
 
 \begin{remark}[Conservation of mass and positivity]
 For any transition rate matrix, we have 
 \[\sum_{x \in \mathcal{X}} Q(x,y) =0, \quad \forall y \in \X,\]
 which ensures that $\sum_x u_t(x) = \sum_x u_0(x) = 1$ for all $t \geq 0$.
 Note that, in practice, this is often accomplished by specifying the \emph{off-diagonal} entries of the transition rate matrix, $Q(x,y)$ for $ x \neq y$, and then setting the diagonal entries to equal opposite the associated diagonal degree matrix, $d(x,x) = \sum_{y\neq x} Q(x,y) $. Likewise, for any transition rate matrix, the fact that $Q(x,y) \geq 0$ for $y \neq x$ ensures that if $u_0(x) \geq 0,$ then $u_t(x) \geq 0$ for all $t \geq 0$.
 \label{rem:ConservationMass}
 \end{remark}
 
 \begin{remark}
 Notice that $Q= - \Delta_\G$, where we recall $\Delta_\G$ is defined in \eqref{eqn:UnnormalizedGraphLapl}, is indeed a rate matrix and thus induces evolution equations in $\mathcal{P}(\X)$. Likewise, the weighted diffusion rate matrix $Q^{rw}_\alpha$ from \eqref{Qrw} is a rate matrix as introduced in Definition \ref{def:MarkovChain}. 
 \end{remark}

 
 

For a general rate matrix $Q$, we define the data embedding map:
\begin{equation}
   \hat{\Psi}_Q (x_i) := u_{i,T,Q}\in \mathcal{P}(X), \quad x_i \in \X,
   \label{eqn:QEmbedding}
\end{equation}
where $u_{i,T,Q}$ represents the solution of \eqref{Markovchain} when the initial condition $u^0 \in \mathcal{P}(\X)$ is defined by $u^0(x)=1$ for $x=x_i$ and $u^{0}(x)=0$ otherwise. In the sequel, and specifically in our numerics section, we will use the $L^2(\X)$ metric between the points $\hat{\Psi}_{Q}(x_i)$ in order to build clusters through $K$-means regardless of the rate matrix $Q$ that we use to construct the embedding $\hat{\Psi}_Q$.
Remember that by $L^2(\X)$ distance we mean the quantity:
\[ \sum_{x'} ( u_{i,T,Q}(x') - u_{j,T,Q}(x')  )^2.  \]

In the next subsections we discuss some specifics of the choice $Q= Q^{rw}_\alpha$ and then introduce two classes of rate matrices $Q$ that give meaning to the idea of mean shift on graphs.

\subsection{Dynamic interpretation of spectral embeddings}
\label{sec:EmbeddingRWLap}
When we take $Q=Q^{rw}_\alpha$ we abuse notation slightly and write $\hat{\Psi}_{\alpha}$ instead of $\hat{\Psi}_{Q_\alpha^{rw}}$ and $u_{i,T,\alpha}$ instead of $u_{i,T, Q^{rw}_\alpha}$. In the next proposition we make the connection between the embedding map $\hat{\Psi}_{\alpha}$ and the spectral embedding $\hat{\Psi}_\alpha'$ from \eqref{eqn:SpectralEmbeddingDiffusionMaps} explicit. 

\begin{proposition}
For every $\alpha \in (-\infty,1]$ we can write 
\[ u_{i,T,\alpha}(x)= \sum_{l=1}^n  e^{-T\lambda_l}\frac{\phi_l(x_i)}{(d_\alpha(x_i))^{1/2}} \phi_l(x), \quad \forall x \in \X, \]
where the $\phi_1, \dots, \phi_{n}$ form an orthonormal basis for $\R^n$ (with respect to the inner product $\langle D^{-1}_\alpha \cdot , \cdot \rangle$) and each $\phi_l$ is an eigenvector of $L^{rw^T}_\alpha $ with eigenvalue $\lambda_l$. In other words, the coordinates of the vector $\hat{\Psi}_\alpha'(x_i)$ correspond to the representation of $\hat{\Psi}_\alpha(x_i)$ in the basis $\phi_1, \dots, \phi_n$.
\end{proposition}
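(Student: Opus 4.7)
The plan is a direct spectral decomposition of the Markov semigroup generated by $Q^{rw}_\alpha$. Using the column-vector convention of Remark~\ref{rem:RowVectors}, the ODE \eqref{Markovchain} with $Q = Q^{rw}_\alpha = -C_\alpha L^{rw}_\alpha$ and initial condition $u^0 = \mathbf{1}_{\{x_i\}}$ has solution
\beqns
u_{i,T,\alpha} \;=\; \exp\!\bigl(-T\, C_\alpha\, L^{rw^T}_\alpha\bigr)\, \mathbf{1}_{\{x_i\}},
\eeqns
so matters reduce to expanding the delta $\mathbf{1}_{\{x_i\}}$ in an eigenbasis of $L^{rw^T}_\alpha$ and applying the semigroup term by term; the constant $C_\alpha$ is implicitly absorbed into $T$ for the form given in the statement.

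Next, I would invoke Remark~\ref{rem:EigenvecRWLapl} to secure the eigenbasis $\{\phi_l\}$ of $L^{rw^T}_\alpha$, the crucial point being that it is orthonormal with respect to the weighted inner product $\langle \cdot, \cdot\rangle_\ast := \langle D_\alpha^{-1}\cdot,\cdot\rangle$. This is the correct inner product because the similarity $L^{rw^T}_\alpha = D_\alpha^{1/2} L_\alpha D_\alpha^{-1/2}$ (with $L_\alpha$ the symmetric normalized Laplacian of the reweighted graph) makes $L^{rw^T}_\alpha$ self-adjoint for $\langle\cdot,\cdot\rangle_\ast$, even though it is not self-adjoint for the standard Euclidean inner product. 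The coefficients of $\mathbf{1}_{\{x_i\}}$ in this basis are therefore $\langle \mathbf{1}_{\{x_i\}}, \phi_l\rangle_\ast$, which evaluates to $\phi_l(x_i)$ divided by an appropriate power of $d_\alpha(x_i)$; combined with the eigenvalue identity $\exp(-T L^{rw^T}_\alpha)\phi_l = e^{-T\lambda_l}\phi_l$, this yields the claimed sum representation of $u_{i,T,\alpha}(x)$.

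The concluding assertion---that the coordinates of $\hat\Psi'_\alpha(x_i)$ from \eqref{eqn:SpectralEmbeddingDiffusionMaps} are precisely the expansion coefficients of $\hat\Psi_\alpha(x_i)$ in the $\{\phi_l\}$ basis---is then immediate once one substitutes $\tilde\phi_l = D_\alpha^{-1/2}\phi_l$ into the sum and reads off the $\phi_l$-coefficients. No deep difficulty arises; the main thing to keep straight is the bookkeeping (row vs.\ column conventions from Remark~\ref{rem:RowVectors}, the choice of the weighted inner product used to compute expansion coefficients, and the transpose relating $L^{rw}_\alpha$ to $L^{rw^T}_\alpha$ on its eigenvectors).
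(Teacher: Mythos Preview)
Your approach is correct and coincides with the paper's own proof, which is the single sentence ``This follows from a simple application of the spectral theorem using Remark~\ref{rem:EigenvecRWLapl}.'' You have supplied the details that the paper omits---the matrix-exponential representation, the expansion of $\mathbf{1}_{\{x_i\}}$ in the $\langle D_\alpha^{-1}\cdot,\cdot\rangle$-orthonormal eigenbasis, and the identification with $\hat\Psi'_\alpha$---but the underlying mechanism is identical.
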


\begin{proof}
This follows from a simple application of the spectral theorem using Remark \ref{rem:EigenvecRWLapl}.
\end{proof}

\begin{remark}
An alternative way to compare the points $\hat{\Psi}_{\alpha}(x_i)$ and $\hat{\Psi}_{\alpha}(x_j)$ is to compute their weighted distance:
\[ \sum_{x'} \left( \frac{u_{i,T,Q}(x')}{\sqrt{d_\alpha(x')}} - \frac{u_{j,T,Q}(x')}{\sqrt{d_\alpha(x')}}  \right)^2.  \]
This construction is introduced in \cite{Coifman1} and is referred to as \textit{diffusion distance}. It is worth mentioning that, as pointed out in \cite{Coifman1}, the diffusion distance between $\hat{\Psi}_\alpha(x_i)$ and $\hat{\Psi}_{\alpha}(x_j)$ coincides with the Euclidean distance between $\hat{\Psi}_{\alpha}'(x_i)$ and $\hat{\Psi}'_\alpha(x_j)$ when $k=n$. Notice that the diffusion metric is conformal to the $L^2(\X)$ metric. 
\end{remark}

\begin{remark}
We remark that the embedding maps $\hat{\Psi}_{SC}$ and $\hat{\Psi}_{Q}$ for $Q=-\Delta_\G$ are connected in a similar way as the maps $\hat{\Psi}_{\alpha}'$ and $\hat{\Psi}_\alpha$ are. Indeed, if we let $\phi_1, \dots,\phi_n $ be an orthonormal basis for $L^2(\X)$ consisting of eigenvectors of $\Delta_\G$ (remember that $\Delta_\G$ is positive semi-definite with respect to $L^2(\X)$), then the coordinates of $\hat{\Psi}_{SC}(x_i)$ are precisely the coordinates of the representation of $u_{i,T,Q}$ in the basis $\phi_1, \dots, \phi_n$.

\end{remark}

\subsection{The mean shift algorithm on graphs}
\label{sec:MSAonM}

In the next two subsections we discuss two different ways to introduce mean shift on $\G= (\X, w)$. In both cases we define an associated rate matrix $Q$.

\subsubsection{Mean shift on graphs as inspired by Wasserstein gradient flows}
\label{sec:MSGraphsWass}

The discussion in section \ref{sec:MSWassSpace}
shows that the mean shift dynamics can be viewed as a gradient flow in the spaces of probability measures endowed with Wasserstein metric. Recent works \cite{esposito2019nonlocal,Maas11gradient} provide a way to consider Wasserstein type gradient flows which are restricted to graphs. This allows one to take advantage of the information about the geometry of data that their initial distribution provides. More importantly for our considerations it allows one to combine the mean shift and spectral methods.

 The notion of Wasserstein metric on graphs introduced by Maas \cite{Maas11gradient} provides the desired framework. 
Here we will consider the upwind variant of the Wasserstein geometry on graphs introduced in
\cite{esposito2019nonlocal} since it avoids the problems that the metric of \cite{Maas11gradient} has when dealing with the continuity equations on graphs, see Remark 1.2 in \cite{esposito2019nonlocal}.

In particular we actually  consider a \textit{quasi-metric} on $\mathcal{P}(\X)$ (and not a metric) defined by: 
\[ \hat{d}_W^2(v, \tilde v) := \inf _{t \in [0,1] \mapsto (v_t, V_t) } \int_{0}^{1}\sum_{x,y}|V_t(x,y)_+|^{2}\hspace{1mm}\overline{v}_t(x,y)w(x,y)  dt, \]
where the infimum is taken over all solutions $(v_t, V_t)$ to the discrete continuity equation:
	\begin{equation*}\label{Conteqn}
	\partial_t v_t + \text{div}_\G( \overline{v}_t \cdot V_t) =0,
	\end{equation*}
	with $v_0=v$ and $v_1=\tilde v$ and where $V_t$ is anti-symmetric for all $t$. In the above, the constant $C_{ms}>0$ is introduced for modeling purposes and will become relevant in section \ref{sec:CotninuumLimits}. We use the upwinding interpolation \cite{chow2018entropy,esposito2019nonlocal}:
\begin{align}
\overline{v}_t(x,y) := \begin{cases} v_t(x) & \text{ if } V_t(x,y) \geq 0 , \\
v_t(y) &\text{ if } V_t(x,x) < 0, \end{cases}
\end{align}
and interpret the discrete vector field $\overline{v}_t \cdot V_t$ as the elementwise product of $\overline{v}_t$ and $V_t$. Finally, we use $a_+$ to denote the positive part of the number $a$.

Next, we consider the general  potential energy:
\begin{equation*}
 \EEE(u):= -\sum_{x \in \X} B(x) u(x), \quad u \in \mathcal{P}(\X),
\end{equation*}
for some $B : \X \rightarrow \R$. This energy serves as an analogue of \eqref{eqn:EnergyGradFlow}.

Following the analysis and geometric interpretation in \cite{esposito2019nonlocal}, it is possible to show that the gradient flow of $\EEE$ with respect to the quasi-metric $\hat{d}_W$ takes the form:
\begin{align} \label{Bgradascent}
\partial _t {u}_t(y) = \sum_{x \in \mathcal{X}} u_t(x) Q^{B}(x,y),
\end{align}
where $Q^{B}$ is the rate matrix defined by
\begin{equation} \label{eq:QB-def}
    Q^{B}(x,y) :=  \begin{cases} (B(y)-B(x))_+ w(x,y) , & \text{ for } x \neq y, \\
-\sum_{z \neq x} (B(z)-B(x))_+ w(x,z) , & \text{ for } x =y. \end{cases}
\end{equation}





In section \ref{sec:ContLimitMS} we explore the connection between the graph mean shift dynamics \eqref{Bgradascent} and the mean shift dynamics on an $m$-dimensional submanifold of $\R^d$ as introduced in section \ref{eqn:FokkerPlanckRd}. This is done in the context of proximity graphs over a data set $\X= \{ x_1, \dots, x_n\}$  obtained by sampling a distribution with density $\rho$ on $\M$. In particular we formally  show that the graph mean shift dynamics converges to the continuum one if
\begin{align*} \label{KDEeqn}
  {B}(x) = - \frac{C_{ms}}{  \rho(x)}, \quad x \in \M.
  \end{align*}
In practice, however, since $\rho$ is in general unavailable, $\rho$ above can be replaced with a  density estimator $\hat \rho$.
Given such an estimator $\hat \rho$ (which in principle can be considered on general graphs, not just ones embedded in $\R^d$) we define the transition kernel for the \emph{graph mean shift}  dynamics as
\begin{equation} \label{eq:Qms-def}
    Q^{ms}(x,y) :=  C_{ms} \begin{cases} \left(-\frac{1}{\hat \rho(y)} + \frac{1}{\hat \rho(x)} \right)_+ w(x,y) , & \text{ for } x \neq y, \\
-\sum_{z \neq x} \left(-\frac{1}{\hat \rho(z)}+\frac{1}{\hat \rho(x)} \right)_+ w(x,z) , & \text{ for } x =y \end{cases}
\end{equation}
where the constant $C_{ms}>0$ just sets the time scale. It will be specified in Section \ref{sec:ContLimitMS} so that the equation has the desired limit as $n \to \infty$. 
For data in $\R^d$ it is natural to consider a kernel density estimator $\hat \rho$. In particular in all of our experiments  we consider the following kernel density estimator:
 \begin{equation} \label{KDEkernel} 
  \hat{\rho}_\delta(x):=  \frac{1}{n} \sum_{y \in \mathcal{X}} \psi_\delta(x-y)  , \quad \psi_\delta(x) = \frac{1}{(2 \pi)^{m/2} \delta^m} e^{-|x|^2/(2 \delta^2)}.
\end{equation} 
 For an abstract graph $\mathcal{G}$ (i.e. $\X$ is not necessarily a subset of Euclidean space) one can consider the degree of the graph at each $x \in \X$ as a substitute for $\hat{\rho}$ in the above expressions.

From the previous discussion and in direct analogy with the discussion in section \ref{sec:MSWassSpace} we introduce the data embedding map:
\begin{equation}
\hat{\Psi}_{ms}(x_i):= u_{i,T, Q^{ms}} \in \mathcal{P}(\X).
\end{equation}

\begin{remark}
The quasi-metric $\hat{d}_W$ and the geometry of the PDEs on graphs that it induces have been studied in \cite{esposito2019nonlocal}. One important point made in that paper (see Remark 1.2. in \cite{esposito2019nonlocal}) is that the support of the solution to the induced equations may change and move as time increases. For us, this property is essential as we initialize our graph mean shift dynamics at Diracs located at each of the data points.
\end{remark}


We now provide a couple of illustrations comparing the mean shift dynamics \eqref{eqn:flowMS} and the graph mean shift dynamics defined by \eqref{eq:Qms-def}. 
We consider data on manifold $\M = [0,4] \times  \{0, 0.7\}$. The measure $\rho$ has uniform density on the two line segments. We consider 280  data points sampled from $\rho$, Figure 
\ref{fig:inic} and sampled from $\rho$ with Gaussian noise of variance $0.1$ in vertical direction, Figure  \ref{fig:noisy}(a).
 
 We compare the dynamics on $\M$ for different bandwidths $\delta$ of the kernel density estimator. In particular we consider a value of $\delta$ that is small enough for the strips to be seen as separate and a value of delta that is large enough for the strips to be considered together,  Figure \ref{fig:clear}. For large $\delta$ we see  rather different behavior of the two dynamics. The standard mean shift quickly mixes the data from the two lines and the information about the two clusters is lost. On the other hand, while the driving force is  the same, in the graph mean shift the dynamics is restricted to the data, thus preventing the mixing. In particular, separate modes are identified in each clump. 
 
 We note that this desirable behavior is somewhat fragile when noise is present, \ref{fig:noisy}(b). In particular, the roughness of the boundary prevents the mass to reach the mode. We will discuss later that this is mitigated by adding a bit of diffusion to the dynamics, see Section \ref{sec:BlueSky}.

\begin{figure}
\centering
\includegraphics[width=0.5\textwidth]
{}
\caption{Initial data for the  experiments below. There are 280 points sampled from a uniform distribution on two line segments. }
\label{fig:inic} 
\end{figure}

\begin{figure}[h]
\centering
\hspace*{-20pt}
\subfloat[Mean shift at intermediate time ]{\includegraphics[width=0.38\textwidth]{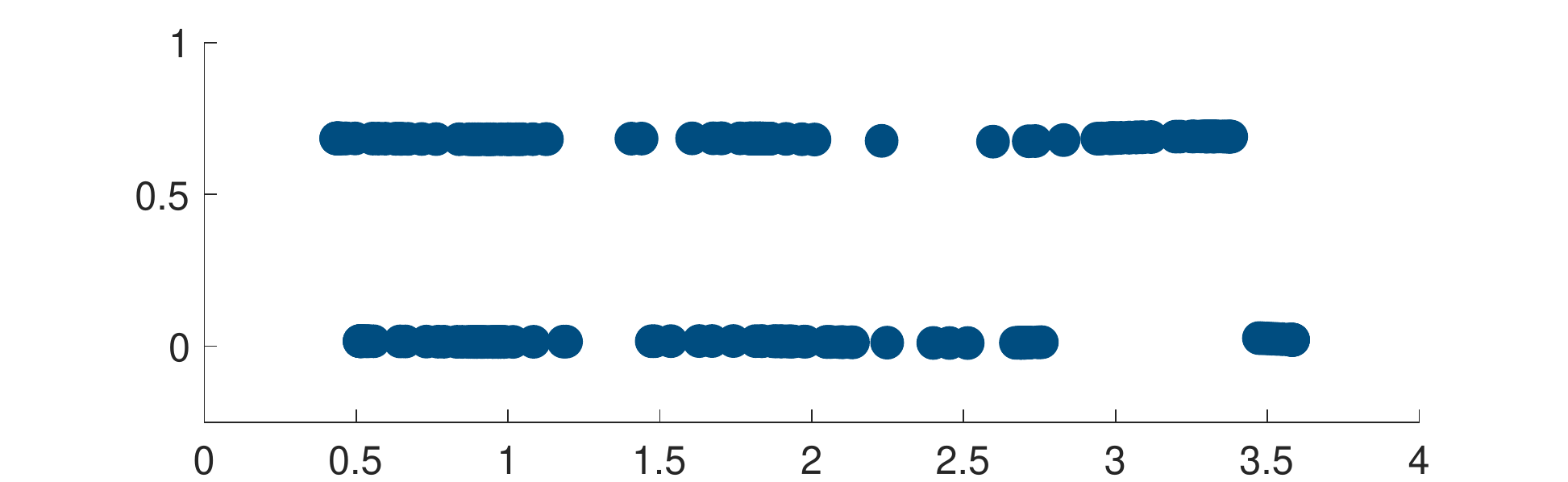}}
\hspace*{-20pt}
\subfloat[Mean shift at long time]{\includegraphics[width=0.38\textwidth]{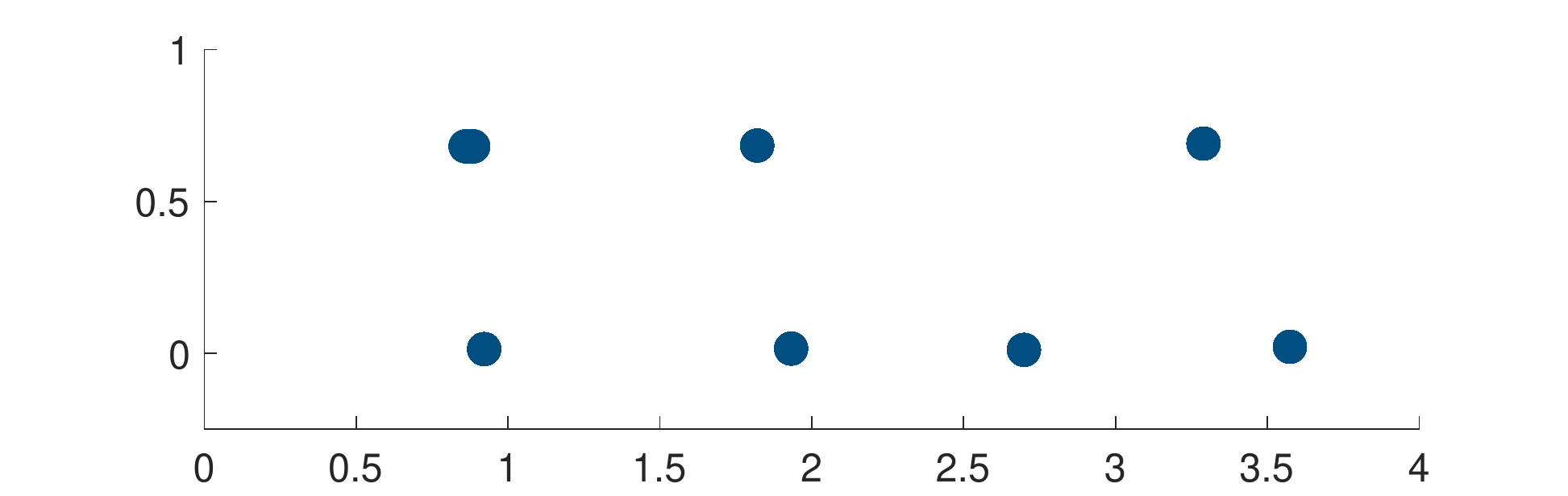}}
\hspace*{-20pt}
\subfloat[Graph mean shift at long time. Brightness indicates mass.]{\includegraphics[width=0.38\textwidth]
{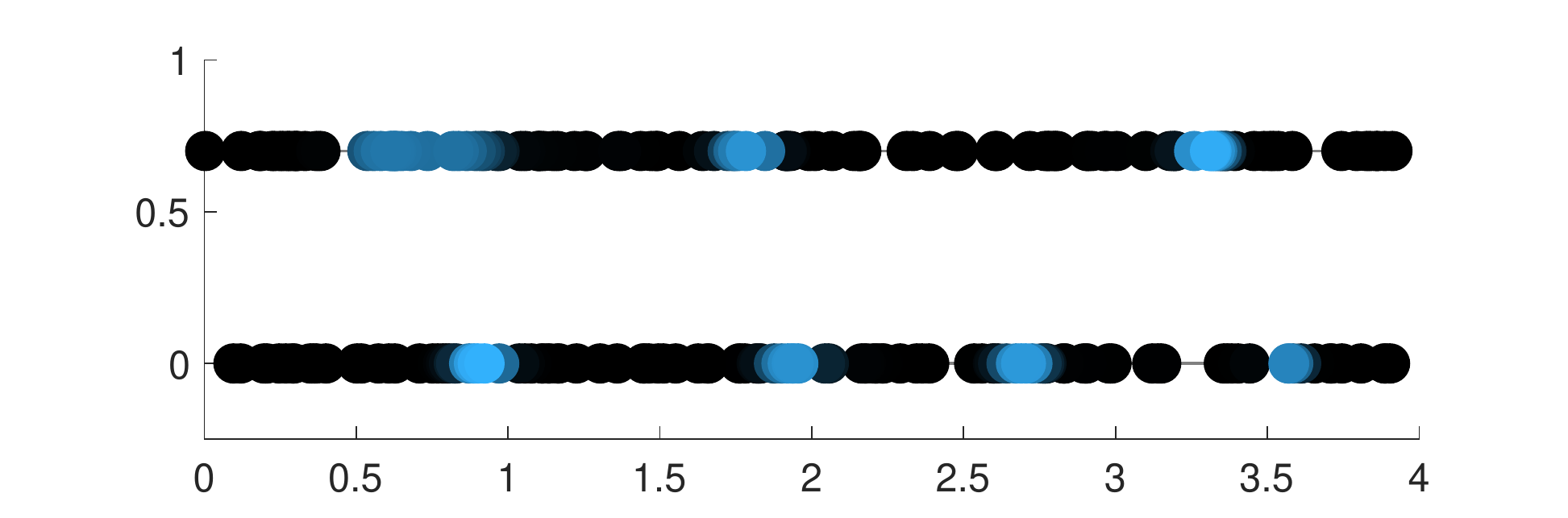}} \\
\hspace*{-20pt}
\subfloat[Mean shift at $t=0$. Arrows represent velocity]{\includegraphics[width=0.38\textwidth]
{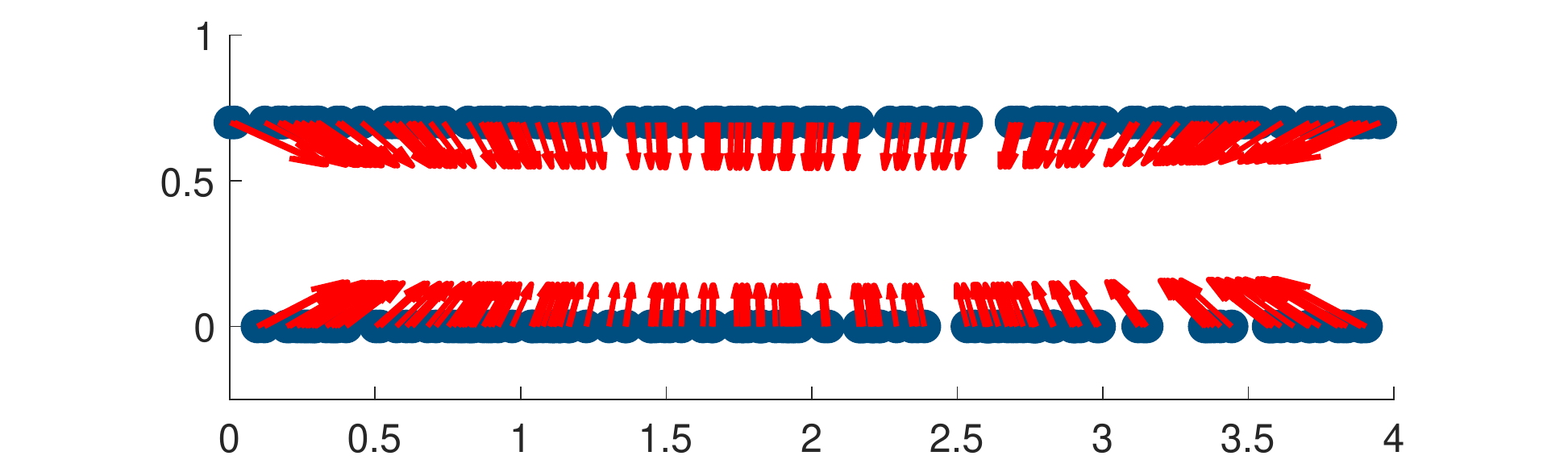}} \hspace*{-20pt}
\subfloat[Mean shift at intermediate time]{\includegraphics[width=0.38\textwidth]
{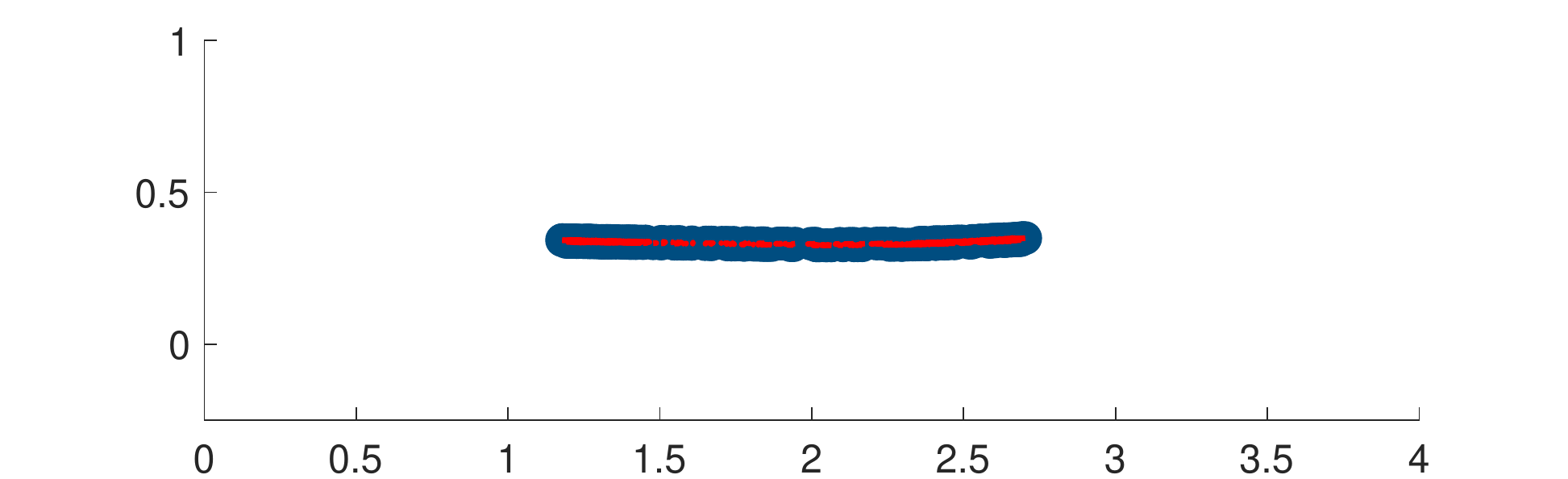}} \hspace*{-20pt}
\subfloat[Graph mean shift at long time]{\includegraphics[width=0.38\textwidth]
{}} \\
\caption{We compare the dynamics for the mean shift \eqref{eqn:flowMS} and graph mean shift \eqref{Bgradascent}-\eqref{eq:Qms-def}. The top row shows the dynamics for $\delta= \frac{1}{4 }$ bandwidth of the KDE. Both approaches give similar results. The stripes evolve independently and there are spurious local maxima due to randomness. The bottom row shows the dynamics for a larger $\delta= \frac{1}{\sqrt{2} }$. The KDE has a unique maximum. Mean shift quickly mixes the stripes into one, which then collapses to a point. On the other hand, since graph mean shift dynamics is constrained to the sample points the stripes do not mix and a single mode is identified in each stripe.}
\label{fig:clear} 
\end{figure}

\begin{figure}[h]
\centering
\hspace*{-18pt}
\subfloat[$t=0$]{\includegraphics[width=0.55\textwidth]
{}} \hspace*{-20pt}
\subfloat[$t= \infty$]{\includegraphics[width=0.55\textwidth]
{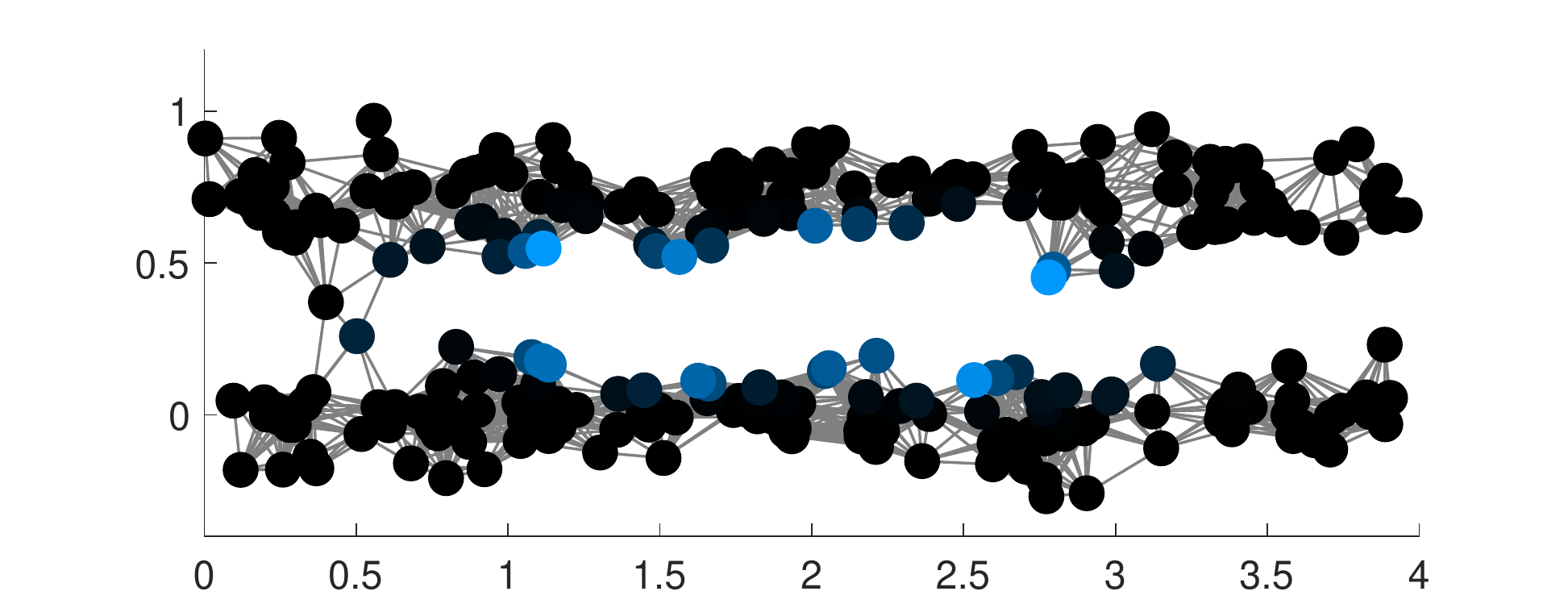}}  \\
\caption{Graph mean shift for  $\delta= \frac{1}{\sqrt{2} }$. If noise is added to the data above, most of the dynamics behave as before. The exception is shown. The graph mean shift does not reach the modes as on Figure \ref{fig:clear}(f). Namely due to geometric roughness of the data the dynamics gets trapped at blue points.}
\label{fig:noisy} 
\end{figure}

\subsubsection{Quickshift and KNF}
\label{sec:Quickshift}

There are some alternative definitions of mean shift on graphs that are popular in the literature. One such algorithm is Quickshift \cite{vedaldi2008quick}, which is similar to an earlier algorithm by Koontz, Narendra, and Fukunaga \cite{koontz1976graph}. Both algorithms can be described as hill-climbing iterative algorithms for the maximization of a potential function $\hat{B}$.

Let $\hat{B}: \X \rightarrow \R$ be the potential for which we want to define ``gradient ascent dynamics" along the graph $(\X, w)$. Let $\hat{D}(x,y) \geq 0$ be a notion of ``distance" between points $x$ and $y$ which is typically defined through the weights $w$. Both the Quickshift and KNF algorithms have a Markov Chain interpretation that we describe in a general form that allows for the (unlikely) existence of non-unique maximizers of $\hat{B}: \X \rightarrow \R$ around a given node $x \in \X$. To describe the associated rate matrices let us define for every $x \in \X$ the sets 
\[M_{QS,x}:= \left\{ y \in \X \::\: y \textrm{ maximizes: } \frac{1}{\hat D(x,y)}  \1_{\hat B(y)>\hat B(x) } \right\}\] 
and 
\[M_{KNF,x}:= \left\{ y \in \X \::\: y \textrm{ maximizes: } \frac{(\hat B(y)-\hat B(x))_+}{\hat D(x,y)}  \1_{\hat D(x,y)<r } \right\}.\]
The Quickshift and KNF algorithms are then the paths in the Markov chains with rate matrices:
\begin{align} \label{QQS}
 Q_{QS}(x,y) &=  \begin{cases} \frac{1}{\sharp M_{QS,x}}  &\text{ if } y \in M_{QS,x}, \\  
 - 1 & \text{ if } y=x, \\
 0 &\text{ otherwise, } \end{cases}  \\
  Q_{KNF}(x,y) &=  
  \begin{cases} \frac{1}{\sharp M_{KNF,x}}  &\text{ if } y \in M_{KNF,x}, \\  
 - 1 & \text{ if } y=x, \\
 0 &\text{ otherwise, } \end{cases}  
  \label{QKNF}
\end{align}
respectively. In section \ref{sec:DiffMpasKNF} we establish a connection between the family of rate matrices $Q_{\alpha}^{rw}$ and $Q_{KNF}$.

\section{Fokker-Planck equations on graphs}
\label{sec:FokkerPlanckGraphs}

\subsection{Fokker-Planck equations on graphs via interpolation}
\label{sec:FokkerPlanckGraphsInt}

The first type of interpolation between density and geometry driven clustering algorithms that we discuss in this paper is based on a direct interpolation of the rate matrices $Q^{ms}$ and $Q^{rw}_1$. Namely, for $\beta\in [0,1]$ we consider: 
\begin{align} \label{eq:upwindFPgraph}
     Q_{\beta} := \beta Q^{ms} + (1- \beta) Q^{rw}_1.
\end{align}
It is straightforward to see that the resulting $Q_{\beta}$ continues to be a rate matrix and as such it induces dynamics in the space $\mathcal{P}(\X)$. We can then use the framework from section \ref{sec:MSGraphs} and abuse notation slightly to write $\hat{\Psi}^\beta$ instead of $\hat{\Psi}_{Q_\beta}$ as well as $u_{i,T,\beta}$ instead of $u_{i,T, Q_\beta}$. 

The choice of rate matrix $Q_{\beta}$ is motivated by the Fokker-Planck equation:
\[ \partial_t f_t = \beta \divv( \nabla \phi f_t ) + (1-\beta) \Delta f_t  \]
on a submanifold $\M$ of $\R^d$, which in the context of section \ref{sec:CotninuumLimits} can be proved to be a formal continuum limit of the evolution induced by $Q_\beta$ as the number of data points grows. On the other hand, we notice that when we take $\beta=1$ in $Q_\beta$ we recover the mean shift dynamics from section \ref{sec:MSGraphsWass}. If on the contrary we set $\beta=0$, we obtain the dynamics induced by the rate matrix $Q^{rw}_1$, which, at least in the context of section \ref{sec:CotninuumLimits}, can be shown to be connected in the large sample limit to the heat equation on a manifold $\M$ where the data density plays no role. 




\subsection{Fokker-Planck equation on graphs via reweighing and connections to graph mean shift}
\label{sec:DiffMpasKNF}

Another interpolation between density-driven and geometry-based clustering dynamics is induced by the family of rate matrices $\{ Q_{\alpha}^{rw} \}_{\alpha \in (-\infty, 1]}$. Indeed, in section \ref{sec:ContLimitsFP} we prove that in the proximity graph setting, the discrete dynamics associated to the rate matrices $Q_\alpha^{rw}$ are closely related, in the large data limit, to the same family of Fokker-Planck equations at the continuum level mentioned in section \ref{sec:FokkerPlanckGraphsInt}. What is more, without taking a large sample limit, we see that the family $\{ Q_{\alpha}^{rw} \}_{\alpha \in (-\infty, 1]}$ interpolates between $Q^{rw}_1$ and a rate matrix inducing graph mean shift dynamics, only that this time the version of mean shift that is meaningful is a particular case of the KNF formulation from section \ref{sec:Quickshift}. We prove this in the next proposition.


\begin{proposition}
Let $(\X , w)$ be an arbitrary weighted graph satisfying the conditions at the beginning of section \ref{sec:SpecMethods}. Set $C_\alpha=1$ for every $\alpha \in (-\infty,1]$. Then,
\begin{equation}
    \lim_{\alpha \to -\infty } Q^{rw}_\alpha = 
    Q^{rw}_{-\infty}
\end{equation}
where 
\begin{equation}
   Q^{rw}_{-\infty}(x,y) :=  -\1_{y=x} + 
  \begin{cases} \frac{w(x,y)}{\sum_{z \in M_{KNF,x} } w(x,z)} &\text{ if } y \in M_{KNF,x} \\  
 0 &\text{ otherwise, } \end{cases}
 \end{equation}
 where in the definition of $M_{KNF,x}$ we are using $\hat{B}(z) = d(z)$, $\hat{D}(x,y)= 1$ if $w(x,y)>0$ and $\hat{D}(x,y) =\infty$ if $w(x,y)=0$, and $r>1$. 
\end{proposition}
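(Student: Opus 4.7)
The plan is to compute the pointwise limit of the off-diagonal entries of $Q^{rw}_\alpha$ and then to match this limit with the definition of $Q^{rw}_{-\infty}$. The diagonal entries satisfy $Q^{rw}_\alpha(x,x) = -C_\alpha = -1$ for every $\alpha$, so they require no work. For $y \neq x$ the factor $d(x)^\alpha$ appearing in $w_\alpha(x,y) = w(x,y)/(d(x)^\alpha d(y)^\alpha)$ cancels between numerator and denominator, and one obtains the clean expression
\[ Q^{rw}_\alpha(x,y) \;=\; \frac{w(x,y)\,d(y)^{-\alpha}}{\displaystyle\sum_{z \in S(x)} w(x,z)\,d(z)^{-\alpha}}, \]
where $S(x) := \{z\in\X : w(x,z)>0\}$ is the (finite, non-empty by connectivity of $\G$) neighborhood of $x$. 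This reduces the claim to a soft-argmax type computation in the degrees $\{d(z)\}_{z\in S(x)}$.

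For the limit, set $d^*(x) := \max_{z\in S(x)} d(z)$ and $S^*(x) := \{z\in S(x) : d(z)=d^*(x)\}$. Dividing numerator and denominator by $d^*(x)^{-\alpha}$ yields
\[ Q^{rw}_\alpha(x,y) \;=\; \frac{w(x,y)\,(d(y)/d^*(x))^{-\alpha}}{\displaystyle\sum_{z \in S(x)} w(x,z)\,(d(z)/d^*(x))^{-\alpha}}. \]
Since $d(z)/d^*(x)\in(0,1]$ on $S(x)$, as $\alpha\to -\infty$ each factor with $z\in S(x)\setminus S^*(x)$ tends to $0$ while those with $z\in S^*(x)$ equal $1$. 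Passing to the limit in this finite sum gives
\[ \lim_{\alpha\to-\infty}Q^{rw}_\alpha(x,y) \;=\; \frac{w(x,y)}{\displaystyle\sum_{z\in S^*(x)} w(x,z)}\,\1_{y\in S^*(x)}. \]

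It remains to identify $S^*(x)$ with $M_{KNF,x}$ under the stated specialization $\hat{B}(z)=d(z)$, $\hat{D}(x,z)\in\{1,\infty\}$ according to whether $w(x,z)>0$, and $r>1$. The indicator $\1_{\hat{D}(x,z)<r}$ restricts the maximization in the definition of $M_{KNF,x}$ to the neighbor set $S(x)$, on which the quantity being maximized reduces to $(d(z)-d(x))_+$; in the nondegenerate case $d^*(x)>d(x)$ the set of maximizers is exactly $S^*(x)$, and plugging this into the definition of $Q^{rw}_{-\infty}$ recovers the formula above. The main step is the soft-argmax concentration in the second paragraph; the rest is bookkeeping and unpacking definitions. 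I would flag the degenerate case $d^*(x)\le d(x)$ (e.g.\ at a global maximum of $d$) where the literal definition of $M_{KNF,x}$ is less natural, and handle it separately under the understanding that the stated formula is intended in the nondegenerate regime that is typical for the data graphs of interest.
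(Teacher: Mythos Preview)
Your proof is correct and follows essentially the same soft-argmax argument as the paper: cancel the common factor $d(x)^{-\alpha}$, then analyze the limit of the ratio by normalizing by the dominant degree term (you divide by $d^*(x)^{-\alpha}$, the paper splits into cases and divides by $d(y)^{-\alpha}$ or $d(z)^{-\alpha}$, but the mechanism is identical). Your flag on the degenerate case $d^*(x)\le d(x)$ is apt---the paper's proof implicitly assumes the nondegenerate regime as well, and its remark that ties are ``mostly irrelevant since generically we may expect no ties'' is in the same spirit as your caveat.
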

We notice that this is essentially the KNF rate matrix defined in \eqref{QKNF} with only a difference in the way ties are broken when the maximum of $d$ around a point is not unique. This distinction is mostly irrelevant since generically we may expect no ties. On the other hand, if for some reason there are ties but the non-zero weights in the graph are equal, then the two tie-breaking rules coincide.
\begin{proof}
As the cases are analogous, let us consider only the case $y \neq x$. Note that 
\[ 
Q^{rw}_\alpha(x,y) =  \frac{w_\alpha(x,y)}{\sum_{z \not = x} w_\alpha(x,z)}
= \frac{w(x,y) d(x)^{-\alpha} d(y)^{-\alpha}}
{\sum_{z \not = x} w(x,z) d(x)^{-\alpha} d(z)^{-\alpha}} = \frac{w(x,y)  d(y)^{-\alpha}}
{\sum_{z \not = x} w(x,z)  d(z)^{-\alpha}}.
\]
If $y \not\in M_{KNF,x} $, consider $z \in M_{KNF,x}$. Then
\[ 
Q^{rw}_\alpha(x,y) \leq \frac{w(x,y)}{w(x,z)} \left( \frac{d(y)}{d(z)} \right)^{-\alpha} \to 0 \quad \text{ as } \alpha \to -\infty.
\]
If $y \in M_{KNF,x}$  then 
\[ 
Q^{rw}_\alpha(x,y)= \frac{w(x,y) }
{\sum_{z \not = x} w(x,z)  \left( \frac{d(z)}{d(y)} \right)^{-\alpha}}  \: \to \, 
\frac{w(x,y)}{\sum_{z \in M_{KNF,x} } w(x,z)} \quad \text{ as } \alpha \to -\infty. \]
\end{proof}

\section{Continuum limits of Fokker-Planck equations on graphs and implications}
\label{sec:CotninuumLimits}

In this section we further study the Fokker-Planck equations introduced in Section \ref{sec:FokkerPlanckGraphs} and discuss their connection with Fokker-Planck equations at the continuum level. For such connection to be possible we impose additional assumptions on the graph $\G=(\X, w)$. In particular, we assume that $\G$ is a \emph{proximity graph} on $\X = \{ x_1, \dots, x_n \}$, where the $x_i$ are assumed to be i.i.d. samples from a distribution on a smooth compact $m$-dimensional manifold without boundary $\M$ embedded in $\R^d$, and having density $\rho: \M \rightarrow \R$ with respect to the volume form on $\M$. By proximity graph we mean that the weights $w(x_i, x_j)$ are defined according to:
\begin{equation} 
w(x,y):=  \eta_{\veps}(|x-y|) , \quad \eta_{\veps}(r) = \frac{1}{\veps^{m}} \eta\left( \frac{r}{\veps} \right), 
\label{eqn:ProximityGraph}
\end{equation}
where $\veps>0$ is a bandwidth appropriately scaled with the number of samples $n$, $\eta$ is a function $\eta : [0, \infty) \rightarrow [0,\infty)$ with compact support, and $|x-y |$ denotes the Euclidean distance between $x$ and $y$. 


\subsection{Continuum limit of mean shift dynamics on graphs}
\label{sec:ContLimitMS}

In order to formally derive the large sample limit of equation \eqref{Bgradascent}, we study the action of $Q^{ms}$ on a smooth function $u: \M \rightarrow \R$. That is, we compute:
\begin{align*} 
\sum_{x \in \mathcal{X}} u(x) Q^{ms}(x,y) &  = -C_{ms} \sum_{x \in \X} \left[ \left( B(x)- B(y)   \right)_+ u(y) - (B(x) - B(y))_- u(x) \right] w(x,y) 
\end{align*}
as $n \rightarrow \infty$ and $\veps \rightarrow 0$ at a slow enough rate.
 Since our goal below is to deduce formal continuum limits, we will assume that $\M$ is flat.
 We note that when $\M$ is a smooth manifold, the deflection of the manifold from the tangent space is at most quadratic, and thus the error introduced is small when $\varepsilon$ is small.
 In this way we can avoid using the notation and constructions from differential geometry as well as some approximation arguments that obscure the reason why the limit holds. Providing a rigorous argument for the convergence of the dynamics remains an open problem.

In what follows we use $\rho_n = \frac{1}{n} \sum_{x \in \mathcal{X}} \delta_x$ to denote  the empirical distribution on the data points; here we use the notation $\rho_n$ to highlight the connection between the data points and the density function $\rho$. We also consider the constants
\[ C_{ms} = \frac{1}{n \veps^2 \sigma_\eta'}, \quad \sigma_\eta' = \frac{1}{2m} \int_{\R^m} |z|^2 \eta(|z|) dz, \]
and assume that the potential $B$ is a $C^3(\M)$ function. With the above definitions we can explicitly write: 
\begin{equation} \label{Qms-scaled}
-\sum_{x \in \mathcal{X}} u(x) Q^{ms}(x,y)  =  \frac{1}{n \veps^{m+2} \sigma_\eta'} \sum_{x \in \X} \left[ \left( B(x)- B(y)   \right)_+ u(y) - (B(x) - B(y))_- u(x) \right] \eta\left(\frac{|x-y|}{\veps} \right), \end{equation}
and using the smoothness of $u$ and $B$ equate the above to
\begin{align*}
&   =  \frac{1}{ \veps^{m+2} \sigma_\eta'}\int_{\M} \left[ \left( B(x)- B(y)   \right)_+ u(y) - (B(x) - B(y))_- u(x) \right] \eta\left(\frac{|x-y|}{\veps} \right) \rho_n(x) dx\\
& = \frac{1}{ \veps^{m+2} \sigma_\eta'}\int_{\M} \left[ \left( B(x)- B(y)   \right)_+ (u(y)-u(x)) \right.   \\
& \quad \left. + \left(\left( B(x)- B(y)   \right)_+ -(B(x) - B(y))_- \right) u(x) \right] \eta\left(\frac{|x-y|}{\veps} \right) \rho_n(x) dx\\
& = \frac{1}{ \veps^{m+2} \sigma_\eta'}\int_{\M} \left[ \left( B(x)- B(y)   \right)_+ (u(y)-u(x)) +  \left( B(x)- B(y)   \right) u(x) \right] \eta\left(\frac{|x-y|}{\veps} \right) \rho_n(x) dx\\
&   \approx   \frac{1}{\veps^{m+2} \sigma_\eta'}\int_{\M} \left[ \left( B(x)-B(y)  \right)_+ \langle \nabla u (y), y-x \rangle  + \langle \nabla B(y),x-y \rangle u(x) \right]\eta\left(\frac{|x-y|}{\veps} \right) \rho_n(x) dx
\\& \quad    + \frac{1}{2 \veps^{m+2}\sigma_\eta'}\int_{\M} \left[ \left( B(x)-B(y)    \right)_+ \langle D^2 u (y) (x-y), y-x \rangle  \right. \\
& \quad \left. + \langle D^2 B(y) (x-y),x-y \rangle u(x) \right] \eta\left(\frac{|x-y|}{\veps} \right) \rho_n(x) dx
\\& =:A_1+ A_2 + A_3+A_4 .
\end{align*}
Next we analyze each of the terms $A_1, A_2, A_3$ and $A_4$. 
For $A_1$ we see that:
\begin{align}
\label{auxeqn:A1}
\begin{split}
 A_1 & \approx \frac{1}{\veps^{m+2} \sigma_\eta'}\int_{\M}  \left( B(x)-B(y)  \right)_+ \langle \nabla u (y), y-x \rangle \eta\left(\frac{|x-y|}{\veps} \right)\rho(x)  dx
 \\&\approx -\frac{1}{ \veps^{m+2} \sigma_\eta'}\int_{  \langle x-y , \nabla B(y) \rangle\geq 0} \langle \nabla u (y), x-y \rangle  \langle x-y , \nabla B(y) \rangle \eta\left(\frac{|x-y|}{\veps} \right) \rho(x)   dx  
 \\& \approx - \frac{\rho(y)}{ \veps^{m+2} \sigma_\eta'}\int_{  \langle x-y , \nabla B(y) \rangle\geq 0} \langle \nabla u (y), x-y \rangle  \langle x-y , \nabla B(y) \rangle \eta\left(\frac{|x-y|}{\veps} \right) dx  
 \\& = -\frac{\rho(y)}{\sigma_\eta'}\int_{\langle z, v \rangle \geq 0 } \langle v', z\rangle \langle z, v\rangle \eta(|z|) dz,
 \end{split}
\end{align}
where $v= \nabla B(y)$ and $v'=\nabla u(y)$; notice that in the first line we have replaced the empirical measure $\rho_n$ with the measure $\rho(x) dx$ (introducing some estimation error) and in the second line we have considered a Taylor expansion of $B$ around $y$. On the other hand, notice that 
\[ \int_{ \langle z, v \rangle \geq 0 } \langle v', z\rangle \langle  z, v\rangle \eta(|z|) dz = \langle S v ,v' \rangle \]
where $S$ is a rank one symmetric matrix which can be written as $S= a \zeta \otimes \zeta$ for some vector $\zeta$ and some scalar $a$. Now, $a \langle \zeta, v \rangle^2 $ is equal to:
\[ \langle S v , v \rangle  =  \int_{\langle z, v \rangle \geq 0 } \langle v, z\rangle^2\eta(|z|) dz  = |v|^2\frac{1}{2m}  \sum_{l=1}^m\int \langle e_l , z \rangle^2 \eta(|z|)dz = |v|^2 \frac{1}{2m} \int |z|^2\eta(|z|) dz = \sigma_\eta' |v|^2. \]
The above computation shows that $\zeta$ can be taken to be $v$, and $a= \frac{\sigma'_\eta}{|v|^2}.$ Thus, 
\begin{equation}
\label{auxeqn:A1Final}
   A_1 \approx -\rho(y)  \left \langle  \nabla u (y),\nabla B(y) \right \rangle. 
\end{equation}

Regarding $A_2$, we have
\[ A_2 \approx \frac{1}{\veps^{m+2} \sigma_\eta'} \int_{\M} \langle \nabla B (y) , x-y \rangle u(x) \eta\left( \frac{|x-y|}{\veps} \right) \rho(x) dx, \]
introducing an estimation error to replace the integration with respect to the empirical measure with integration with respect to the measure $\rho(x) dx$. We can further decompose the computation introducing an approximation error:
\[ A_2 \approx A_{21} + A_{22} + A_{23} \]
where 
\begin{align*}
A_{21} & := \frac{1}{\veps^{m+2} \sigma_\eta'} \int_{\M} \langle \nabla B(y) , x-y \rangle \langle \nabla u(y)  , x-y \rangle \eta\left( \frac{|x-y|}{\veps} \right) \rho(y) dx,    \\
 A_{22} & := \frac{1}{\veps^{m+2} \sigma_\eta'} \int_{\M} \langle \nabla B(y) , x-y \rangle u(y) \eta\left( \frac{|x-y|}{\veps} \right) \rho(y) dx, \\
 A_{23} & := \frac{1}{\veps^{m+2}\sigma_\eta'} \int_{\M} \langle \nabla B(y) , x-y \rangle \langle \nabla \rho(y) , x-y \rangle u(y) \eta\left( \frac{|x-y|}{\veps} \right) dx.
 \end{align*}
By symmetry, the term $A_{22}$ is seen to be equal to zero. On the other hand, the terms $A_{21}$ and $A_{23}$ are computed similarly to the second expression in \eqref{auxeqn:A1} only that in this case there is no sign constraint in the integral. From a simple change of variables we can see that for arbitrary vectors $v$ and $v'$ we have
\[ \int_{ \langle z, v \rangle \geq 0 } \langle v', z\rangle \langle z, v\rangle \eta(|z|) dz = \int_{ \langle z, v \rangle \leq 0 } \langle v', z\rangle \langle z, v\rangle \eta(|z|) dz.  \]
In particular
\[\int \langle v', z\rangle \langle z, v\rangle \eta(|z|) dz = 2 \int_{\langle z, v \rangle \geq 0 } \langle v', z\rangle \langle z, v\rangle \eta(|z|) dz = 2 \sigma'_\eta \langle v ,v' \rangle, \]
and thus
\begin{align*}
 A_{21} & = 2\rho(y) \langle \nabla B(y) , \nabla u(y) \rangle,\quad A_{23}  = 2u(y) \langle \nabla B(y) , \nabla \rho(y) \rangle. 
\end{align*}
In summary,
\begin{equation}
\label{auxeqn:A2Final}
    A_2 \approx 2\rho(y) \langle \nabla B(y) , \nabla u(y) \rangle +  2 u(y) \langle \nabla B(y) , \nabla \rho(y) \rangle. 
\end{equation}

\nc

It is straightforward to see that $A_3 = O(\veps)$ and so for our computation we can treat $A_3$ as zero:
\begin{equation}
\label{auxeqn:A3Final}
A_3 \approx 0.
\end{equation}

For the final term $A_4$ we start by introducing an estimation error to write
\[A_4 \approx \frac{1}{2\veps^{m+2}\sigma_\eta'} \int_{\M} \langle D^2 B(y) (y-x),x-y \rangle u(x)  \eta\left(\frac{|x-y|}{\veps} \right) \rho(x) dx.\]
We can further replace the term  $u(x)$ with $u(y)$ (and $\rho(x)$ with $\rho(y)$) in the formula above. This replacement introduces an $O(\veps)$ term that we can ignore. It follows that:
\begin{align*}
A_ 4 &\approx u(y) \rho(y)\frac{1}{2\veps^{m+2} \sigma_\eta'} \int_{\M} \langle D^2 B(y) (x-y),x-y \rangle  \eta\left(\frac{|x-y|}{\veps} \right) dx  
\\&=u(y) \rho(y)\frac{1}{2 \sigma_\eta'} \int \langle D^2 B(y)z,z \rangle  \eta(|z|)dz  
\\&=  u(y) \rho(y) \Delta B(y).
\end{align*}

Combining the above estimate with \eqref{auxeqn:A1Final}, \eqref{auxeqn:A2Final}, and \eqref{auxeqn:A3Final} we see that
\begin{align*}
 \sum_{x \in \mathcal{X}} u(x) Q^{ms}(x,y) & \approx -\left(  \rho(y) \left \langle  \nabla u (y), \nabla B(y) \right \rangle+ 2 u(y) \langle \nabla B(y) , \nabla \rho(y) \rangle +  u(y) \rho(y) \Delta B(y) \right)
\\&= - \frac{1}{\rho} \divv\left( u \rho^2 \nabla B \right). 
\end{align*}
Note that the graph dynamics takes place  on the provided data points, that is on $\P(\X) \subset \P(\M)$. As $n \to \infty$, $\P(\X)$ approximates $\P(\M)$. This partly explains that had we carried out the argument above in the full manifold setting the resulting dynamics would be restricted to the manifold and in particular both the divergence and the gradient above would take place on $\M$. That is for data on a manifold
\[  \sum_{x \in \mathcal{X}} u(x) Q^{ms}(x,y)  \approx - \frac{1}{\rho} \divv_{\M} \left( u \rho^2  \nabla_\M B \right). \]

\begin{remark}
\label{rem:FokkerPlanckLimit}
Notice that with the choice $B(x):= \log(\rho(x))$ the above becomes:
\[- \frac{1}{\rho} \divv_\M \left( u \rho  \nabla_\M  \rho \right) , \]
whereas with the choice $B(x)=-\frac{1}{\rho(x)}$ we get:
\[ -\frac{1}{\rho} \divv_\M \left( u \rho \nabla_\M \log(\rho) \right) .\]
The above analysis suggests that the formal continuum limit of the evolution \eqref{Bgradascent} when $B= -\frac{1}{\rho}$ is the PDE:
\[ \partial_t u_t = - \frac{1}{\rho}\divv_\M( u_t \rho \nabla_\M \log(\rho) ). \]
Notice, however, that the solution $u_t$ of the above equation must be interpreted as a ``density" with respect to the measure $\rho(x) d Vol_\M$ ($\rho(x) dx$ in the flat case).
Thus, in terms of ``densities" with respect to $d Vol_\M$ we obtain 
\[ \partial _t f_t = - \divv_\M( f_t \nabla_\M \log(\rho) ) \]
where $f_t:= u_t \rho$. We recognize this latter equation as the PDE describing the mean shift dynamics \eqref{eqn:flowMSM}.
\end{remark}


\subsection{Continuum limits of Fokker-Planck equations on graphs}
\label{sec:ContLimitsFP}

In this section we formally derive the large sample limit of the two types of Fokker-Planck equations on $\G$ that we consider in this paper, i.e. equation \eqref{Markovchain} when $Q=Q_{\alpha}^{rw}$ (for $\alpha \in (-\infty, 1] )$ and when $Q=Q_\beta$ (for $\beta \in [0,1]$).

We start our computations by pointing out that after appropriate scaling and under some regularity conditions on the density $\rho$, the diffusion operator $L^{rw}_\alpha$ converges towards the differential operator:
\begin{equation}
\L_\alpha v := -\frac{1}{\rho^{2(1-\alpha)}}\divv_\M(\rho^{2(1-\alpha)} \nabla_\M v).
\end{equation}
To be precise, if we set
\[ C_\alpha= \frac{1}{\sigma_{\eta} \veps^2 }, \quad \sigma_\eta:=   \int_{\R^m} |z|^2 \eta(|z|) dz \Big{/} \int_{\R^m} \eta(|z|) dz,\]
then, for all smooth $v: \M \rightarrow \R$ we have
\[  -\sum_{y \in \X}  Q_\alpha^{rw}(x,y) v(y) =   C_\alpha \sum_{y}  L_\alpha^{rw}(x,y) v(y) \rightarrow  \L_\alpha v (x),  \]
as $n \rightarrow \infty$ and $\veps\rightarrow 0$ at a slow enough rate. This type of pointwise consistency result can be found in \cite{singer2006graph} and \cite{Coifman1}.
Furthermore eigenvalues and eigenvectors of the graph Laplacians converge as  $n \to \infty$ and $\eps \to 0$ (with $\eps \gg \left( \frac{\ln n}{n} \right)^{1/d}\:$ ) to eigenvalues and eigenfunctions of the corresponding Laplacian on $\M$, see \cite{trillos2018variational}. If $\M$ is a manifold with boundary then the continuum Laplacian is considered with no-flux boundary conditions.
We note that
the results from \cite{trillos2018variational} are only stated for the case $\alpha=0$, i.e. for the standard \textit{random walk Laplacian}, but the proof in \cite{trillos2018variational} adapts to all $\alpha \in (-\infty,1]$ assuming that the density $\rho$ is smooth enough and is bounded away from zero and infinity.

Now, to understand the large sample limit of the dynamics \eqref{Markovchain} when $Q=Q_\alpha^{rw}$, we actually need to study the expression:
\begin{equation}
\sum_{x\in \X} u(x) Q_\alpha^{rw}(x,y), \quad  y \in \X, 
\label{eqn:DiffMaps2}
\end{equation}
which in matrix form can be written as $Q^{rw^T}_\alpha  u$ provided we view $u$ as a column vector. For that purpose we consider two smooth test functions $g,h$ on $\M$.
By definition of transpose
\begin{equation}
   \frac{1}{n}\sum_{i=1}^n h(x_i) (Q_{\alpha}^{rw^T} g)(x_i)    = \frac{1}{n}\sum_{i=1}^n g(x_i) (Q_{\alpha}^{rw} h)(x_i).
   \label{eqn:AuxCL0}
\end{equation}
At the continuum level, the definition of $\L_\alpha$ and integration by parts provide that
\[  \int_{\M} h(x) \frac{1}{\rho(x)}\divv_\M\left( \rho^{2(1-\alpha)} \nabla_\M \left( \frac{g}{\rho^{1-2\alpha}} \right)  \right)  \rho(x) dVol_\M(x)  = - \int_{\M} g(x) \L_\alpha h(x) \rho(x) dVol_\M(x)      \]
By the convergence of $Q_{\alpha}^{rw}$ towards $-\L_\alpha$ as $n \to \infty$ we can conclude that right hand sides converge, and thus the left hand sides do too; 
notice that the $\rho(x) dx$ on both sides appear because in both sums in \eqref{eqn:AuxCL0} the points $x_i$ are distributed according to $\rho$. From this computation we can identify the limit of \eqref{eqn:DiffMaps2} as:
\[ \frac{1}{\rho(y)}\divv_\M \left( \rho^{2(1-\alpha)} \nabla_\M \left( \frac{u}{\rho^{1-2\alpha}} \right)  \right).  \]
In turn, we obtain the formal continuum limit of the dynamics \eqref{Markovchain} when $Q=Q_{\alpha}^{rw}$: 
\[ \partial_t u_t =  \frac{1}{\rho}\divv_\M \left( \rho^{2(1-\alpha)} \nabla_\M \left( \frac{u_t}{\rho^{1-2\alpha}} \right)  \right),  \]
where  $u_t$ represents the density with respect to $\rho(x) dx$. If we consider 
\begin{equation}
 \label{eq:furho}
f_t(x) := u_t(x) \rho(x)
\end{equation}
that is  $f_t$ is a probability density w.r.t. $dx$, we see it satisfies
\begin{align} \label{CL_varyingalpha}
 \partial_t f_t =  \divv_\M \left(\rho^{2(1-\alpha)}  \nabla_\M \left(  \frac{\ f_t}{\rho^{2(1-\alpha)}}\right)\right)=   \Delta_\M f_t - 2(1-\alpha) \divv_\M ( f_t \nabla_\M \log(\rho))  ,
 \end{align}
where the last equality follows from an application of the product rule to the term $\nabla_\M \left(\frac{f}{\rho^{2(1-\alpha)}} \right)$. Notice that after considering a time change $ t \leftarrow \frac{t}{ 3-2\alpha} $, we can rewrite equation \eqref{CL_varyingalpha} as:
\begin{equation}
   \partial_t f_t = (1-\beta_\alpha) \Delta_\M f_t - \beta_\alpha \divv_\M( f_t \nabla_\M \log(\rho)),
    \label{eqn:FokkerPlanckContinuum}
\end{equation}
where $ \beta_\alpha = (2-2\alpha)/(3-2 \alpha) \in [0,1]$.

Using the above analysis and Remark \ref{rem:FokkerPlanckLimit}, we can also conclude that the (formal) large sample limit of equation \eqref{Markovchain} with $Q=Q_\beta$ and potential $B=-\frac{1}{\rho}$ is given by 
\begin{equation}
   \partial_t f_t = (1-\beta) \Delta_\M f_t - \beta \divv_\M( f_t \nabla_\M \log(\rho)),
    \label{eqn:FokkerPlanckContinuum}
\end{equation}
that is, the same continuum limit as for the Fokker-Planck equations constructed using the rate matrix $Q_\alpha$ for $\alpha$ such that $\beta=\beta_\alpha$.

\begin{remark}
Notice that when $\beta=1$ equation \eqref{eqn:FokkerPlanckContinuum} reduces to the heat equation on $\M$ where no role is played by $\rho$. In this case clustering is determined completely by the geometric structure of $\M$. On the other hand, when $\beta=0$, equation \eqref{eqn:FokkerPlanckContinuum} reduces to mean shift dynamics on $\M$ as discussed in section \ref{sec:MeanShift}. 
\end{remark}

\begin{remark}
Several works in the literature have established precise connections between operators such as graph Laplacians built from random data and analogous differential operators defined at the continuum level on smooth compact manifolds without boundary. For pointwise consistency results we refer the reader to \cite{singer2006graph,hein2005graphs,hein2007graph,belkin2005towards,ting2010analysis,GK}). 
For \textit{spectral convergence} results we refer the reader to \cite{vLBeBo08} where the regime $n \rightarrow \infty$ and $\veps$ constant has been studied. Works that have studied regimes where $\veps$ is allowed to decay to zero (where one recovers differential operators and not integral operators) include \cite{Shi2015,BIK,trillos2019error,Lu2019GraphAT,calder2019improved,DunsonWuWu,WormellReich}.
Recent work \cite{cheng2020tuned} considers the spectral convergence of $\L_\alpha$ with self-tuned bandwidths and includes the $\alpha<0$ range. 
The work \cite{CalderGTLewicka} provides regularity estimates of graph Laplacian eigenvectors. 

The case of manifolds \textit{with} boundary has been studied in papers like \cite{BerryVaughn,BoundaryWuWu,Lu2019GraphAT,trillos2018variational}. It is important to highlight that the specific computations presented in our section \ref{sec:ContLimitMS} would have to be modified 
to take into account the effect of the boundary, in particular on  the kernel density estimate. 
However, we remark that the tools and analysis from the papers mentioned above can be used to generalize these computations.
\end{remark}

\begin{remark}
A connection between Fokker-Planck equations at the continuum level and the graph dynamics induced by $Q_{\alpha}^{rw}$ when $\alpha=1/2$ was explicitly mentioned in \cite{NADLER2006113}. To establish an explicit link between mean shift and spectral clustering, however, we need to consider the range $(-\infty, 1]$ for $\alpha$. In the diffusion maps literature the interval $[0,1]$ is considered as natural range for $\alpha$, but the analysis presented in this section explains why $(-\infty,1]$ is in fact a more natural choice. 
\end{remark}

\begin{remark}
Besides the Fokker-Planck interpolations considered in section \ref{sec:FokkerPlanckGraphsInt}, another family of data embeddings that are used to interpolate geometry-based and density-driven clustering algorithms is based on the path-based metrics studied in \cite{LittleJMLR20,Little20}.  

\end{remark}


\subsection{The Witten Laplacian and some implications for data clustering}

In the previous section we presented a (formal) connection between Fokker-Planck equations on proximity graphs and Fokker-Planck equations on manifolds. In this section we use this connection to illustrate why the Fokker-Planck interpolation is expected to produce better clusters in settings like the blue sky problem discussed in our numerical experiments in section \ref{sec:BlueSky} where both pure mean shift and pure spectral clustering perform poorly. For simplicity we only consider the Euclidean setting.

We start by noticing that  equation \eqref{eqn:FokkerPlanckContinuum} can be rewritten as
\begin{equation}
  \partial_t {\tilde f}_t = -\Delta_{\varrho} \tilde f_t,
\label{eqn:WittenDynamics}
\end{equation}
after considering the transformation:
\[f= \exp \left(-\frac{1-\beta}{2\beta} \varrho \right) \tilde f, \quad \varrho:= - \log(\rho). \]
In the above, the operator $\Delta_\varrho$ is the \textit{Witten Laplacian} (see \cite{Witten83} and \cite{MichelSmallEigenWittenLap19}) associated to the potential $\frac{1-\beta}{2} \varrho$ which is defined as:
\begin{equation}
\label{eqn:WittenDef}
  \Delta_\varrho v  := - \beta^2 \Delta v  + \frac{(1-\beta)^2}{4}|\nabla \varrho|^2 v - \frac{\beta(1-\beta)}{2}(\Delta \varrho) v.  
\end{equation}
From the above we conclude that the Fokker-Planck dynamics \eqref{eqn:FokkerPlanckContinuum} can be analyzed by studying the dynamics \eqref{eqn:WittenDynamics}. In turn, some special properties of the Witten Laplacian $\Delta_\varrho$ that we review next allow us to use tools from spectral theory to study equation \eqref{eqn:WittenDynamics} and in turn also the type of data embedding induced by our Fokker-Planck equations on graphs. 

To begin, notice that
\begin{equation}
 \Delta_\varrho =  \left(-\beta \divv + \frac{(1-\beta)}{2}\nabla \varrho \right) \left( \beta \nabla + \frac{(1-\beta)}{2}\nabla \varrho  \right).
\end{equation}
From the above we see that $\langle \Delta_\varrho f , g \rangle_{L^2(\M)}$ can be written as:
\[  \langle \Delta_\varrho g , h \rangle_{L^2(\M)} = \int_{\M}  \left \langle  \beta \nabla g + g\frac{(1-\beta)}{2} \nabla \varrho  ,  \beta \nabla h+ h\frac{(1-\beta)}{2} \nabla \varrho \right \rangle dx,     \]
from where we conclude that $\langle \Delta_\varrho f , g \rangle_{L^2(\M)}$ is a quadratic form with associated Dirichlet energy:
\begin{equation}
\label{eqn:Dirich}    
D(f) := \int_{\M} \left \lvert \nabla f + f\frac{(1-\beta)}{2} \nabla \varrho \right \rvert^2 dx.
\end{equation}
When $\M$ is compact, it is straightforward to show that there exists an orthonormal basis $\{ \varrho_k \}_{k \in \N}$ for $L^2(\M)$ consisting of eigenfunctions of $\Delta_\varrho$ with corresponding eigenvalues $0 = \lambda_1 < \lambda_2 \leq \lambda_2 \leq \dots$ that can be characterized using the Courant-Fisher minmax principle. Using the Spectral Theorem we can then represent a solution to \eqref{eqn:WittenDynamics} as
\[ \tilde{f}_t= \sum_{k=1}^\infty e^{-t\lambda_k} \langle\tilde{f}_0, \varrho_k\rangle_{L^2(\M)} \varrho_k, \]
and conclude that the dynamics \eqref{eqn:WittenDynamics} are strongly influenced by the eigenfunctions with smallest eigenvalues.

We now explain the implication of the above discussion on data clustering. Suppose that we consider a data distribution in $\R^2$ as the one considered in section \ref{sec:BlueSky} modelling the blue sky problem, so that in particular it has product structure, i.e. $\rho(x,y)= \rho_1(x) \rho_2(y)$. In this case we can use the additive structure of the potential $\varrho= - \log(\rho(x,y)) = - \log(\rho_1(x)) - \log(\rho_2(y))=: \varrho_1(x)+ \varrho_2(y) $ to conclude that  the set of eigenvalues of $\Delta_\varrho$ and a corresponding orthonormal basis of eigenfunctions can be obtained from:
\[ \lambda_{1,i} + \lambda_{2,j} , \quad  \varrho_{1,i}(x) \varrho_{2,j}(y)  \]
where $(\lambda_{1,i}, \varrho_{1,i})$ are the eigenpairs for the 1d Witten Laplacian $\Delta_{\varrho_1}$ and $(\lambda_{2,j}, \varrho_{2,j})$ are the eigenpairs for $\Delta_{\varrho_2}$. In particular, the first non-trivial eigenvalue of $\Delta_{\varrho}$ and its corresponding eigenfunction (which will be the effective discriminators of the two desired clusters if $\lambda_3$ is considerably larger than $\lambda_2$) are either $\lambda_{1,2}$ and $\varrho_{1,2}(x) \varrho_{2,1}(y)$ , or $\lambda_{2,2}$ and $\varrho_{1,1}(x) \varrho_{2,2}(y)$. This discussion captures the competition between a horizontal and a vertical partitioning of the data in the context of the blue sky problem from section \ref{sec:BlueSky}. While we aren't able to retrieve the desired horizontal partitioning by setting $\beta=0$ or $\beta=1$, we can identify the correct clusters by setting $\beta$ strictly between zero and one (closer to one than to zero). We notice that the results from \cite{MichelSmallEigenWittenLap19} can be used to obtain precise quantitative information on the small eigenvalues of the $1d$  Witten Laplacians $\Delta_{\varrho_1}$ and $\Delta_{\varrho_2}$ when $\beta$ is close to one (i.e. the diffusion term is small) which we can use to determine whether $\lambda_{2,2}< \lambda_{1,2}$ or vice versa.






\section{Numerical Examples}
\label{sec:Numerics}

We now turn to the details of our numerical method and examples illustrating its properties.  We begin, in section \ref{nummethoddetails}, by describing the details of our numerical approach. We provide Algorithm \ref{nummethoddetails} for its  practical implementation.

In section \ref{simulationssection}, we consider several numerical examples, beginning with examples in one spatial dimension. In Figure \ref{fig:KDEofgraphdynamics}, we illustrate how the graph dynamics for the transition rate matrices $Q_\beta$ and $Q^{rw}_\alpha$ can be visualized as the evolution of a continuum density, and in Figure \ref{comparegdfd1d}, we illustrate the good agreement between the graph dynamics  and the dynamics of the corresponding continuum Fokker-Planck equation.  In Figure \ref{dynamicclustering}, we show how the clustering performance of our method depends on the balance between drift and diffusion ($\beta$), the time of clustering ($t$), and the number of clusters ($k$); we also illustrate the benefits and limitations of using the energy of the $k$-means clustering to identify the number of clusters. In Figure \ref{mesaKDEweakness}, we consider the role of   the kernel density estimate in clustering dynamics, showing how adding diffusion to mean shift dynamics can help the dynamics overcome spurious local minimizers in the kernel density estimate, leading to better clustering performance. In Figure \ref{differentdataup}, we illustrate the interplay between the underlying data distribution and the balance between drift and diffusion ($\beta$) .

Next, we consider several examples in two dimensions. In Figures \ref{blueskyclustering}-\ref{blueskyepsilon}, we consider a model of the \emph{blue sky problem}, in which data points are distributed over two elongated clusters that are separated by a narrow low density region. We illustrate how diffusion dominant dynamics prefer to cluster based on the geometry of the data, leading to poor performance. Similarly, pure mean shift dynamics can exhibit poor clustering due to local {maxima} in the kernel density estimate. By interpolating between the two extremes, we observe robust clustering performance, for a wide range of graph connectivity ($\varepsilon$). Finally, in Figures \ref{threeblobsdynamics}-\ref{threeblobsclustering}, we consider an example in which three blobs are connected by two bridges: one wide, low density bridge and another narrow, high density bridge. This example is constructed so that there is no correct clustering into two clusters. Instead, a geometry-based clustering method would prefer to cut the thin bridge, and a density-based clustering method would prefer to cut the wide bridge. We show how varying the balance between drift and diffusion in our method ($\beta$) allows our method to cut either bridge.
 
\subsection{Numerical Method} \label{nummethoddetails}

For our numerical experiments we consider a domain $\Omega \subseteq \Rd$ and a density $\rho: \Omega \to [0,+\infty)$, normalized so that $\int_\Omega \rho = 1$. All PDEs on $\Omega$ will be considered with no-flux boundary conditions, as the solutions of the graph-based equations converge to the solutions of PDE with no-flux boundary conditions
(observed in \cite{Coifman1} and rigorously proved in \cite{trillos2018variational} for Laplacians).

We draw $n$ samples $\{x_i\}_{i=1}^n$ from $\rho$ on $\Omega$. These samples are the nodes of our weighted graph, and for all simulations, the weights on the graph are given by a Gaussian weight function
\begin{align}\label{eq:gaussian}
w(x_i,x_j) = \varphi_\veps(|x_i - y_j|),  \quad     \varphi_\veps(a) = \frac{e^{-a^2/2\veps^2}}{(2\pi \veps^2)^{d/2} } ,  \quad a \in \mathbb{R}.
\end{align}
In our one dimensional simulations, we take the graph bandwidth parameter $\varepsilon$ to be
\begin{align} \label{epsdef}
\veps =  \sqrt{2} \max_i \min_{j : j\neq i} |x_i - x_j| ;
\end{align}
that is, $\veps$ equals the maximum distance to the closest node.  
We note that even in higher dimensions the $\veps$ above  
 scales as $\left( {\ln n}/{n} \right)^{1/d}$ with the number of nodes $n$. This has been identified as the threshold,  in terms of   $n$,  at which the graph Laplacian is spectrally consistent with the manifold Laplacian \cite{trillos2018variational}. In Figure \ref{blueskyepsilon}, we illustrate how the choice of $\eps$ impacts dynamics and, ultimately, clustering performance.

With this graphical structure, we now recall the weighted diffusion transition rate matrix $Q^{rw}_\alpha$,  for $\alpha \in (-\infty , 1]$, as in equation (\ref{Qrw}),  with the constant $C_\alpha = ((3-2 \alpha) \veps^2)^{-1}$,
\begin{align} \label{Qalphadeffinal}
Q^{rw}_\alpha(x,y) &:= \frac{1}{(3-2 \alpha) \veps^2} \; \begin{cases} \frac{w_\alpha(x,y)}{\sum_{z \not =x } w_\alpha(x,z)} &\text{ if } x \neq y , \\
 - 1 &\text{ if } x =y, \end{cases}  \\
 w_\alpha(x,y) &:= \frac{w(x,y)}{d(x)^\alpha  d(y)^\alpha }, \quad  \quad  d(x_i) = \sum_{x_j \not = x_i} w(x_i,x_j) .
\end{align}
Similarly, we recall the transition rate matrix $Q_\beta$, for $\beta \in [0,1]$, as in equation \eqref{eq:upwindFPgraph}, with the constant $C_{ms} = (\veps^2 n)^{-1}$,
\begin{align} \label{Qbetadeffinal}
Q_{\beta} &:= \beta Q^{ms} + (1- \beta) Q^{rw}_1 ,\\
    Q^{ms}(x,y) &:=  \frac{1}{\varepsilon^2 n} \begin{cases} \left(-\frac{1}{\hat \rho_\delta(y)} + \frac{1}{\hat \rho_\delta(x)} \right)_+ w(x,y) , & \text{ for } x \neq y, \\
-\sum_{z \neq x} \left(-\frac{1}{\hat \rho_\delta(z)}+\frac{1}{\hat \rho_\delta(x)} \right)_+ w(x,z) , & \text{ for } x =y , \end{cases} \\
  \hat{\rho}_\delta(x) &:=  \frac{1}{n} \sum_{y \in \mathcal{X}} \varphi_\delta(x-y)  .
\end{align}
Unless otherwise specified, we take the bandwidth $\delta$ in our kernel density estimate for our one dimensional examples to be
\begin{align} \label{typicaldelta}
\delta = \sqrt{2} \left( \frac{|\Omega|}{n} \right)^{0.5} .
\end{align}

With these transition rate matrices in hand, we may now consider solutions $u_t$ of \eqref{Markovchain} when $Q=Q^{rw}_\alpha$ or when $Q=Q_{\beta}$.  We solve the ordinary differential equations describing the graph dynamics by directly computing the matrix exponential $e^{tQ}$ in each case; see Definition \ref{def:MarkovChain}. Following the discussion in section \ref{sec:ContLimitsFP}, we know that for each of these dynamics,  as $n \to +\infty$ and $\veps, \delta \to 0$ (at an $n$ dependent rate that is not too fast),  the measures $\sum_{j=1}^n   u_t(x_j) \delta_{x_j}$ are expected to converge to solutions $f_t$ of the following Fokker-Planck equation:
  \begin{align}
   \label{e:ctmFP}
 \partial_t f_t &= (1-\beta) \Delta f_t - \beta \divv( f_t \nabla \log(\rho)),
 \end{align}
 where for the $Q^{rw}_\alpha$ dynamics, we take
\begin{align} \label{betatoalpha}
     \beta = \beta_\alpha = (2-2\alpha)/(3-2 \alpha)
\end{align}  
The steady state of the equation is the corresponding Maxwellian distribution
 \begin{align} \label{exactMaxwellian}
 c_{\rho, \beta} \ \rho^{\beta/(1-\beta)}(x) ,
 \end{align}
 where $c_{\rho,\beta} >0$ is a normalizing constant chosen so that the distribution integrates to one over $\Omega$. Note that, if  $d(x_i)$ represents the   degrees of the graph vertices, as in equation (\ref{eqn:degree}), then the function $  u_t(x_i) d(x_i) $ likewise converges to $f_t(x)$ as the number of nodes in our sample $n \to +\infty$. Consequently, when comparing our graph dynamics to the PDE dynamics, we will often plot $u_t(x_i) d(x_i)$ and $f_t(x)$.
 

 Finally, we use the embedding maps $\hat{\Psi}_\alpha$ and $\hat{\Psi}^\beta$ from sections \ref{sec:EmbeddingRWLap} and \ref{sec:FokkerPlanckGraphsInt} to cluster the nodes. In particular, we apply $k$-means to the vectors $\{ \hat{\Psi}_\alpha(x_i) \}_{i=1}^n$ and $\{ \hat{\Psi}^\beta(x_i) \}_{i=1}^n$, obtaining in this way a series of maps from nodes $\{x_i\}_{i=1}^n$ to cluster centers $\{l_m\}_{m=1}^k$.  Nodes mapped to the same cluster center are identified as belonging to the same cluster. While we will not discuss at any depth the methods to select the best number of clusters, we note that a number of methods to do so (in particular the elbow method and the gap statistics \cite{tibshirani-gap-01}) relies on the 
 value of the $k$-means energy,
 \begin{equation} \label{Ekmeans}
 E_k = \frac{1}{n} \sum_{i=1}^n \min_{m= 1, \dots k} | \Psi(x_i) - l_j |^2,
 \end{equation}
for each relevant $\Psi$. Note that $E_k$ always decreases with $k$.  While a large decrease in the energy as $k$ increases is indicative of the improved approximation of data by cluster centers the size of the jumps is truly telling only if we compare it with the relevant model for the data considered, see \cite{tibshirani-gap-01} and discussion in Section \ref{sec:Clustering Dynamics}. For ease of visualization, in our numerical examples, we will plot the \emph{normalized} $k$-means energy, which is rescaled so that energy of a single cluster equals one,
\begin{align} \label{kmeansenergy} 
 E_k^{\rm norm} = E_k / E_1 .
 \end{align}
 
 All of our simulations are conducted in Python, using the Numpy, SciPy, Sci kit-learn, and MatPlotLib libraries \cite{numpy,scipy,matplotlib,scikit-learn}.  In particular, we use the Sci kit-learn implementation of $k$-means to cluster the embedding maps.

 \begin{algorithm}[h] \raggedright
   \caption{Dynamic Clustering Algorithm for $Q_\beta$ or $Q^{rw}_\alpha$} \label{alglabel}
\SetAlgoLined
\KwIn{$\{x_i\}_{i=1}^n$, $\varepsilon$, $\delta$, $t$, $k$}
$Q = Q_\beta$ or $Q = Q^{rw}_\alpha$ \\
$\hat{\Psi}_Q (x_i) = (e^{tQ})_{(i, j=1,\dots n)} $ for $i =1, \dots, n$ \\
$l_m = {\rm Kmeans.fit}(\hat{\Psi}_Q (x_1), \dots, \hat{\Psi}_Q (x_n))$ with ${\rm n_{clusters}} =k$
\end{algorithm}

\subsection{Simulations} \label{simulationssection}

\begin{figure}[h]
 \begin{centering}
  \includegraphics[height=3.1cm,trim={.2cm 1.9cm .7cm .55cm},clip,left]{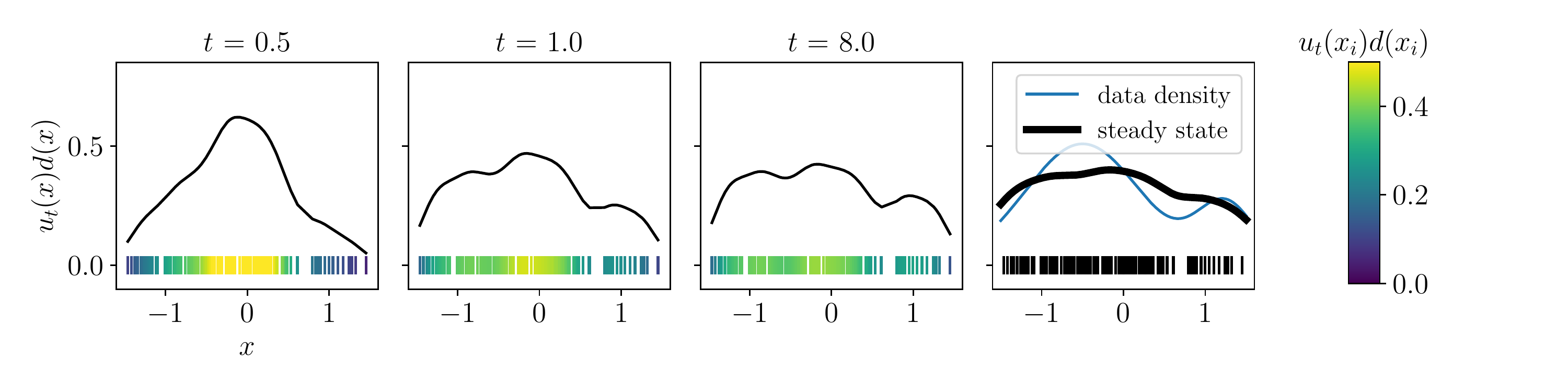}
  \includegraphics[height=3.46cm,trim={.2cm .7cm .7cm 1.15cm},clip,left]{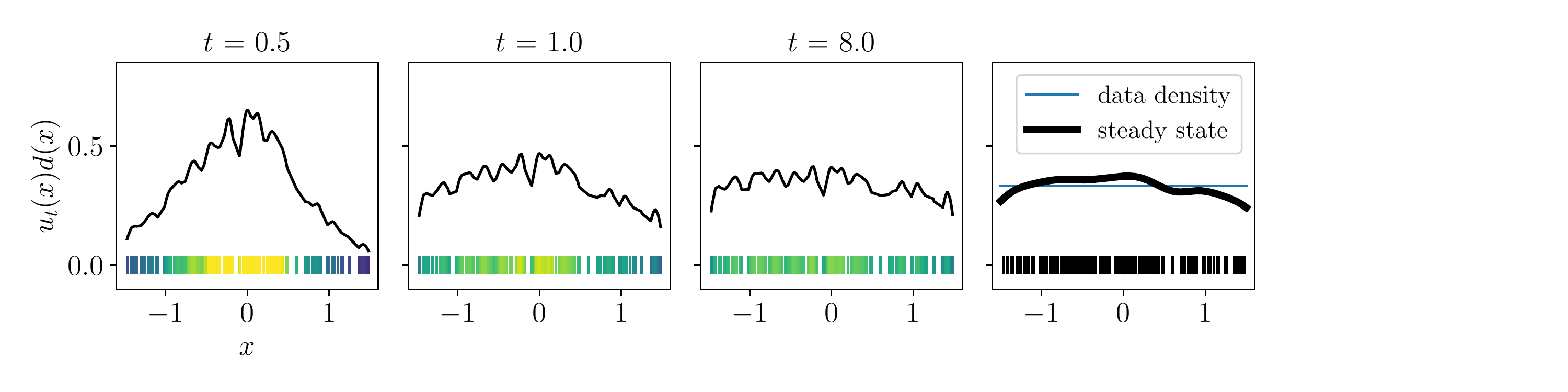}
  \end{centering}
\caption{Illustration of   the graph dynamics   $u_t$ for $Q_\beta$, $\beta = 0.25$, from   initial condition $\delta_{x_i}$, $x_i = -0.1$, for two choices of data density: $\rho_{\text{two bump}}$ (top) and $\rho_{\text{uniform}}$ (bottom).  The first three columns show the evolution of $u_t(x) d(x)$ at times $t=0.1, 0.5,$ and $8.0$, with the color of the markers representing the value of $u_t(x_i) d(x_i)$ at each node. The  last column depicts the data density (blue line) from which the nodes of the graph $\{x_i\}_{i=1}^n$ (black markers) are sampled, as well as the steady state of the dynamics (thick black line). }
 \label{fig:KDEofgraphdynamics}
\end{figure}

We now turn to simulations of the graph dynamics, PDE dynamics, and clustering. 

\subsubsection{Graph dynamics as density dynamics}
In Figure \ref{fig:KDEofgraphdynamics}, we illustrate how the  dynamics on a graph  can be visualized as the evolution of a density on the underlying domain $\Omega = [-1.5,1.5]$. The right column of Figure \ref{fig:KDEofgraphdynamics} illustrates two choices of data density (blue line),
\begin{align} \label{twobumpdef}
    \rho_{\text{two bump}}(x) = 4c \varphi_{0.5}(x+0.5) + c \varphi_{0.25}(x-1.25) \quad \text{ and } \quad \rho_{\text{uniform}}(x) = \frac13, \quad x \in \mathbb{R} .
\end{align}
The constant $c>0$ is chosen so that the integral of both densities over the domain equals one. We sample the nodes of the graph $\{ x_i\}_{i=1}^n$ (black markers)  from each density, with $n = 147$ nodes sampled for $\rho_{\text{two bump}}$ and $n = 140$ nodes sampled for $ \rho_{\text{uniform}}$.  The first three columns show the evolution of the graph dynamics $u_t(x) d(x)$ from equation \eqref{Markovchain} for $Q=Q_\beta$ with $\beta = 0.25$ and initial condition $\delta_{x_i}$, $x_i = -0.1$, where the top row corresponds to the graph arising from $\rho_{\text{two bump}}$ and the bottom row corresponds to the graph arising from $\rho_{\text{uniform}}$. The color of the markers represents the value of $ u_t(x_i) d(x_i)$ at each node. We observe in both rows that $u_t(x) d(x)$ approaches the steady state of the corresponding continuum PDE (\ref{exactMaxwellian}), depicted in a thick black line in the fourth column.

The fact that $d(x)u_t(x)$ appears more jagged in the bottom row compared to the top row is due to the smaller value of $\varepsilon$ in the graph weight matrix: see equations (\ref{eq:gaussian}-\ref{epsdef}). Since our sample of the data density in the top row has an isolated node  at $x_i = 1.44$, this leads to a significantly larger value of $\varepsilon$ in the simulations on the top row ($\varepsilon = 0.13$), compared to   the bottom row ($\varepsilon = 0.03$).   

\begin{figure}[h]
 $\beta = 0.0 \hspace{1.9cm} \beta = 0.25 \hspace{2cm} \beta = 0.5 \hspace{2cm} \beta  = 0.75$ \\
\hspace{-1.2cm} \includegraphics[height=3.52cm,trim={.6cm .5cm .6cm .2cm},clip,left]{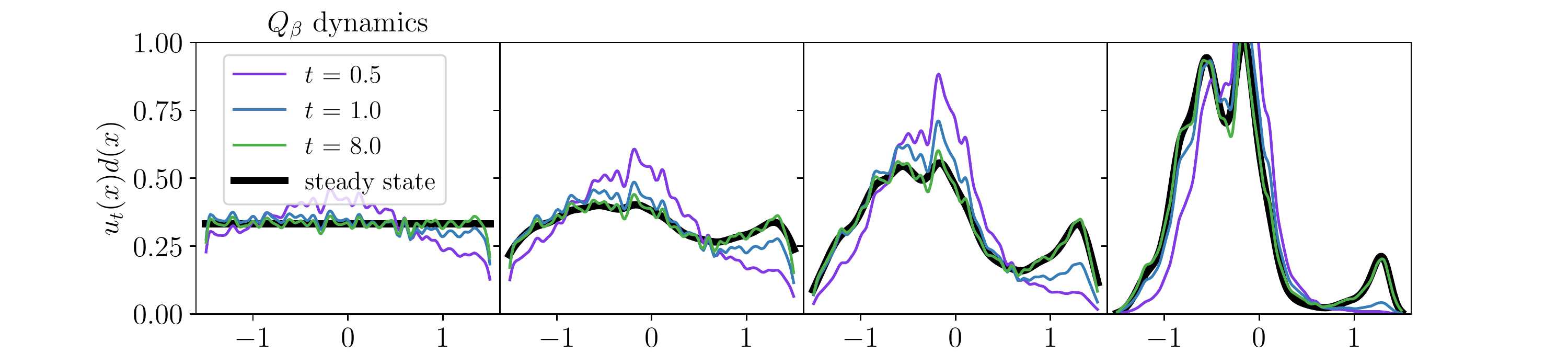} \\
\hspace{-1.2cm}  \includegraphics[height=3.52cm,trim={.6cm .5cm .6cm .2cm},clip,left]{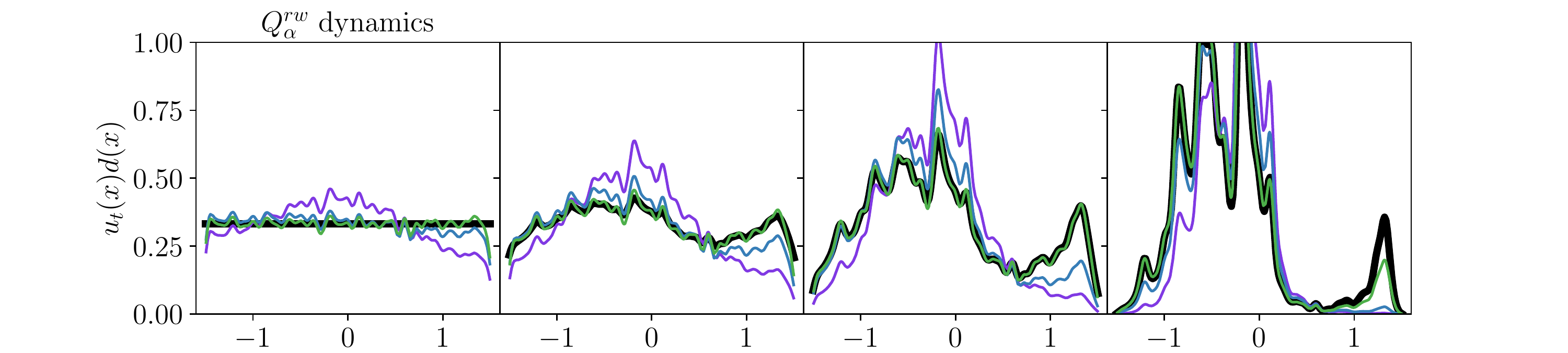} \\
\hspace{-1.2cm} \includegraphics[height=3.81cm,trim={.6cm 0cm .6cm .2cm},clip,left]{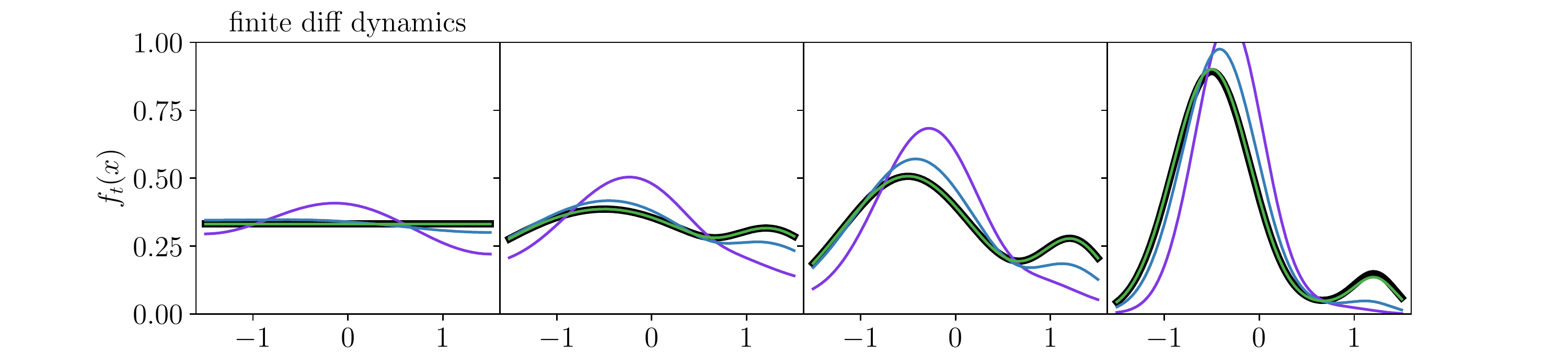} 
\caption{Comparison of  the graph dynamics for $Q_\beta$ (top) and $Q_\alpha$ (middle) with the PDE dynamics (bottom). The  data density is $\rho_{\text{two bump}}$, and the initial data is  $\delta_{x_i}$ for $x_i = -0.1$. The graphs are built from $n=625$ samples of the data density. The steady states  are obtained from equation (\ref{KDEMaxwellian}) for the graph dynamics and equation (\ref{exactMaxwellian}) for the finite difference dynamics.}
\label{comparegdfd1d}
\vspace{-.5cm}
\end{figure}

 \subsubsection{Comparison of Graph Dynamics and PDE Dynamics}
In Figure \ref{comparegdfd1d}, we compare the graph dynamics to the corresponding   Fokker-Planck equation (\ref{e:ctmFP}). We consider the data density given by $\rho_{\text{two bump}}$ and initial condition $\delta_{x_i}$, for $x_i = -0.1$. The graphs are built from $n=625$ samples of the data density, and solutions are plotted at times   $t = 0.5, 1.0, 8.0$. 

The top row illustrates the graph dynamics $u_t(x)d(x)$ arising from the transition rate matrix $Q_\beta$, for $\beta = 0, 0.25, 0.5, 0.75$. The middle row illustrates $u_t(x) d(x)$ arising from $Q^{rw}_\alpha$ for  $\alpha = 1.0, 0.83, 0.5, -0.5$. (The values of $\alpha$ are chosen to give the same balance between drift and diffusion as in the top row; see equation (\ref{betatoalpha}).) The last row shows a finite difference approximation of the   Fokker-Planck equation (\ref{e:ctmFP}).  We compute solutions of the   PDEs using a semidiscrete, upwinding finite difference scheme  on a one dimensional grid, with 200 spatial gridpoints and continuous time. This reduces the PDEs to a system of ODEs, which we then solve using the SciPy odeint method. 

The steady states we plot for the graph dynamics   are given by the following equation
\begin{align} \label{KDEMaxwellian}
 c_{n, \delta, \beta} (\hat{\rho}_\gamma(x)  )^{\beta/(1-\beta)}, \quad \hat{\rho}_\gamma(x)  =  \frac{1}{n} \sum_{y \in \mathcal{X}} \psi_\gamma(x-y)  , \quad \psi_\gamma(x) = \frac{1}{(2 \pi)^{1/2} \gamma} e^{-|x|^2/(2 \gamma^2)} ,
 \end{align}
 where $ c_{n, \delta, \beta}$ is a normalizing constant chosen so the steady state integrates to one over $\Omega$. For the $Q_\beta$ dynamics, we choose the standard deviation $\gamma = \delta$, and for the $Q^{rw}_\alpha$ dynamics, we choose $\gamma = \varepsilon$. Recall that $\hat{\rho}_\delta$ is the kernel density estimator used in the construction of the transition matrix $Q^\beta$; see equation (\ref{KDEkernel}).
  The steady states for the PDE dynamics are given by equation (\ref{exactMaxwellian}).   
 
 Interestingly, even though there is no explicit kernel density estimate of the data in the construction of the transition rate matrix  $Q^{rw}_\alpha$, the above simulations demonstrate better agreement of these dynamics as $t \to +\infty$ with the steady state arising from a kernel density estimate (\ref{KDEMaxwellian}) than with the steady state arising directly from the data density (\ref{exactMaxwellian}). This can be seen by observing the good agreement  at time $t=8.0$ with the solid black line shown in the middle row, rather than the solid black line shown in the bottom row. This suggests that the $Q^{rw}_\alpha$ operator effectively takes a KDE of the data density with bandwidth $\veps>0$, corresponding to the scaling of the weight matrix on the graph.

\begin{figure}[h]
\raggedright 

\hspace{2.1cm} $\beta = 0.25$ \hspace{2.2cm} $\beta = 0.9 $\hspace{2.4cm} $\beta = 1.0$ \hspace{4.2cm}

\includegraphics[height=5.5cm,trim={0.6cm .6cm .6cm .5cm},clip]{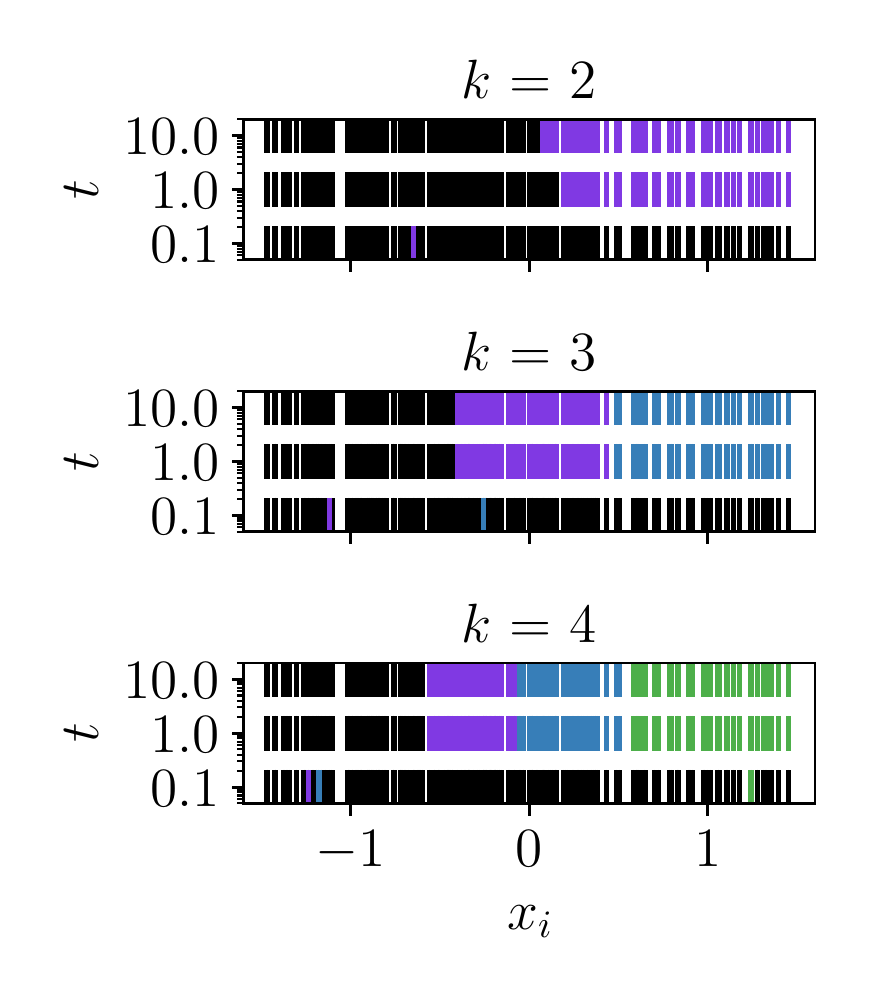} \includegraphics[height=5.5cm,trim={2.25cm .6cm .6cm .5cm},clip]{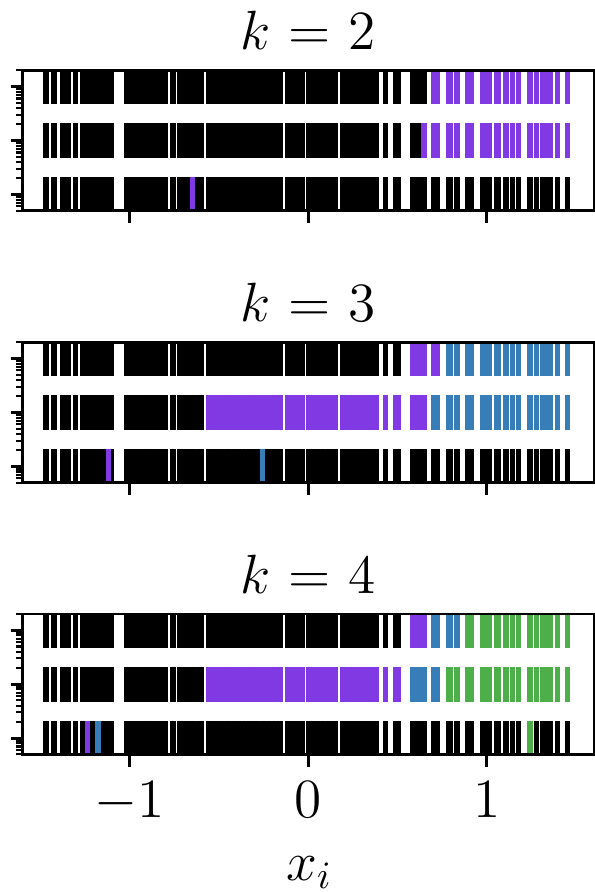} \includegraphics[height=5.5cm,trim={2.25cm .6cm .6cm .5cm},clip]{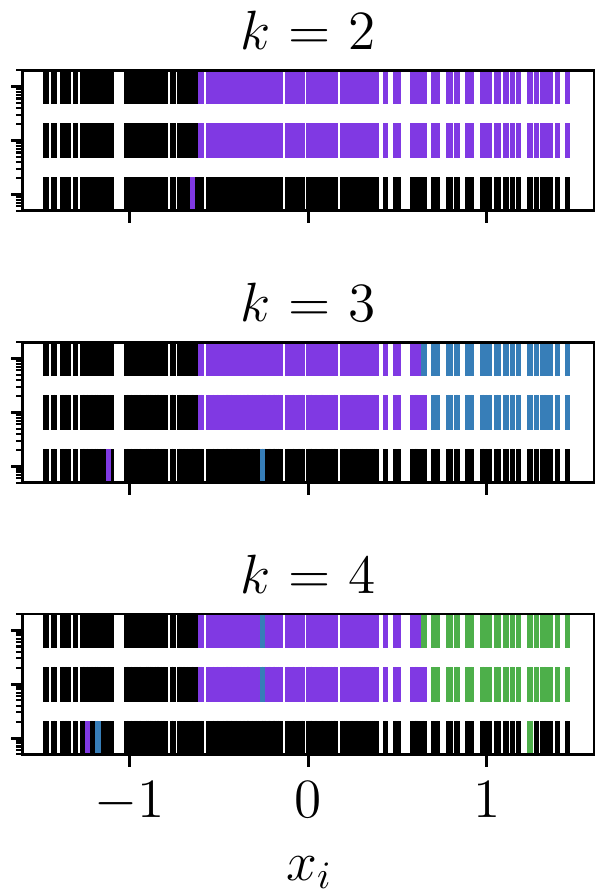}
 \includegraphics[height=5.1cm,trim={.25cm .5cm .4cm .5cm},clip]{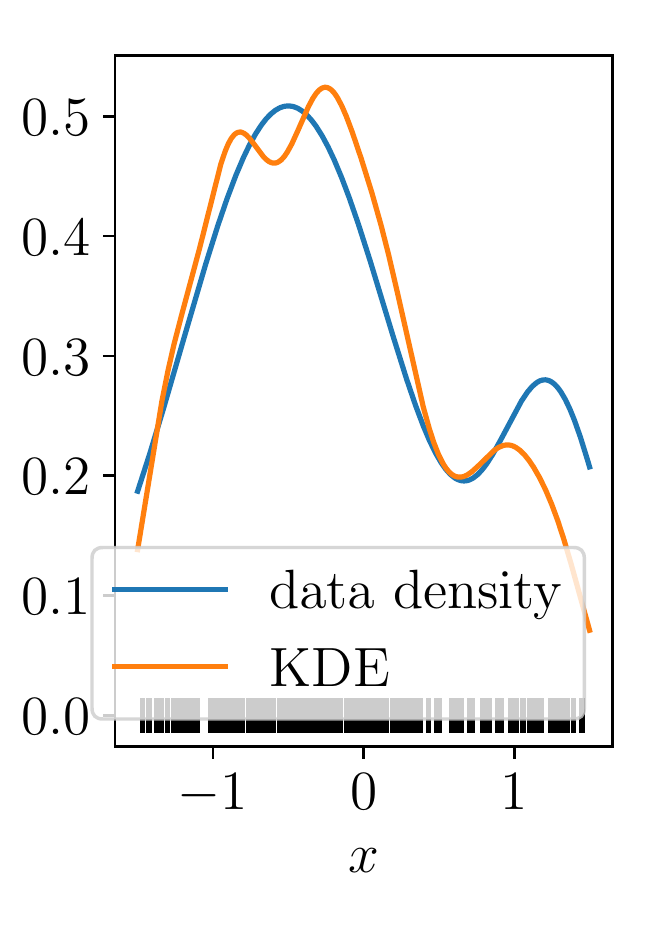}  

\  \\
  
   \includegraphics[height=4cm,trim={.6cm .6cm .6cm .5cm},clip]{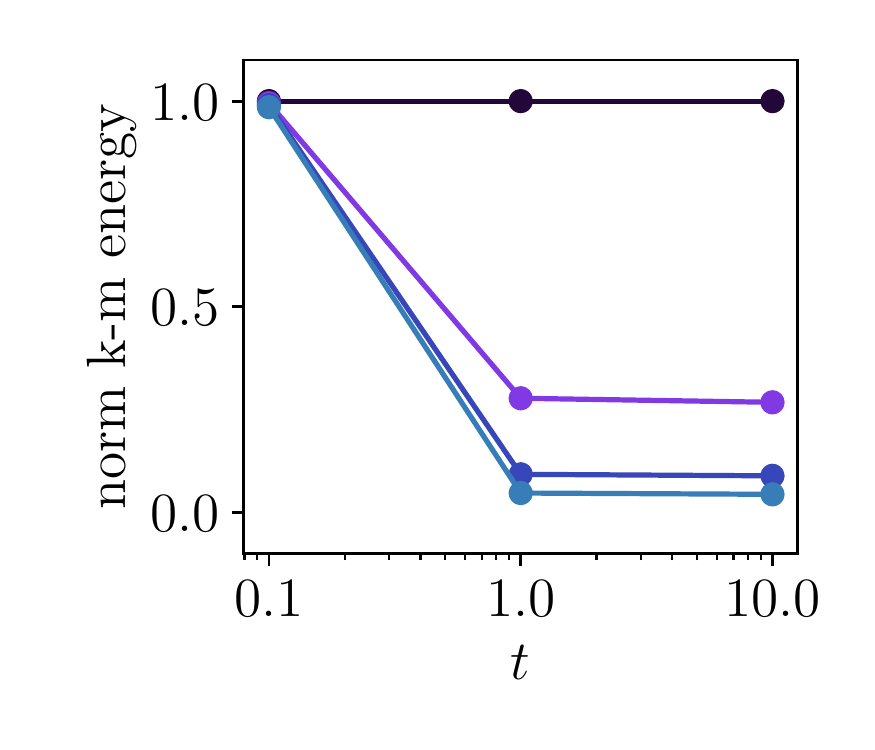}    \includegraphics[height=4cm,trim={2.3cm .6cm .6cm .5cm},clip]{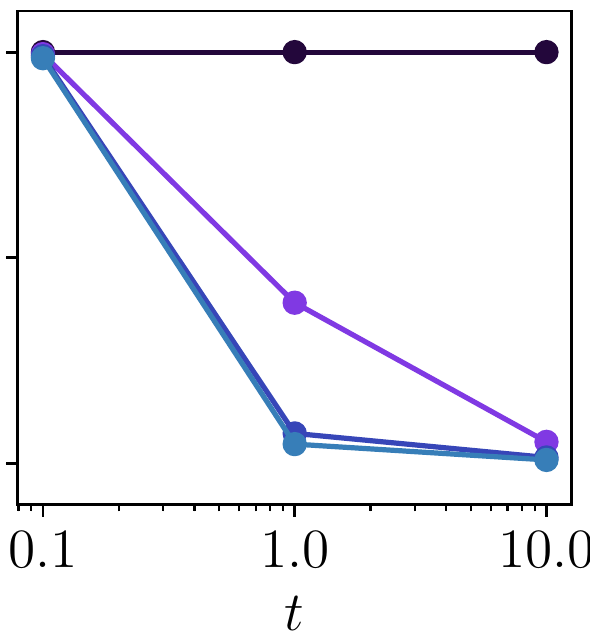}  \includegraphics[height=4cm,trim={2.3cm .6cm .5cm .5cm},clip]{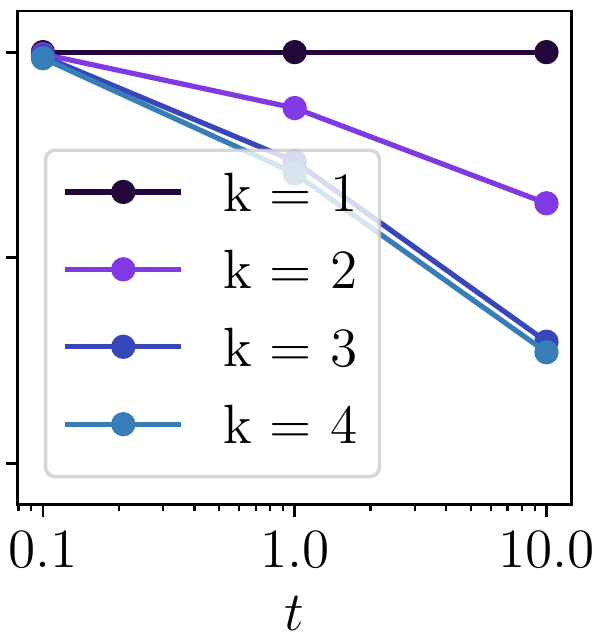} 

  \caption{Clustering performance of the graph dynamics for the transition rate matrix $Q_\beta$.  The top portion of the figure shows the results of the $k$-means clustering algorithm for $k=2,3,4$ (rows) and $\beta = 0.25, 0.9, 1.0$ (columns).  The color of a node indicates the cluster to which it belongs. On the top right, we show the data density from which $n = 204$ nodes are sampled and the KDE of the data density used to construct $Q_\beta$. The bottom of the figure   shows the value of the normalized $k$-means energy.} \label{dynamicclustering}
\end{figure}

\subsubsection{Clustering Dynamics} \label{sec:Clustering Dynamics}
In Figure \ref{dynamicclustering}, we illustrate how the graph dynamics $u_t$ of the transition rate matrix $Q_\beta$ can be used for clustering. The underlying data density is $\rho_{\text{two bump}}$, from which we choose $n =204$ samples. We consider $\beta = 0.25, 0.9$ and $1.0$, corresponding to the three columns of the figure. The top portion of the figure  shows the results of the $k$-means clustering algorithm for $k=2,3,4$. Each plot depicts the data samples at times $t =  10^{-1}, 1, 10$, coloring the samples according to which cluster they belong. The top right panel on the figure shows the data distribution and the kernel density estimate of the data distribution, which is used to construct the transition rate matrix $Q_\beta$. The bottom of the figure shows the value of the $k$-means energy $E_k$ (\ref{kmeansenergy}) for each clustering, normalized so that $k=1$ clustering (all nodes in a single cluster) has energy $E_1 = 1$.

 For all  $\beta$ and $k$, there is poor clustering behavior early in time, $t = 0.1$, suggesting that the Fokker-Planck dynamics have not had time to effectively mix within clusters. This can be seen by comparing the colors of the nodes to the data distribution displayed on the right: a correct clustering should identify one cluster for the large bump  and another cluster for the small bump. This can also be seen by considering the $k$-means energy, which is largest at $t = 0.1$, and shows little variation for different choices of $k$.

 On the other hand, we observe the best clustering performance for $\beta =0.9$ and time $t = 10$. Examining the colors of the nodes for $k=2$ reveals that the correct clusters are found. Furthermore, this clustering remains fairly stable as $k$ is increased. This can also be seen in the $k$-means energy, which shows a substantial decrease from $k=1$ to $k=2$, but remains stable for $k=3,4$, suggesting that two clusters is the correct number of clusters.
 
 While $\beta = 0.25$ and $\beta =1$ do not offer good clustering performance, they do shed light on key properties of our method, once time is sufficiently large to have allowed the dynamics to effectively mix, $t=10$. For example, when $\beta = 0.25$, diffusion dominates the dynamics, so that the density of the data distribution does not play a strong role in clustering. In fact, we see that the clusters are almost entirely driven by the geometry of the data distribution, which is fairly uniform on the domain: when $k=2$ the clusters are essentially even halves of the domain; when $k=3$, they are even thirds; and when $k=4$, they are   even quarters. The lack of awareness of density when $\beta = 0.25$ inhibits correct cluster identification.

We observe the opposite problem when $\beta =1$. In this case, the dynamics are driven entirely by density, with no diffusion. However, the density driving the dynamics is not the exact data density, but the kernel density estimate. Due to noise in the KDE, an artificial local minimum appears near $x=-0.75$, causing $k=2$ to cluster the nodes to the left and right of this local minimum and causing $k=3$ to cluster the nodes into three even groups, separated by the two local minima of the KDE. Unlike in the case $\beta =0.9$, when $\beta =1.0$ there is no diffusion to help the dynamics overcome spurious local minima in the KDE, leading to inferior clustering performance.

We close by considering the role of the  $k$-means energies in identifying the correct number of clusters. First, consider the  case of a  uniform data distribution. In this case, the $k$-means energy for $k=2$ would be $\frac14$ and for $k=3$ would be $\frac{1}{9}$. Consequently, while the correct number of clusters for the uniform data distribution is one, the $k$-means energy still drops significantly as $k$ is increased. For this reason, we caution that looking for the largest drop of the energy alone is not a good criterion for determining the correct number of clusters. Determining the correct number of clusters remains an active area of research, including, for example, the study of gap statistics \cite{tibshirani-gap-01}, in which the energy is compared to the energy one would have if the data were uniform. %

\begin{figure}[h]
\begin{centering}

    \includegraphics[height=2.8cm,trim={.1cm 1.75cm .6cm .5cm},clip]{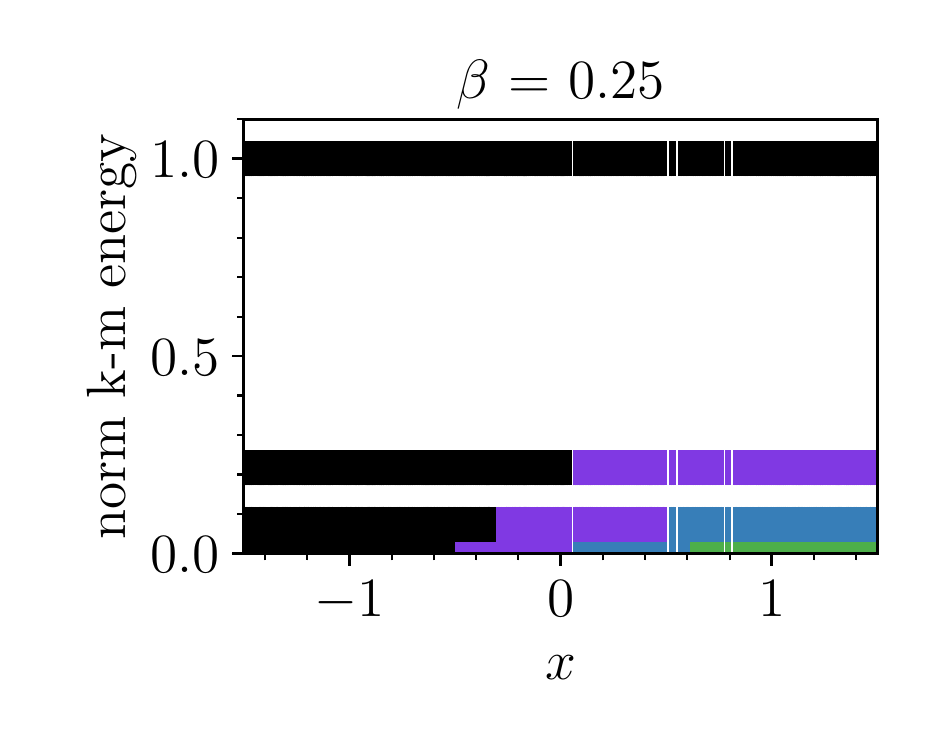}     \includegraphics[height=2.8cm,trim={2.4cm 1.85cm .6cm. .5cm},clip]{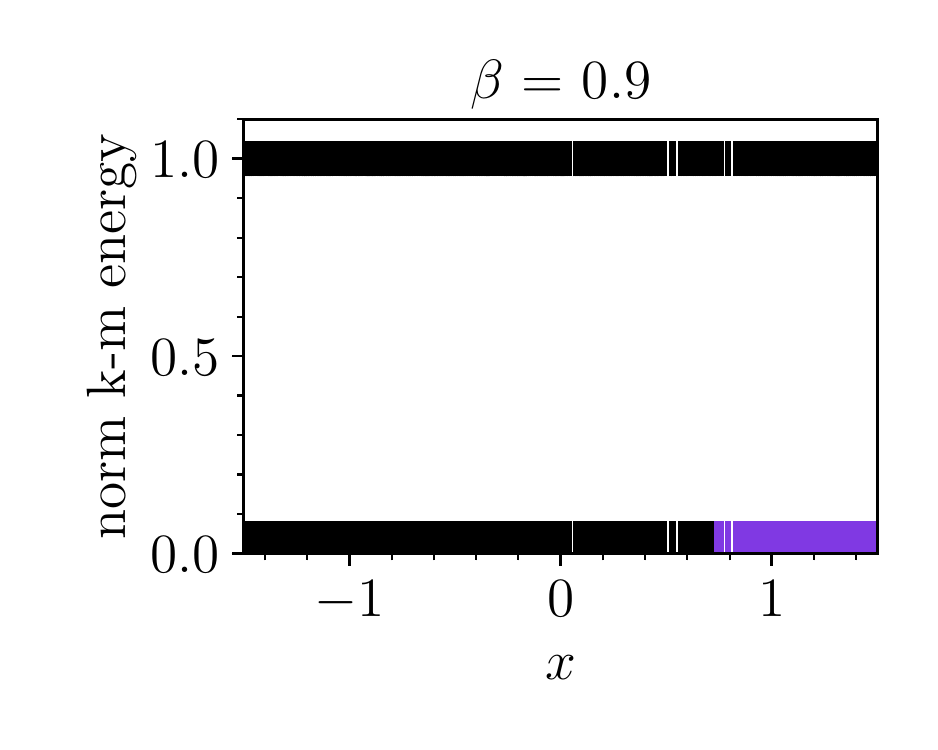}   \includegraphics[height=2.8cm,trim={2.4cm 1.85cm .6cm .5cm},clip]{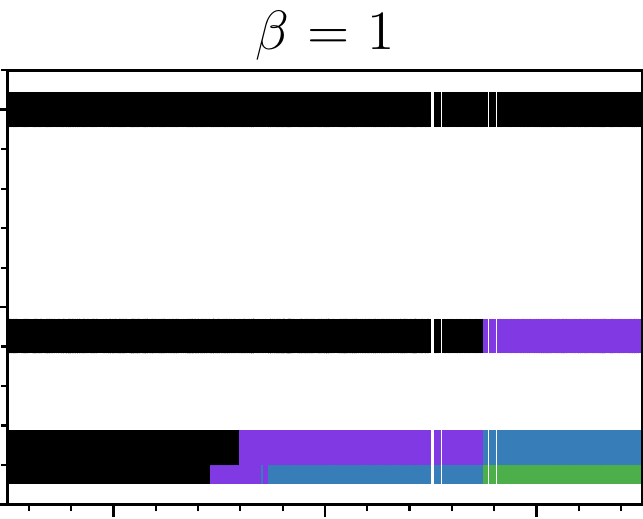} 
  \includegraphics[height=2.8cm,trim={1.3cm 1.85cm .6cm .5cm},clip]{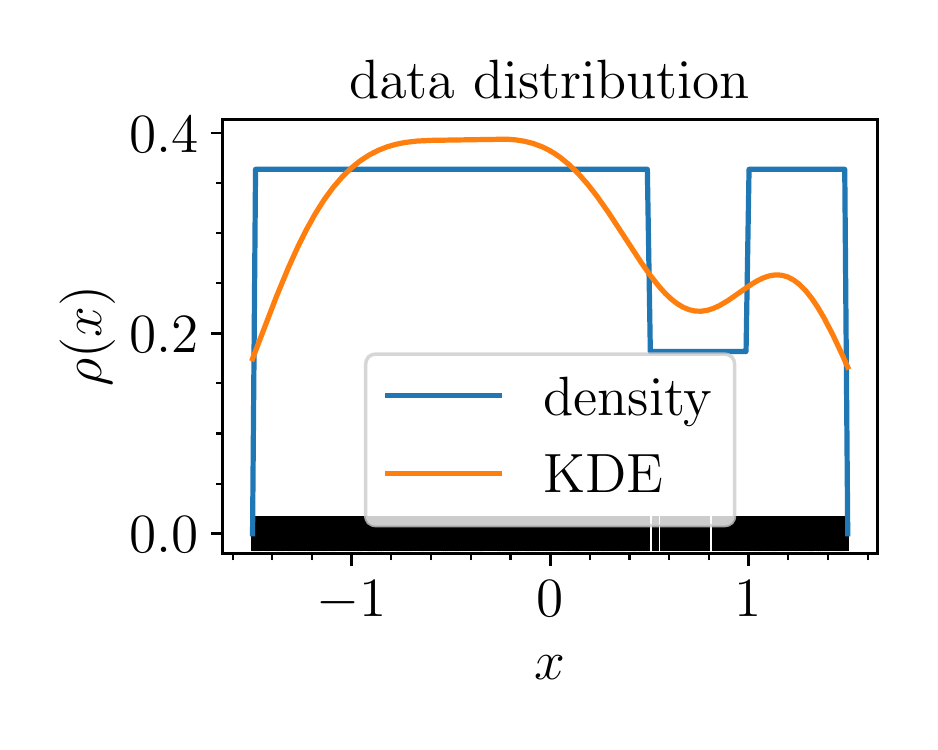}

    \includegraphics[height=2.8cm,trim={.1cm 1.75cm .6cm .5cm},clip]{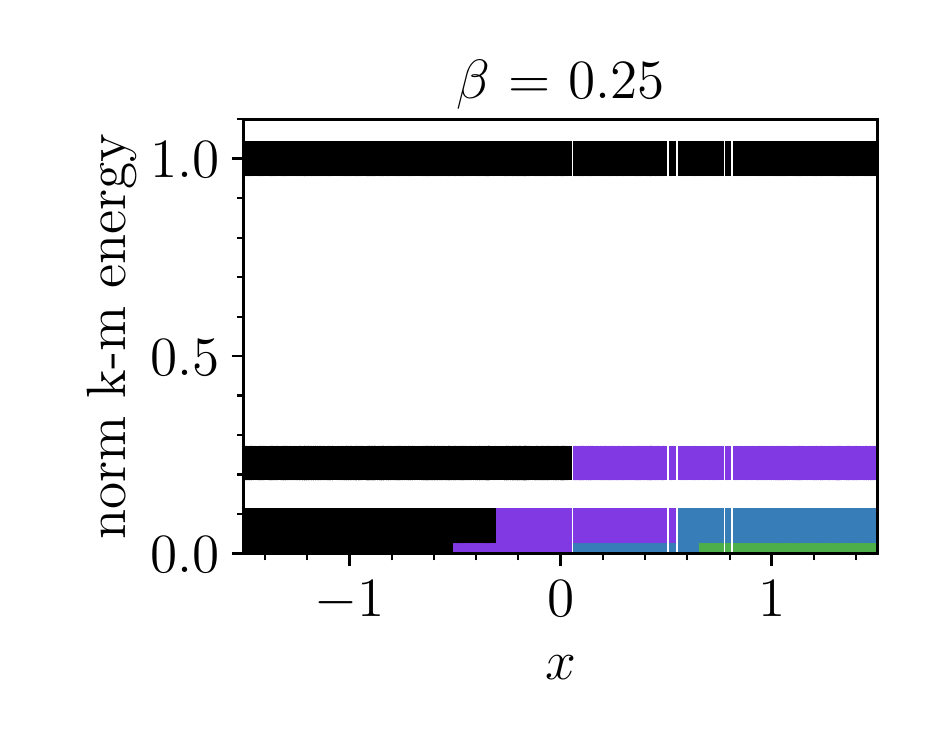}     \includegraphics[height=2.8cm,trim={2.4cm 1.85cm .6cm. .5cm},clip]{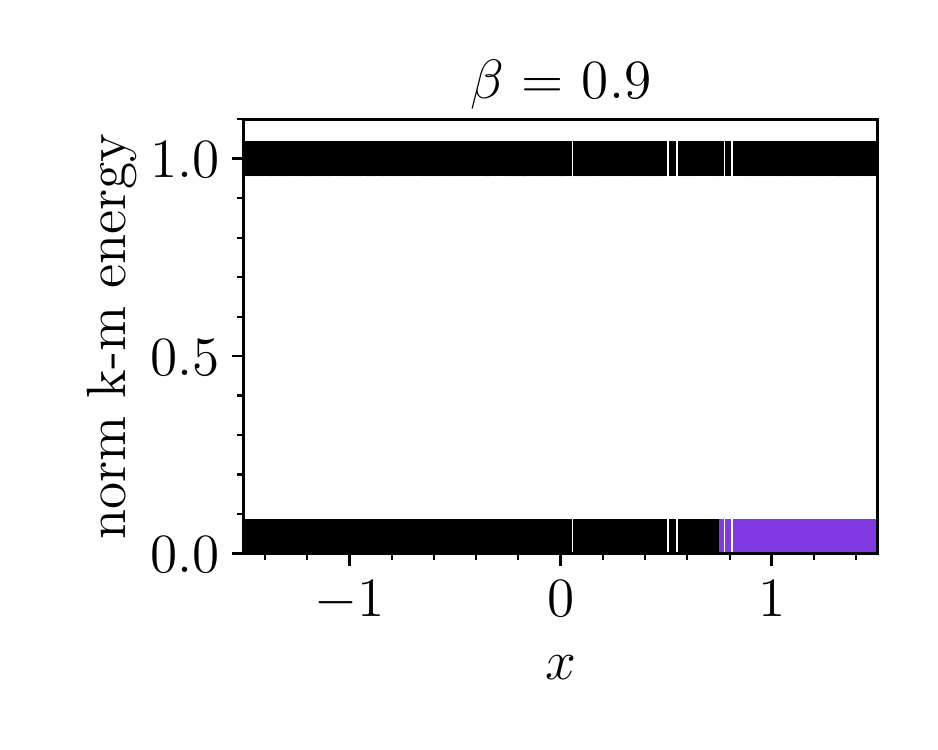}   \includegraphics[height=2.8cm,trim={2.4cm 1.85cm .6cm .5cm},clip]{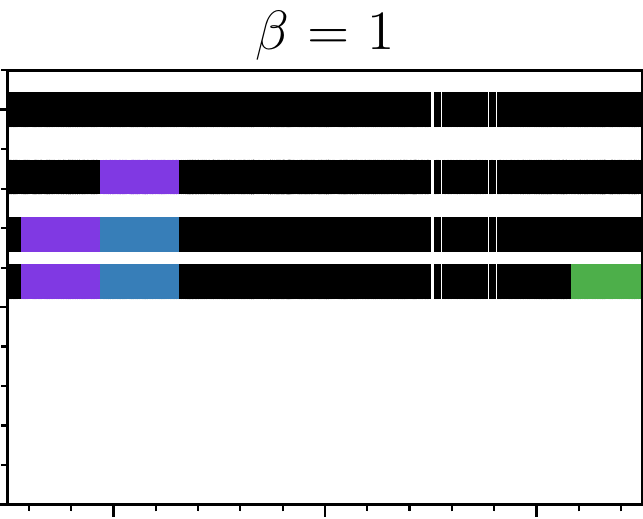} 
  \includegraphics[height=2.8cm,trim={1.3cm 1.85cm .6cm .5cm},clip]{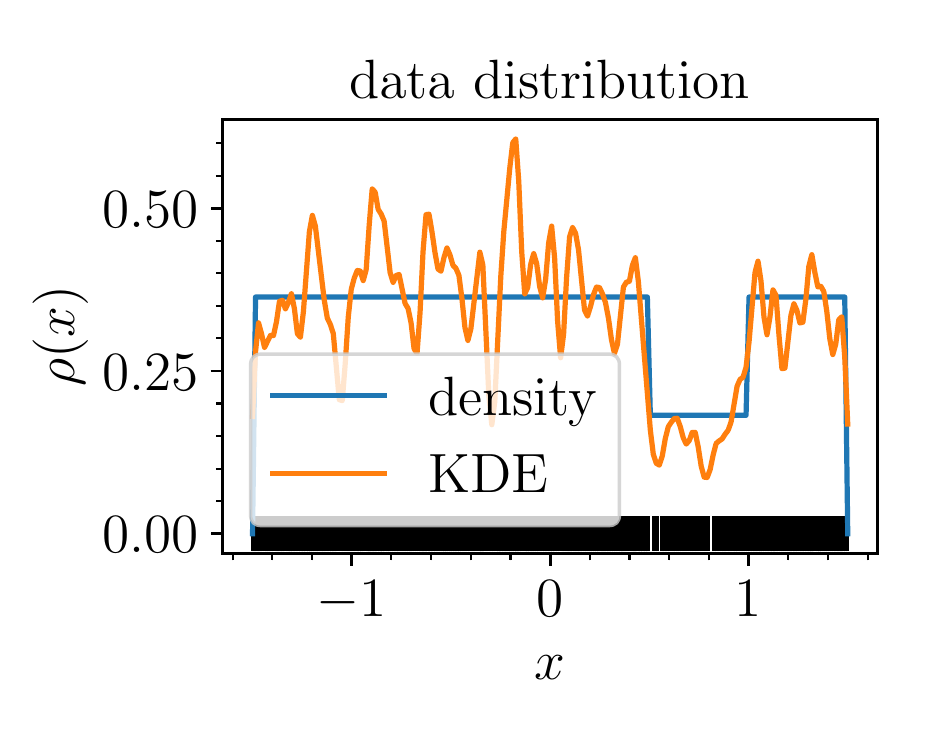}      
         
   \includegraphics[height=3.46cm,trim={.1cm .6cm .6cm .5cm},clip]{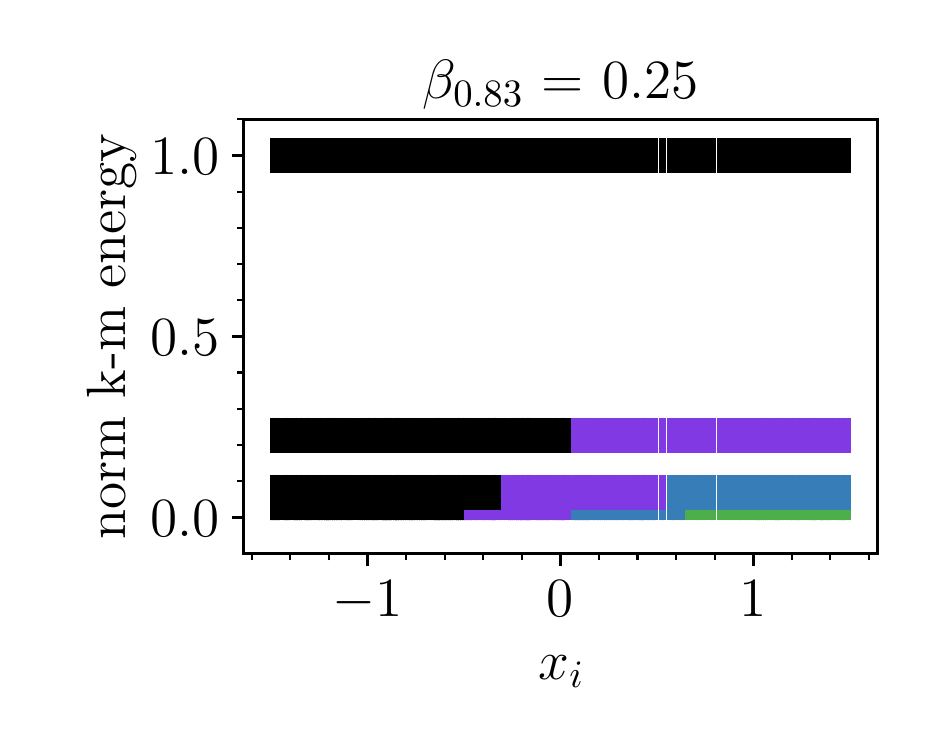}     \includegraphics[height=3.46cm,trim={2.4cm .6cm .6cm. .5cm},clip]{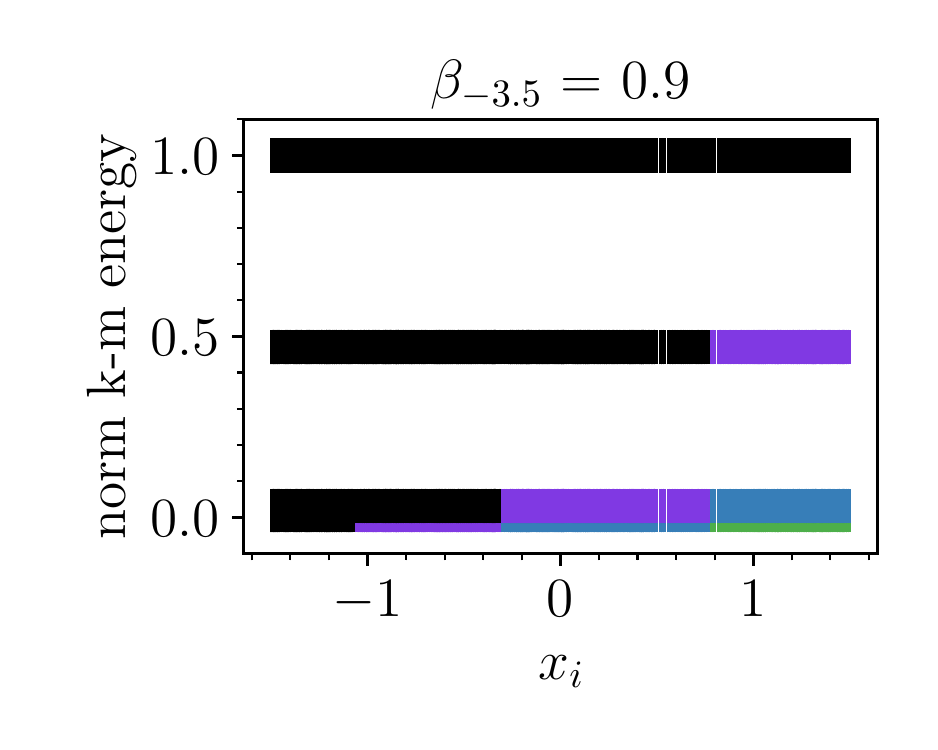}   \includegraphics[height=3.46cm,trim={2.4cm .6cm .6cm .5cm},clip]{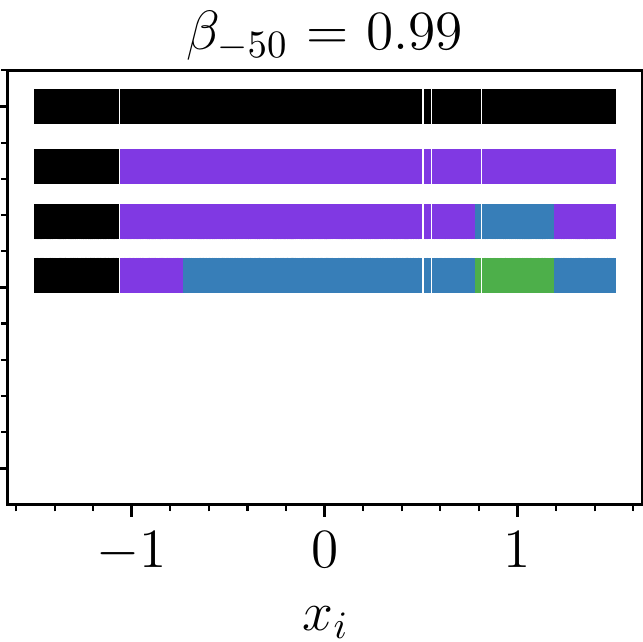} 
  \includegraphics[height=3.46cm,trim={1.3cm .6cm .6cm .5cm},clip]{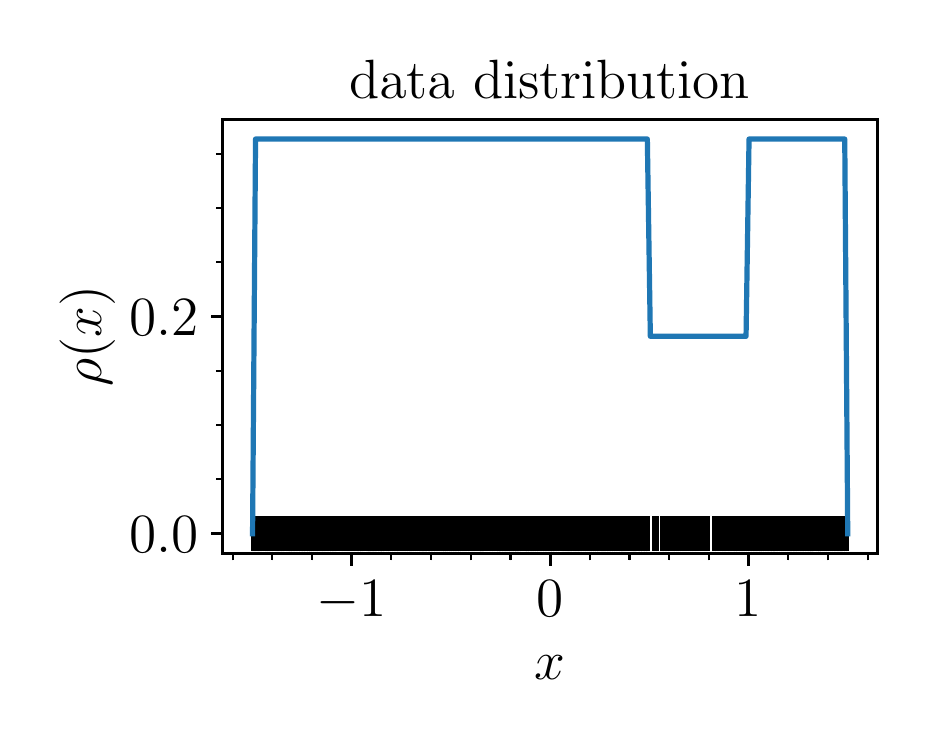}

\end{centering}
\caption{Effect  of  the bandwidth of the kernel density estimator   on clustering,   for $n = 676$ samples clustered at time $t=30$. 
First two rows show the clustering with $Q_\beta$, \eqref{Qbetadeffinal} for KDE bandwidths $\delta=0.2$ and $\delta=0.015$, while the third shows the dynamics of $Q^{rw}_\alpha$, \eqref{Qalphadeffinal}.
The first three columns show clustering performance for different balances of drift and diffusion, and the fourth column shows the data distribution and kernel density estimate. Note that, since no explicit kernel density estimate is used in the construction of $Q^{rw}_\alpha$, none is shown in the third row. The colors of the samples indicate the clusters to which they belong, and the height of the samples in each frame indicates the value of the normalized $k$-means energy \eqref{Ekmeans}.  The top row of markers in each frame corresponds to  a single cluster ($k=1$), the next one represents two clusters ($k=2$), then three and four clusters.
} \label{mesaKDEweakness}
\end{figure}

\subsubsection{Effect of the Kernel Density Estimate on Clustering} 
Figure \ref{mesaKDEweakness} illustrates the effect that the bandwidth $\delta$ of the kernel density estimate  of the data distribution has on clustering, see equation \ref{KDEkernel}.  The data distribution is given by a piecewise constant function, shown in the rightmost column. The number of samples chosen is $n = 680$, and the clustering is performed at time $t=30$. The graph connectivity parameter $\varepsilon$ is chosen as in equation (\ref{epsdef}), equaling $0.015$. The top two rows show clustering performance for $Q_\beta$ for $\beta = 0.25, 0.9, 1$, and the bottom row shows clustering performance for $Q^{rw}_\alpha$ for $\alpha = 0.83, -3.5,   -50$, where the values of $\alpha$ are chosen to give a comparable balance between drift and diffusion at the level of the continuum PDE; see equations (\ref{e:ctmFP}-\ref{betatoalpha}).

The first three columns shows the clustering results for   $\beta = 0.25, 0.90,  1.00$. The color of a marker indicates the cluster to which it belongs, and the height of the marker in the frame represents the value of the normalized $k$-means energy  (\ref{kmeansenergy}). Since the normalized $k$-means energy is  decreasing in $k$,  the top row of markers in each frame corresponds to  a single cluster ($k=1$), the next one represents two clusters ($k=2$), then three and four clusters.

In the top row, the bandwidth of the kernel density estimate  used to construct $Q_\beta$  is $\delta = 0.20$, and in the middle row, $\delta  = 0.015$. The effect of the bandwidth on the kernel density estimate can be seen in the rightmost column: the larger value of $\delta$ in the top row leads to a more accurate estimator of the data density than the smaller value of $\delta$ in the middle row.  As no explicit kernel density estimate is used to construct the transition rate matrix $Q^{rw}_\alpha$, no estimator is shown in the rightmost column of the bottom row. However, our previous numerical simulations in Figure \ref{comparegdfd1d} suggest that the dynamics of $Q^{rw}_\alpha$ most closely match the continuum Fokker-Planck equation with a steady state induced by a kernel density estimate with bandwidth $\delta = \varepsilon$ (\ref{KDEMaxwellian}). This is the motivation behind our choice of $\delta = 0.015 = \varepsilon$ in the second row, since it provides the closest comparison between the clustering dynamics of $Q^\beta$ and $Q^{rw}_\alpha$. Finally, we note that the choice of $\delta$ we suggest in equation (\ref{typicaldelta}) would lead to the choice $\delta = 0.07$, which is between the values considered 
in the top and middle rows of the figure, and leads to very similar performance for $\beta<1$.

In the top row, when the bandwidth in the KDE is   large, we observe good  clustering  performance  for $\beta = 0.9$ and $1.0$ and $k=2$.   On the other hand, $\beta =0.25$ performs poorly, since the large amount of diffusion cause the dynamics to ignore the changes in relative density and cluster   based on the fairly uniform geometry of the sampling. In the middle row, when the bandwidth in the KDE is small, we still observe good performance for $\beta =  0.9$, though $\beta =1.0$ clusters poorly: without diffusion, the dynamics cluster based on spurious local {maxima}. As before, $\beta =0.25$ identifies incorrect clusters, since it lacks information  about density. Finally, as we expected, the clustering performance in the bottom row is similar to the middle row, due to the fact that the bandwidth in the middle row was chosen to match the bandwidth of the implicit kernel density estimate which appears to drive the dynamics of $Q^{rw}_\alpha$. Note that, for the bottom row, the only way to increase the bandwidth of the implicit kernel density estimate would be to increase the graph connectivity parameter $\varepsilon$, which,  for compactly supported graph weights, would lead to a more densely connected graph and thus higher computational cost.

 \begin{figure}[h]
\begin{centering}
 
  \includegraphics[height=2.8cm,trim={.1cm 1.75cm .6cm .5cm},clip]{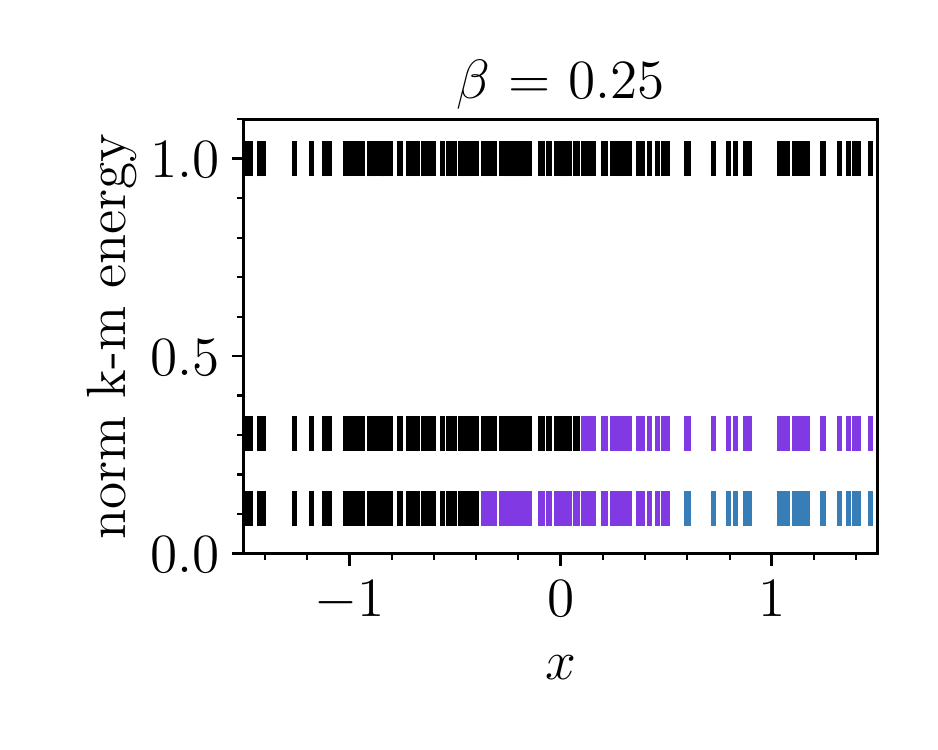}     \includegraphics[height=2.8cm,trim={2.4cm 1.85cm .6cm .5cm},clip]{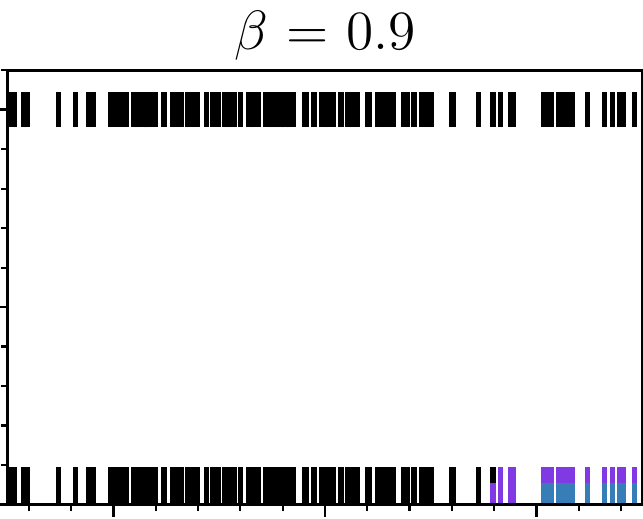}    \includegraphics[height=2.8cm,trim={2.4cm 1.85cm .6cm .5cm},clip]{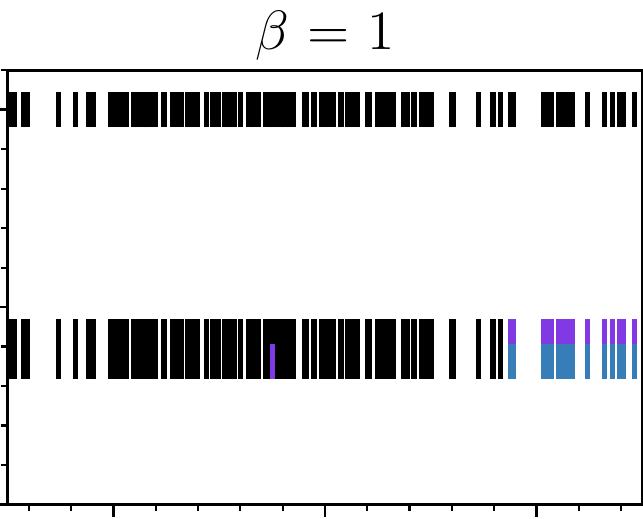} 
  \includegraphics[height=2.8cm,trim={1.3cm 1.85cm .6cm .5cm},clip]{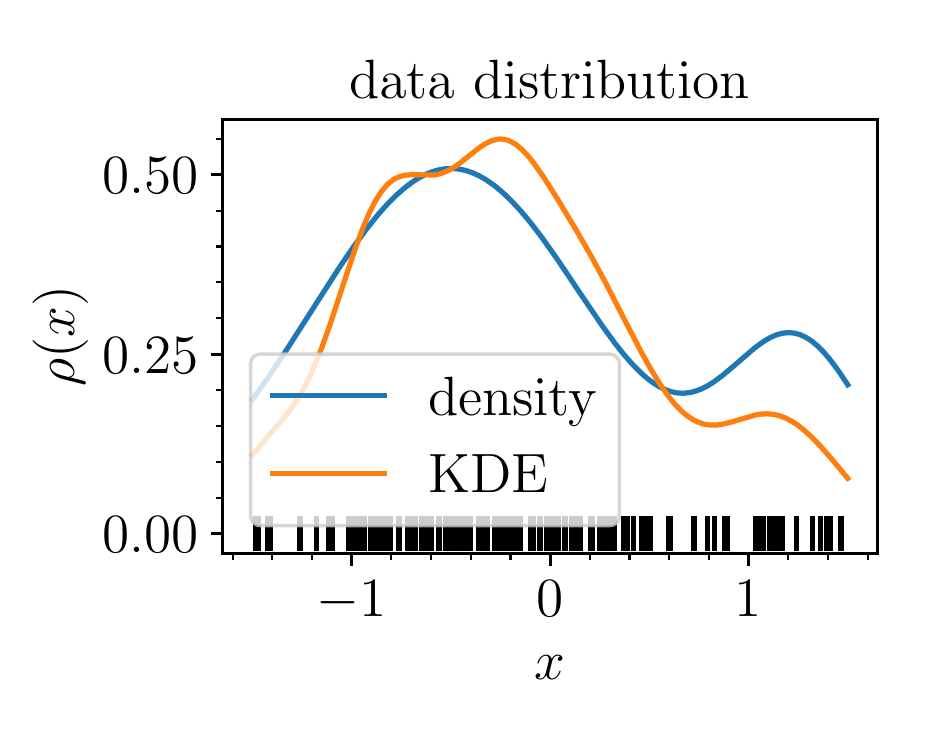}

  \includegraphics[height=2.8cm,trim={.1cm 1.75cm .6cm .5cm},clip]{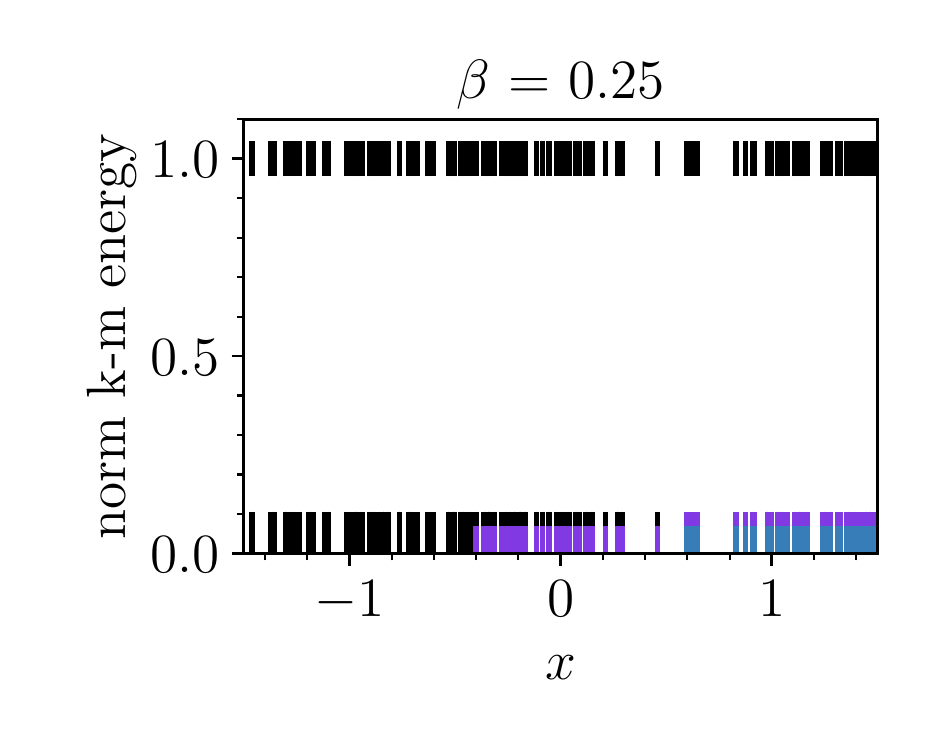}     \includegraphics[height=2.8cm,trim={2.4cm 1.85cm .6cm .5cm},clip]{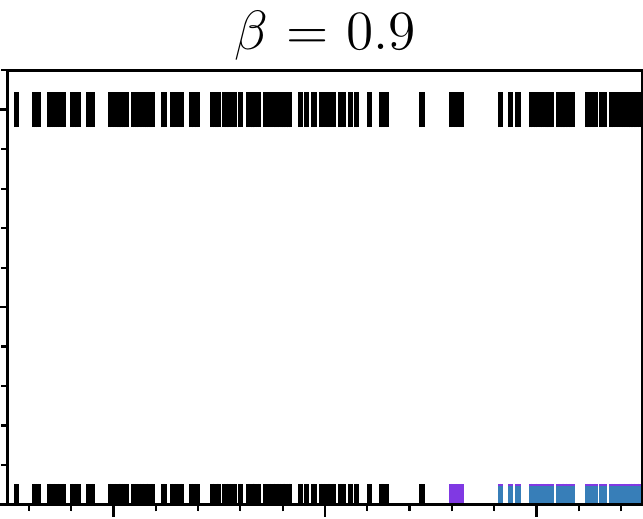}    \includegraphics[height=2.8cm,trim={2.4cm 1.85cm .6cm .5cm},clip]{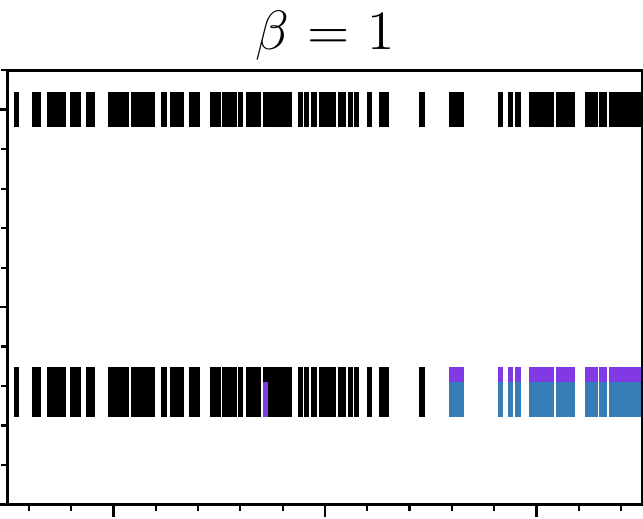} 
  \includegraphics[height=2.8cm,trim={1.3cm 1.85cm .6cm .5cm},clip]{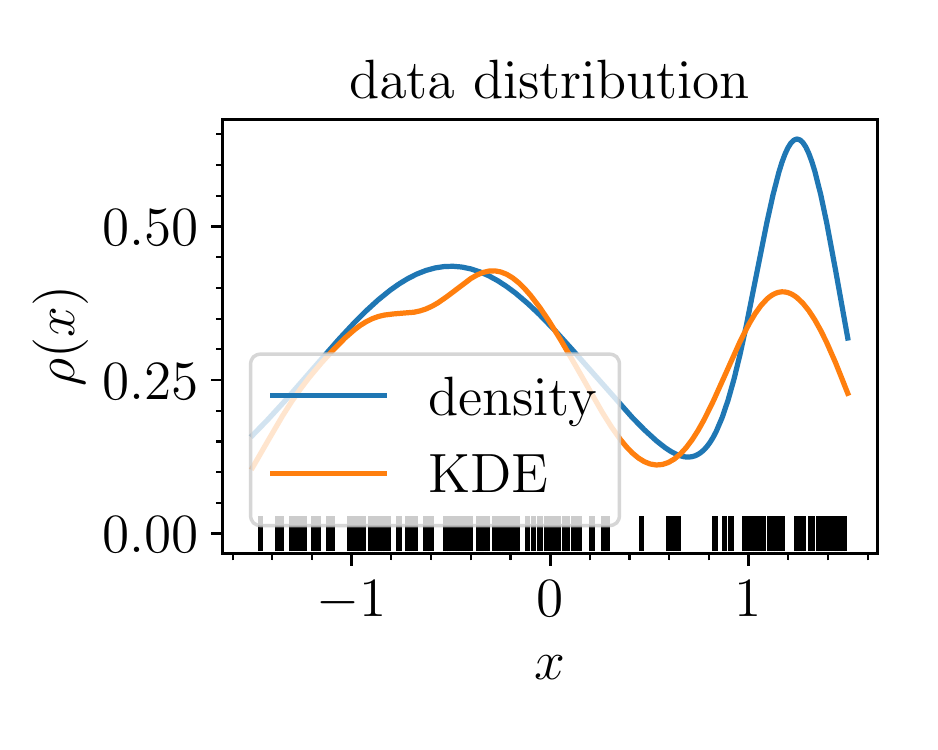}

   \includegraphics[height=3.46cm,trim={.1cm .6cm .6cm .5cm},clip]{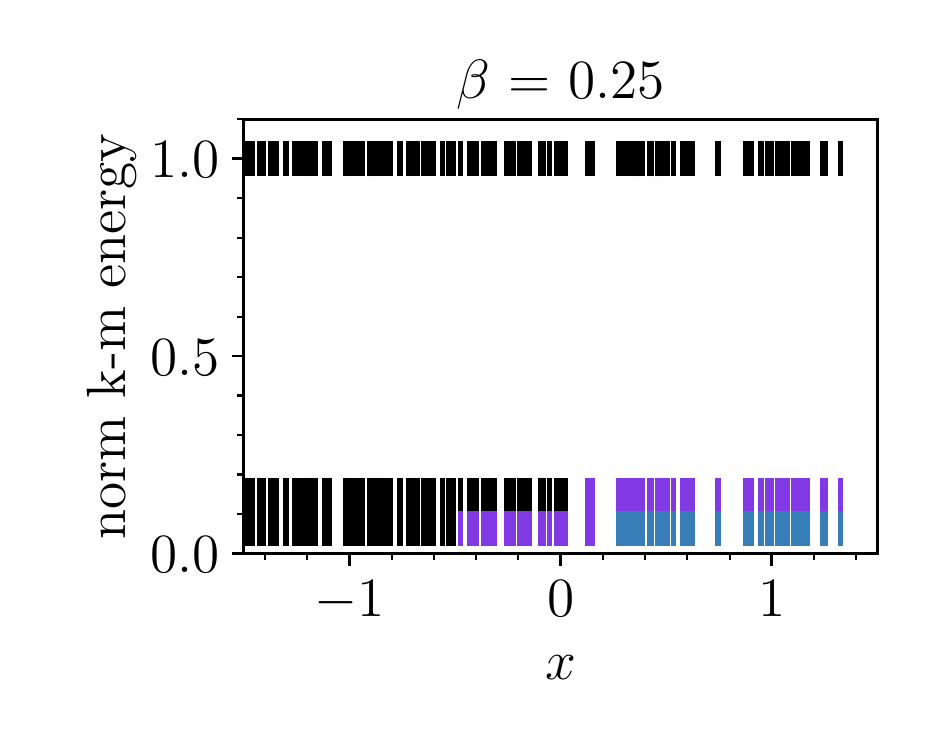}     \includegraphics[height=3.46cm,trim={2.4cm .6cm .6cm. .5cm},clip]{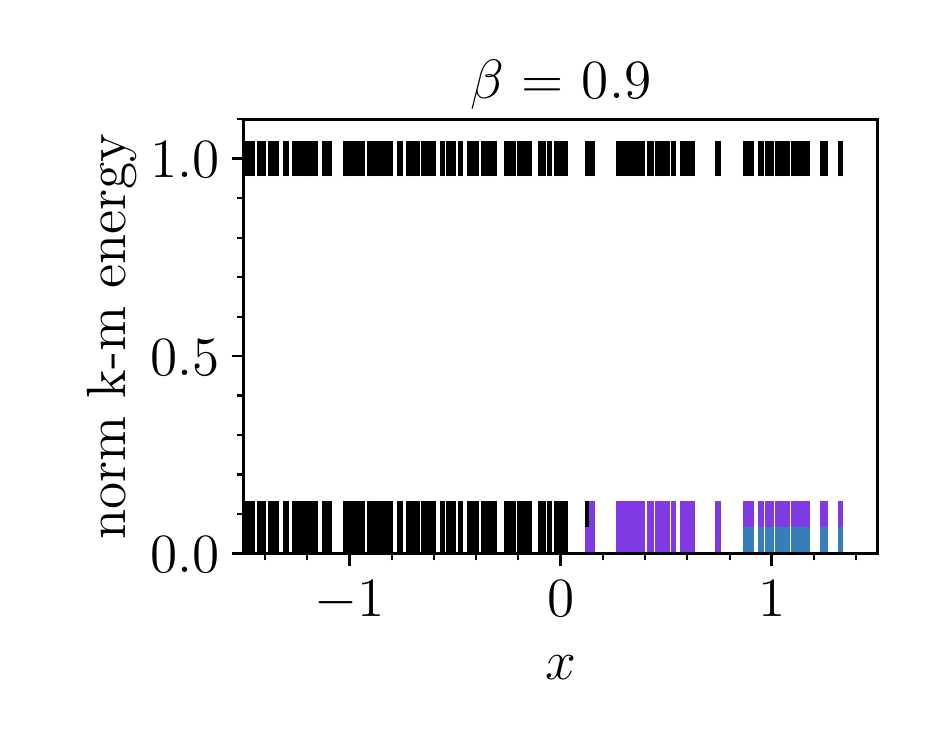}    \includegraphics[height=3.46cm,trim={2.4cm .6cm .6cm. .5cm},clip]{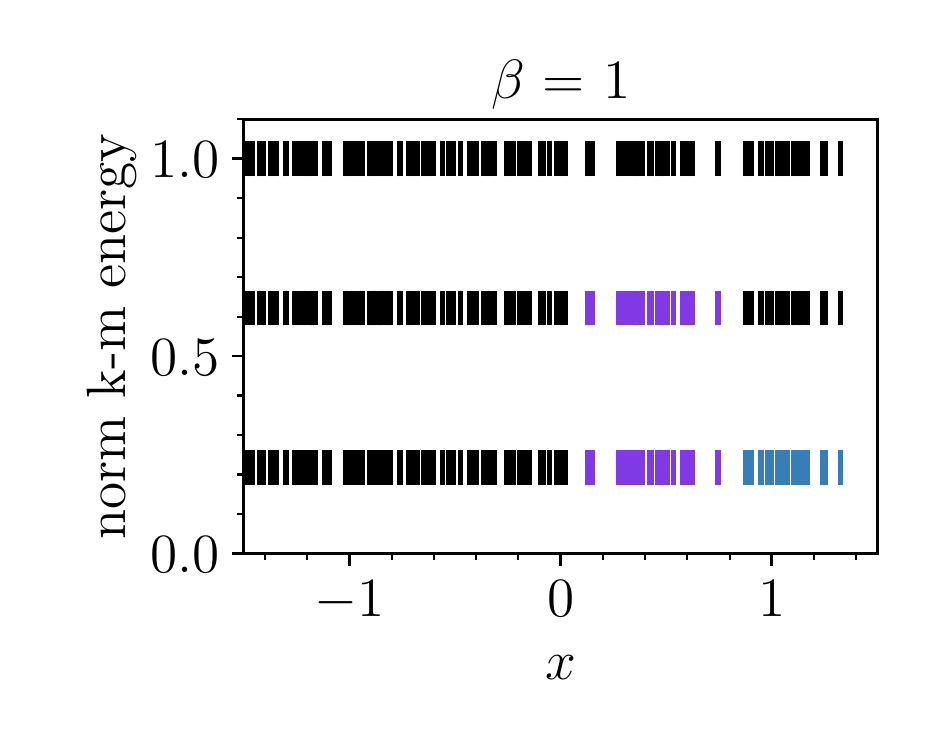} 
  \includegraphics[height=3.46cm,trim={1.3cm .6cm .6cm .5cm},clip]{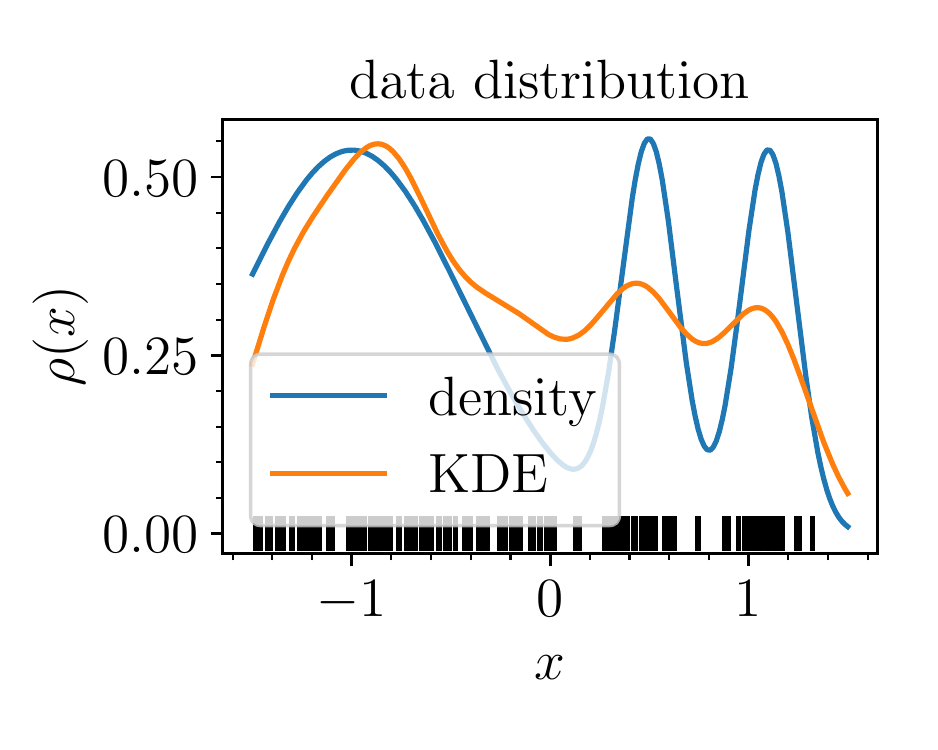}       
\end{centering}
\caption{Illustration of the effect that different choices of data distribution have on the clustering method based on $Q_\beta$ for $n = 160$ samples at time $t=30$. The first three columns illustrate different choices of $\beta$,  where the color of a marker  indicates the cluster it belongs to for $k=1,2,3$, and the height of the marker  in the frame represents the value of the normalized $k$-means energy.} \label{differentdataup}
\end{figure}

\subsubsection{Effect of data distribution on clustering} 

Figure \ref{differentdataup} illustrates the effect that different choices in data distribution have on  the clustering method based on $Q_\beta$, for $n = 160$ nodes and at time $t=30$. Each row considers a different data distribution: $\rho_{\text{twobump}}$ (see equation \ref{twobumpdef}) and
\begin{align*}
\rho_{\text{deep valley}}(x) &= 7c_0\varphi_{0.5}(x+0.5)+3c_0\varphi_{0.15}(x-1.25) ,\\
 \rho_{\text{three bump}}(x) &= c_1\varphi_{0.1}(x-0.5) +c_1 \varphi_{0.1}(x-1.1) + 4c_1\varphi_{0.4}(x+1) ,
\end{align*}
where $c_0,c_1>0$ are normalizing constants chosen so the densities integrate to one on $\Omega = [-1.5,1.5]$. The right column shows the data density, the sample of $n=160$ nodes, and the kernel density estimate used to construct the transition rate matrix $Q_\beta$.
The first three columns shows the clustering results for   $\beta = 0.25, 0.90,  1.00$. The color of a marker indicates the cluster to which it belongs for $k=1,2,3$, and the height of the marker in the frame represents the value of the normalized $k$-means energy  (\ref{kmeansenergy}). 

In the top row, for $\rho_{\text{two bump}}$, we observe good clustering performance for all $\beta \geq 0.9$ and poor performance for $\beta =0.25$: due to the good behavior of the KDE for this data distribution, problems do not arise as $\beta \to 1$, and as usual, $\beta = 0.25$ suffers due to the dominance of diffusion. In the middle row, for $\rho_{\text{deep valley}}$, we again observe good performance for all $\beta \geq 0.9$, and we even observe good performance for $\beta =0.25$ when $k=2$. This is due to the sparse sampling at the deep valley, which leads to a change in the geometry of the nodes: a gap that even diffusion dominant dynamics can detect. Finally, in the bottom row, for $\rho_{\text{three mountains}}$, we observe good performance for $\beta \geq 0.9$. Again, $\beta = 0.25$ is  able to capture some correct information when $k=2$, due to the sparsity of the data near the left valley, but it fails at the most relevant $k=3$.

\subsubsection{Blue Sky Problem}
\label{sec:BlueSky}
\begin{figure} 
\hspace{-15.5cm} $\beta = 0.20$ \\  \includegraphics[height=3.55cm,trim={.7cm 1.2cm 2.8cm .5cm},clip]{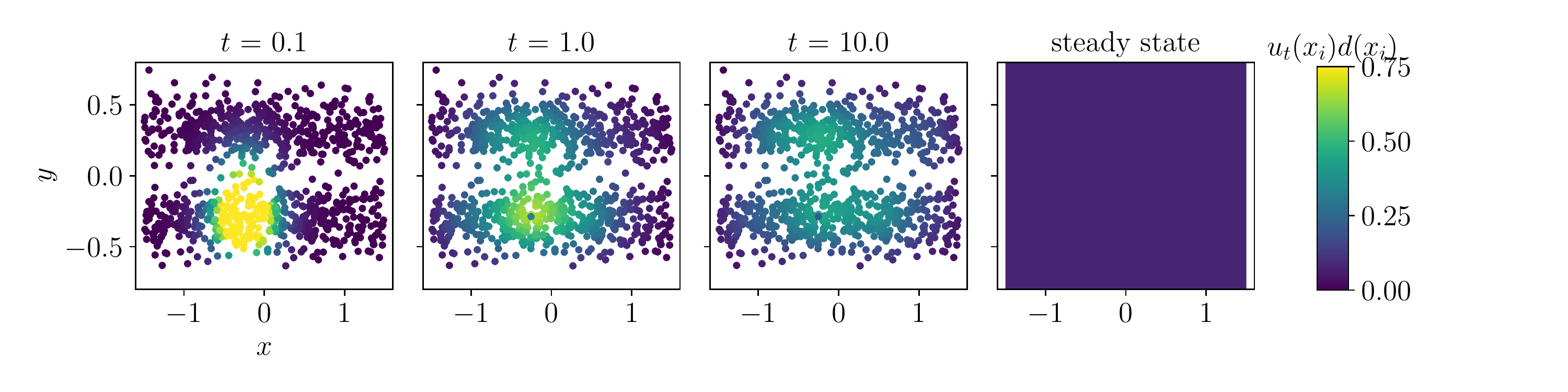} \\
\hspace{-15.5cm} $\beta = 0.95$ \\
\hspace{-1.4cm}  \includegraphics[height=3.5cm,trim={.7cm .6cm 5.2cm 1.15cm},clip]{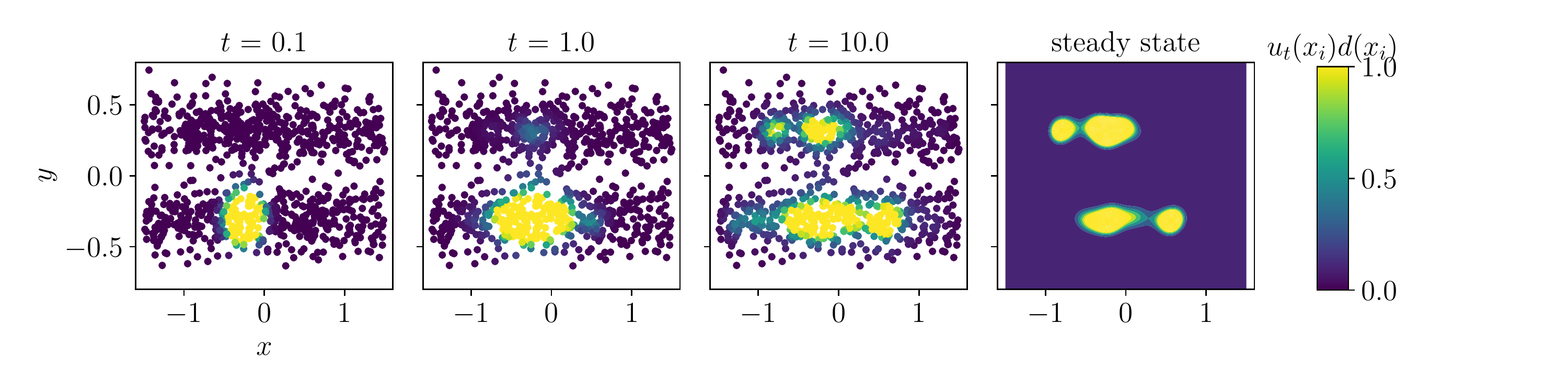} 
\caption{Graph dynamics of $Q_\beta$ on $\rho_{\text{blue sky}}$ for $n = 965$ samples for  $\beta = 0.20$ (top) and $\beta = 0.95$ (bottom). The initial condition is $\delta_{x_i, y_i}$ for $(x_i,y_i) = (-0.26,-0.29)$. In the first three columns, the dots  represent the locations of the samples, and the colors of the markers represent the value of $ u_t(x_i) d(x_i) $. In the fourth column, we plot  the steady state of the corresponding continuum PDE (\ref{KDEMaxwellian}).  } \label{BlueSkyDynamics} 
 
\end{figure}

In Figure \ref{BlueSkyDynamics}, we consider the graph dynamics of $Q_\beta$ on a two dimensional data distribution inspired by the blue sky problem from image analysis, for $n = 965$ samples on the domain $\Omega = [-1.5,1.5] \times [-1,1]$. We choose $ \veps = 0.04$ and $\delta = 0.10$, in order to optimize agreement between the discrete dynamics and the  steady state of the continuum PDE (\ref{KDEMaxwellian}).

In simple terms, the blue sky problem can be described as a setting in which data points are distributed over two elongated clusters that are separated by a narrow low density region. For concreteness, we model this  with a density  of the form:
\begin{align*}
\rho_{\text{blue sky}}(x,y) = \varphi_{1.0}(x)*(\varphi_{0.09}(y-0.32) + \varphi_{0.09}(y+0.32)) .
\end{align*}
In the top row, we choose $\beta = 0.20$, and in the bottom row, we choose $\beta = 0.95$. In both cases, we choose the initial condition for the dynamics to be $\delta_{x_i, y_i}$ for $(x_i,y_i) = (-0.26,-0.29)$. As in Figure \ref{fig:KDEofgraphdynamics}, the markers in the first four columns represent the locations of the samples, which form the nodes of our graph, and the colors of the markers represent the value of $  u_t(x_i)d(x_i)$ at each node. In the rightmost column, we plot  the steady state of the  continuum PDE (\ref{KDEMaxwellian}). We observe good agreement between the graph dynamics and the steady state by time $t= 10.0$. In the case $\beta = 0.95$,   the diffuse profile of the steady state illustrates that there is significant diffusion, in spite of the fact that $\beta$ is close to one.

\begin{figure}
\begin{centering}
\hspace{-.4cm}  $\beta = 0.20$ \hspace{1.2cm} $\beta = 0.95$ \hspace{1.3cm} $\beta =1.00$

\hspace{-1.6cm}   \includegraphics[height=8cm,trim={.6cm .5cm .6cm 5.25cm},clip]{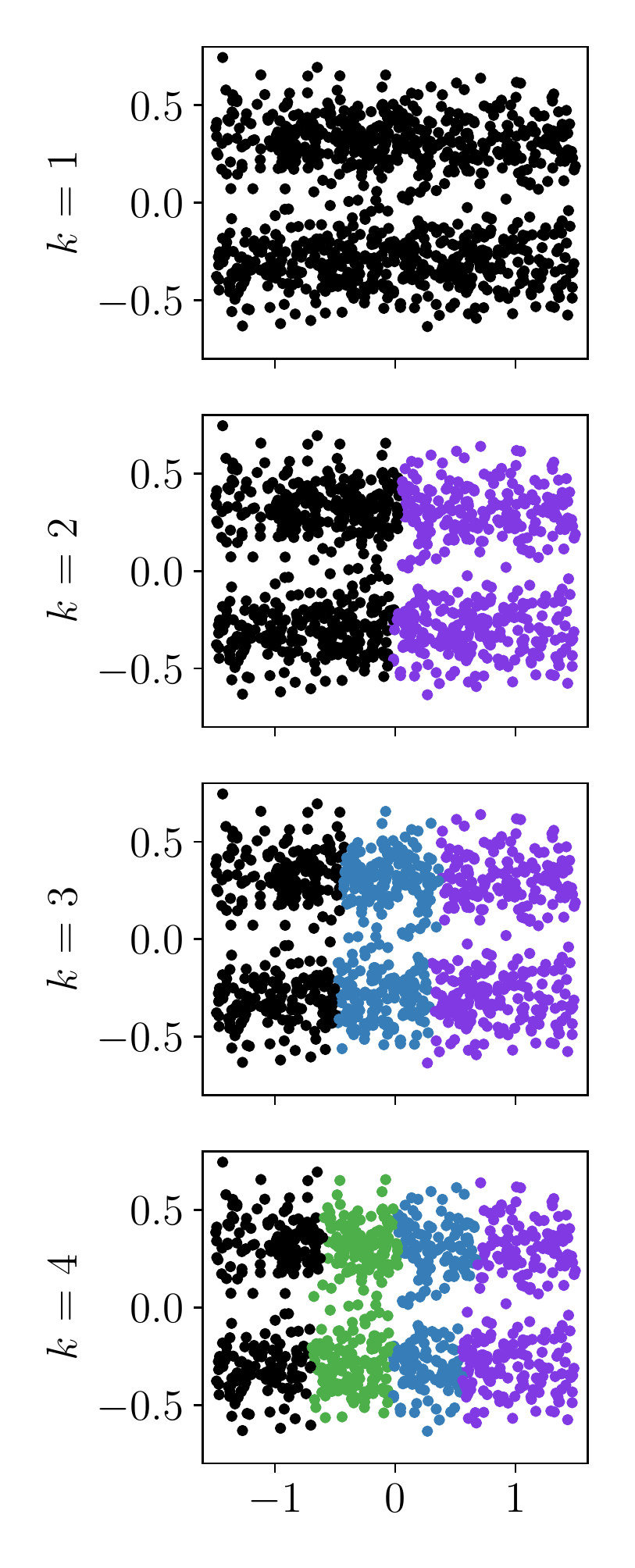} 
\includegraphics[height=8cm,trim={2.5cm .5cm .6cm 5.25cm},clip]{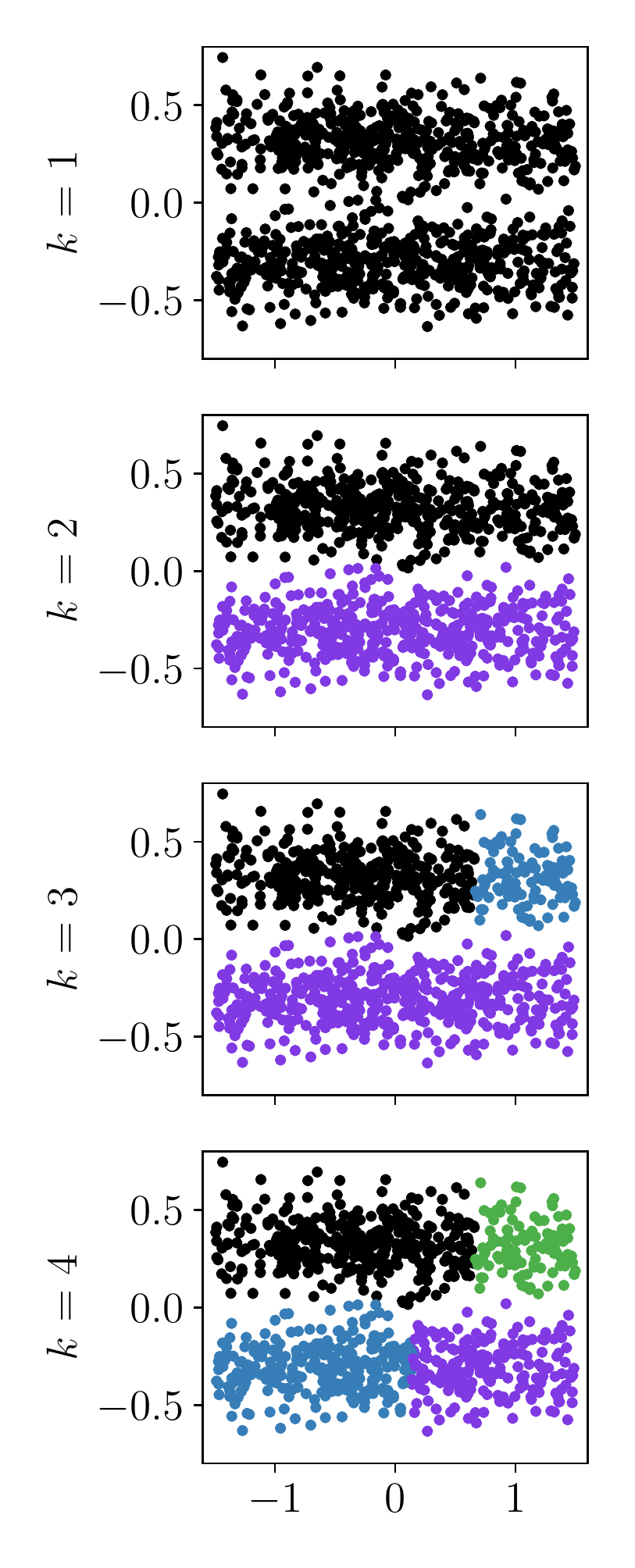} 
\includegraphics[height=8cm,trim={2.5cm .5cm .6cm 5.25cm},clip]{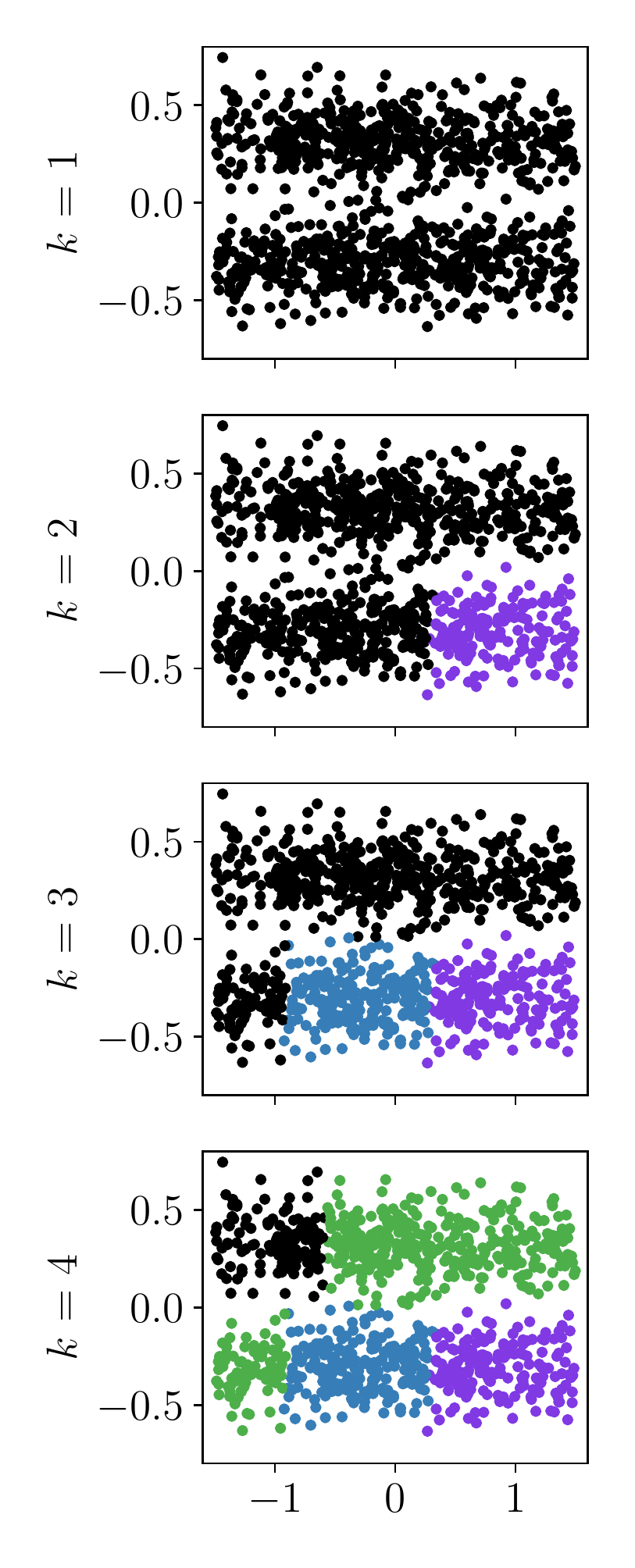} 

\hspace{-1.2cm}       \includegraphics[height=2.1cm,trim={.5cm 1.2cm .9cm 1.1cm},clip]{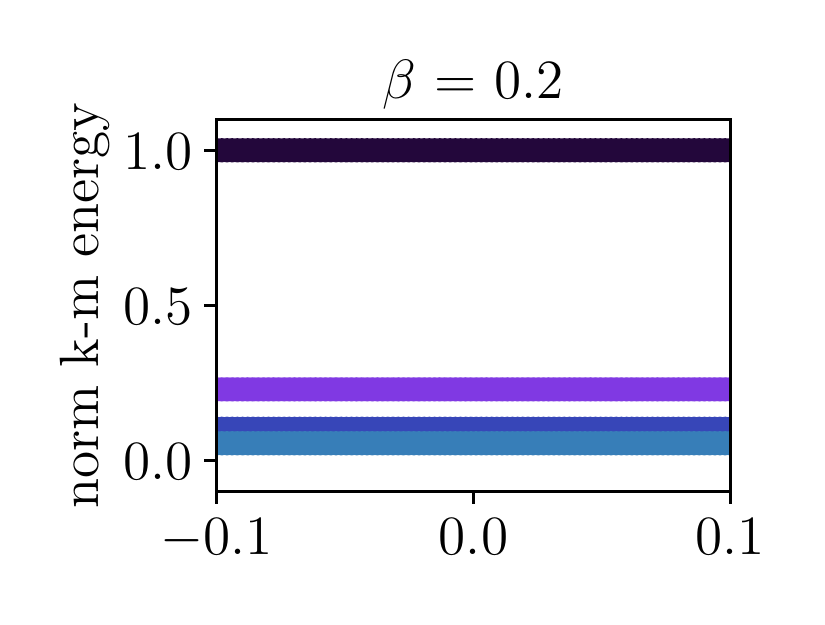}    
    \includegraphics[height=2.05cm,trim={2cm 1.2cm .9cm 1.2cm},clip]{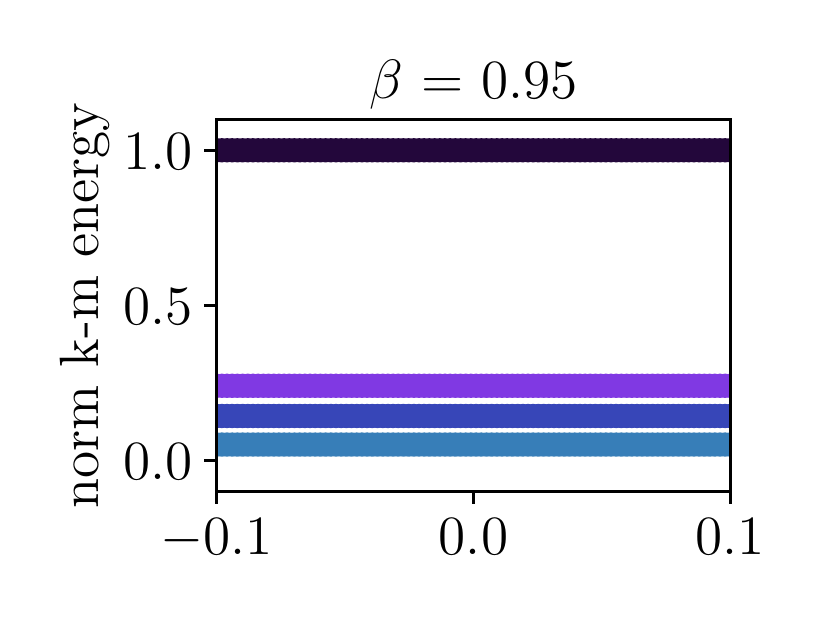}    
    \includegraphics[height=2.05cm,trim={2cm 1.2cm .9cm 1.2cm},clip]{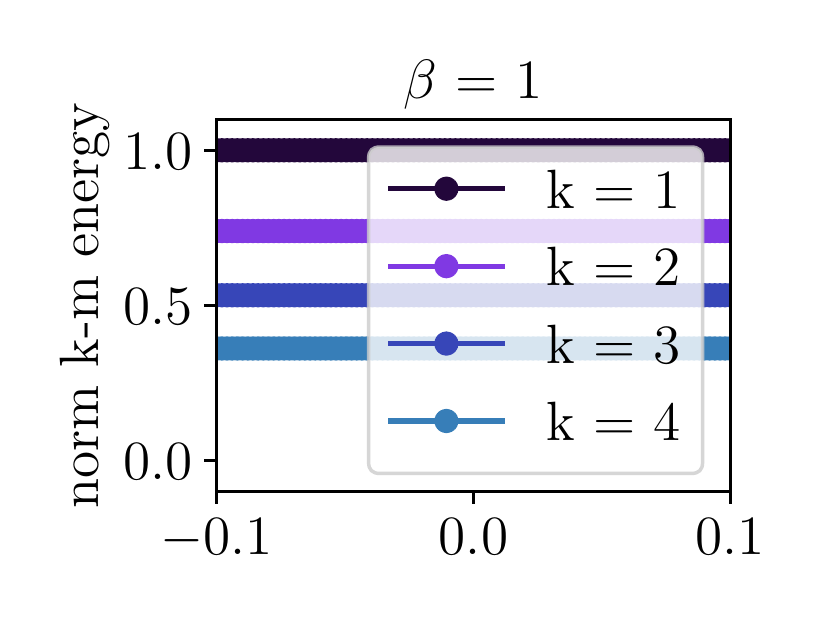}    
\end{centering}
\caption{Illustration of the clustering behavior of   $Q_\beta$ on $\rho_{\text{blue sky}}$ for $n = 965$ samples at time $t = 10$. The first three rows show the clustering behavior for $k=2,3,4$, with each node colored according to which cluster it belongs. The fourth row shows the normalized $k$-means energy for each number of clusters $k$. } \label{blueskyclustering}
\end{figure}

In Figure \ref{blueskyclustering}, we show the clustering behavior of   $Q_\beta$ on $\rho_{\text{blue sky}}$ for $n = 965$ samples at time $t= 10$. The columns correspond to $\beta =0.2, 0.95, 1.0$. The first three rows show the clustering behavior for $k=2,3,4$, with each node colored according to which cluster it belongs. In the fourth row, we show   the normalized $k$-means energy for each choice of $\beta$.

We observe the best clustering performance for $\beta =0.95$ and $k=2$. Furthermore, in this case, the plot of the normalized $k$-means energy  indicates that higher values of $k$ do not lead to significant decreases of the energy, providing further evidence that $k=2$ is the correct number of clusters. The clustering performance deteriorates for both $\beta =0.2$ and $\beta = 1.0$. In the case of $\beta = 0.2$, diffusion dominates and the clustering is based on the geometry of the sample points, preferring to cluster by slicing the sample points evenly in two pieces via the shortest cut through the dataset. In the case of $\beta = 1.0$, we expect that inaccuracies in the kernel density estimation lead to spurious local minima, and in the absence of diffusion to help overcome these local minima, incorrect clusters are found. Note that simply increasing the bandwidth  $\delta$ of kernel density estimate in this case would not necessarily improve performance for $\beta =1.0$, since for a large enough bandwidth, the two lines would merge into one line.

\begin{figure}
\hspace{.7cm}  $\beta = 0.20$ \hspace{1.2cm} $\beta = 0.95$ \hspace{1.3cm} $\beta =1.00$ \\
\includegraphics[height=2.5cm,trim={.6cm 1.2cm .6cm .5cm},clip]{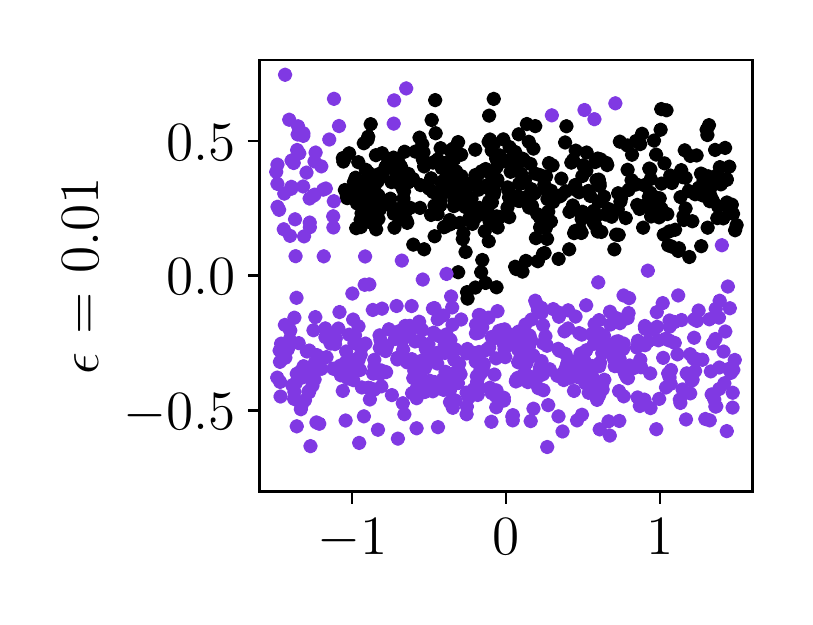}
\includegraphics[height=2.5cm,trim={2.4cm 1.2cm .6cm .5cm},clip]{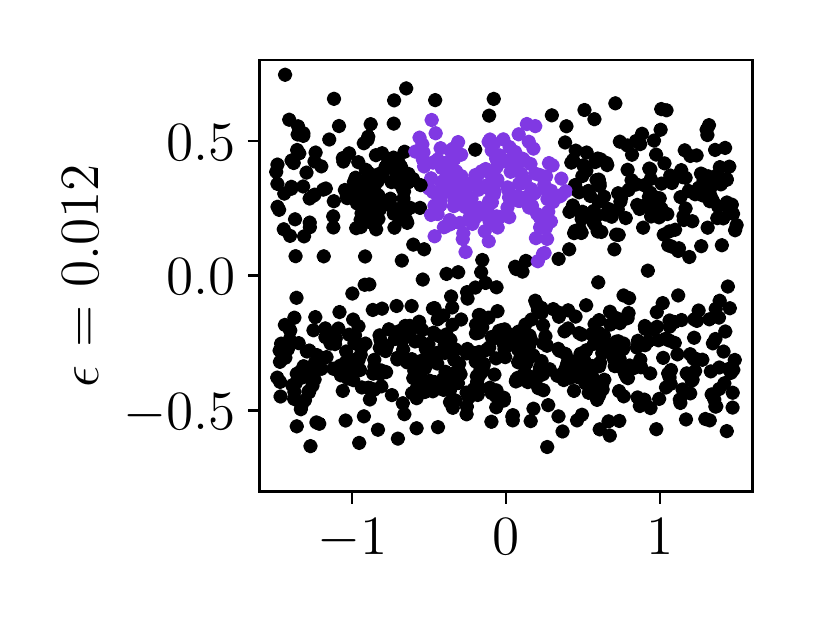} \includegraphics[height=2.5cm,trim={2.4cm 1.2cm .6cm .5cm},clip]{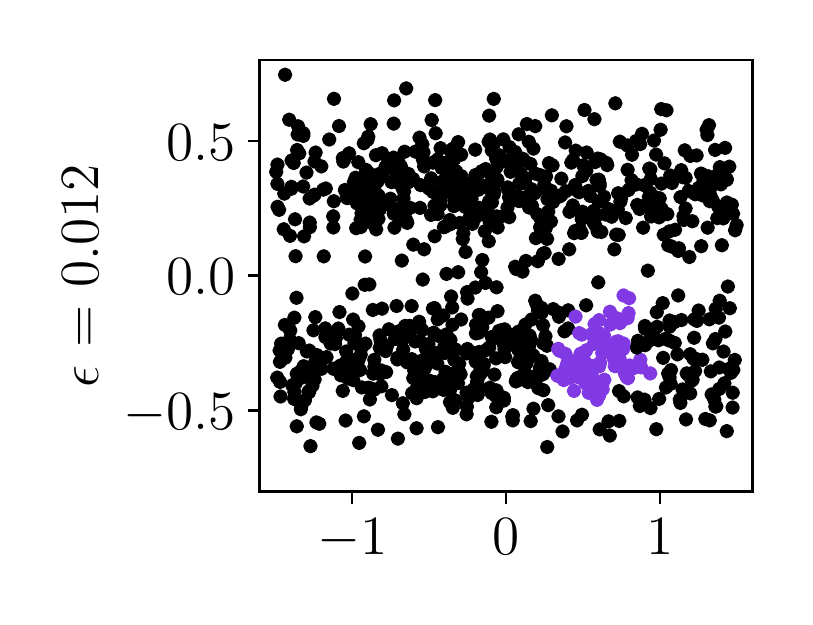}  \\
\includegraphics[height=2.5cm,trim={.6cm 1.2cm .6cm .5cm},clip]{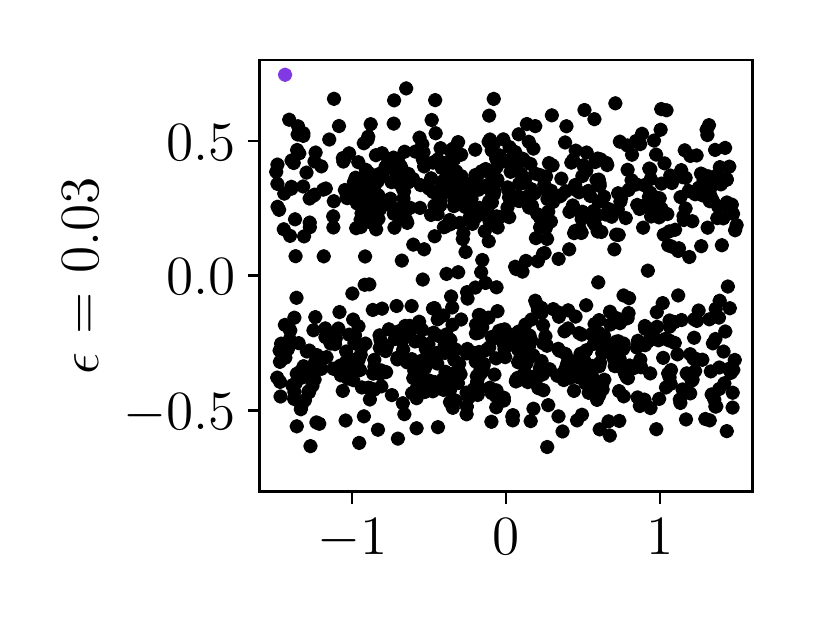}
\includegraphics[height=2.5cm,trim={2.4cm 1.2cm .6cm .5cm},clip]{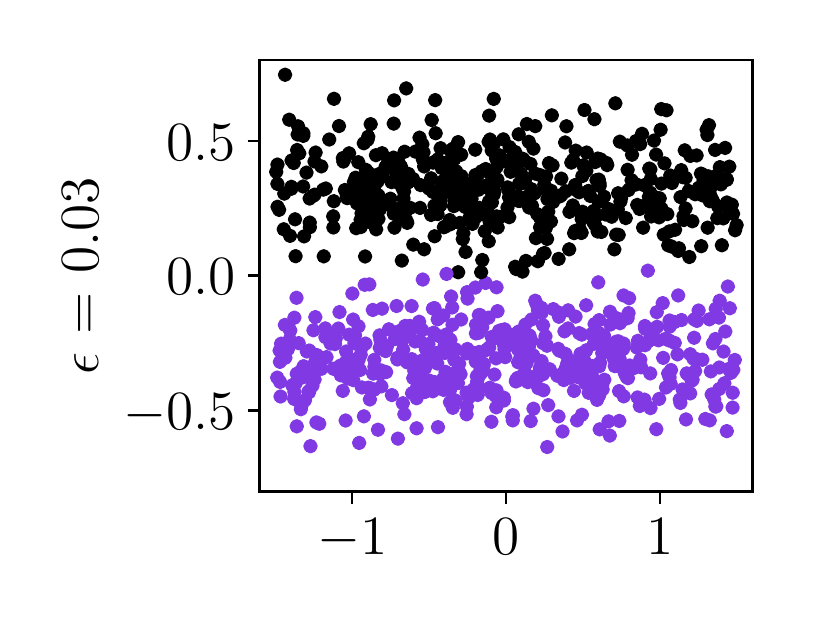} 
\includegraphics[height=2.5cm,trim={2.4cm 1.2cm .6cm .5cm},clip]{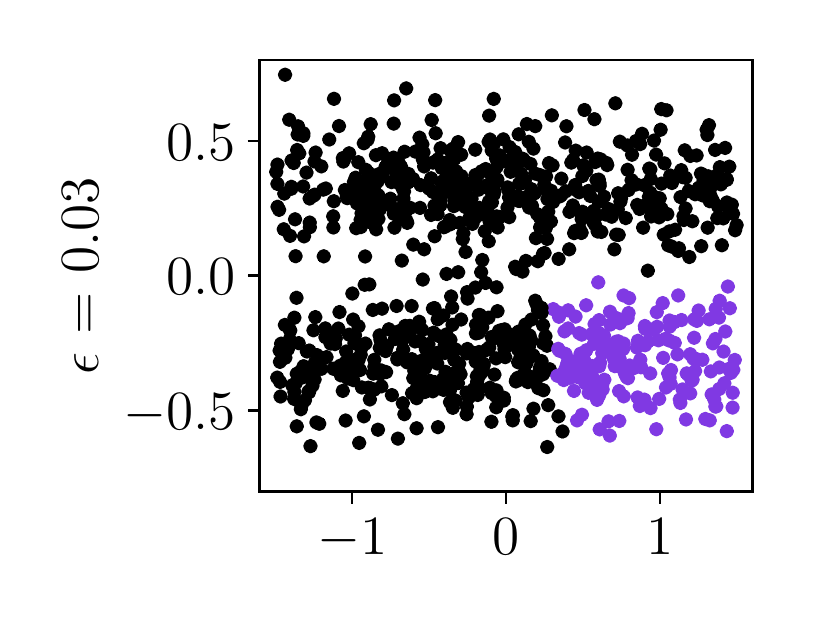}\\
\includegraphics[height=2.5cm,trim={.6cm 1.2cm .6cm .5cm},clip]{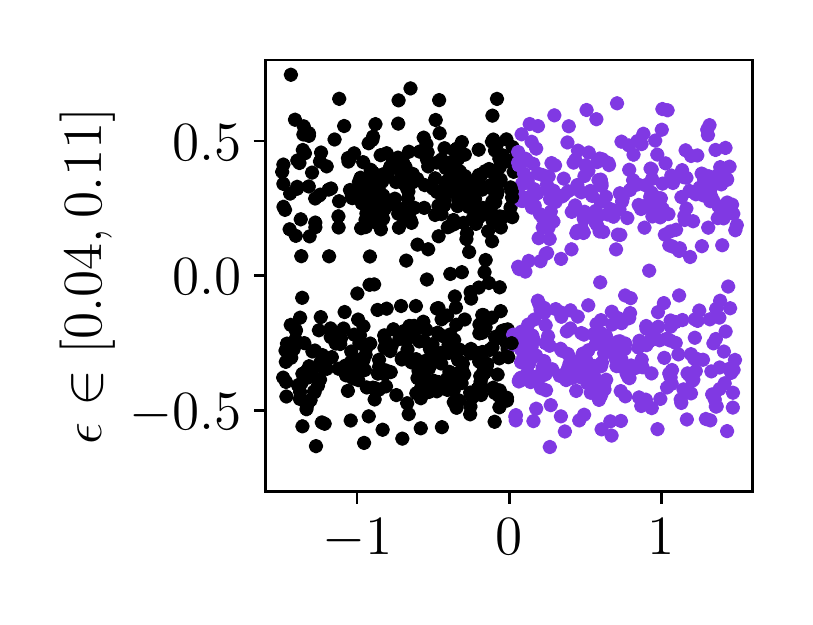} 
\includegraphics[height=2.5cm,trim={2.4cm 1.2cm .6cm .5cm},clip]{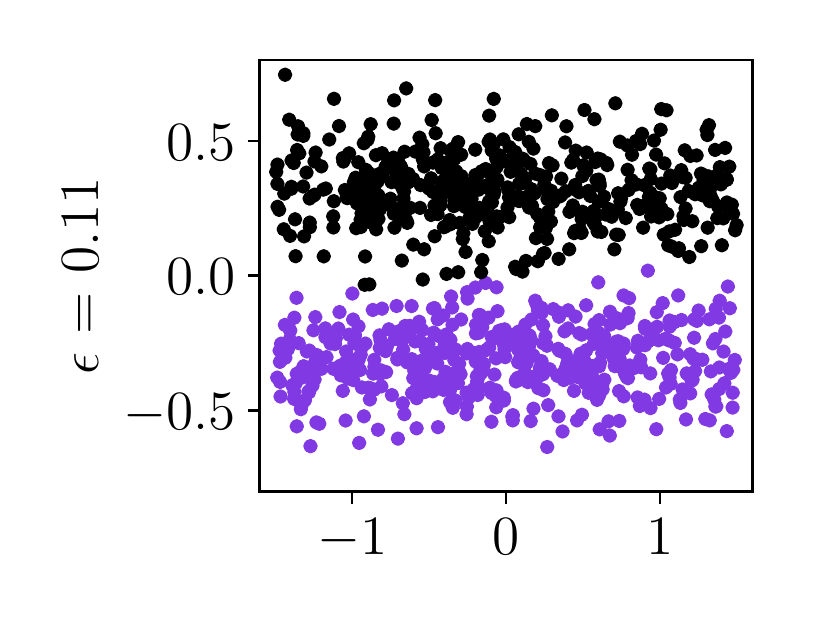} 
\includegraphics[height=2.5cm,trim={2.4cm 1.2cm .6cm .5cm},clip]{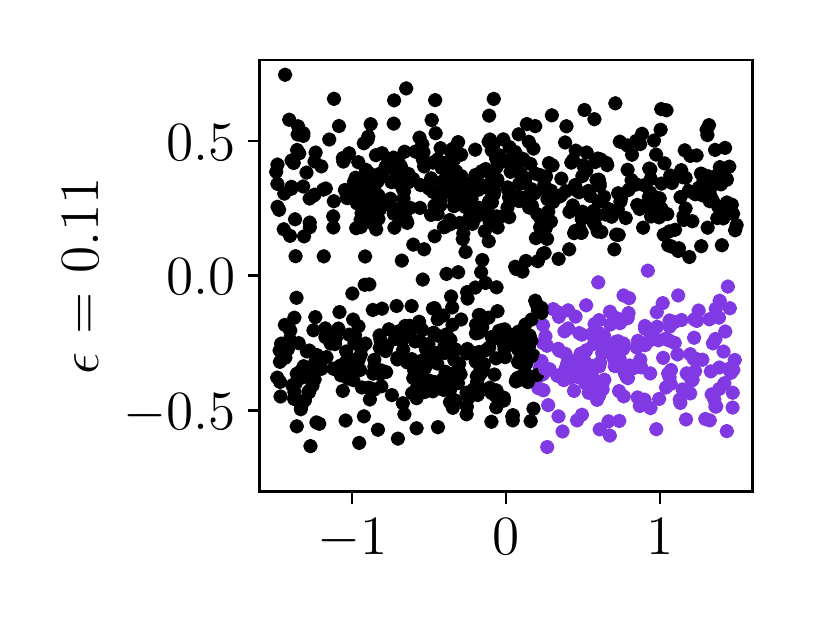} \\
\includegraphics[height=2.5cm,trim={.6cm 1.2cm .6cm .5cm},clip]{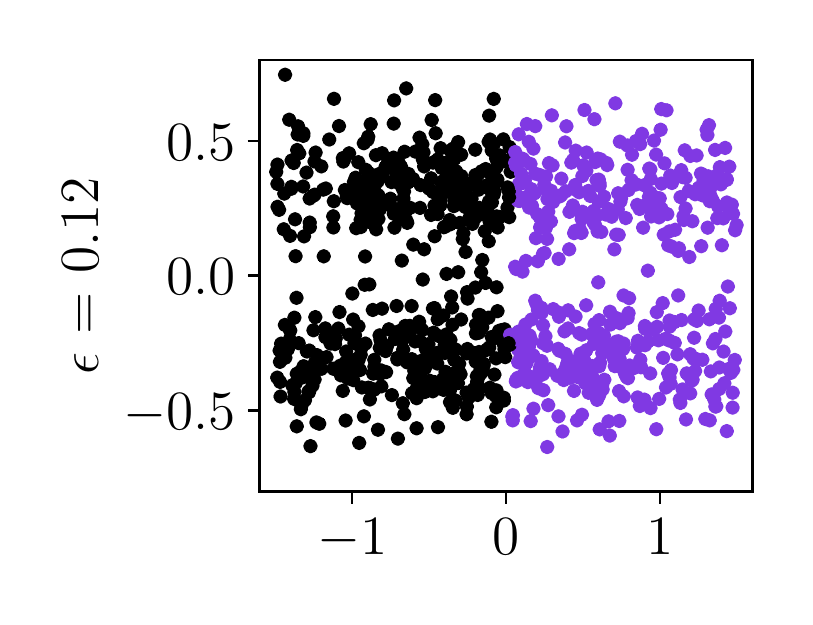} 
\includegraphics[height=2.5cm,trim={2.4cm 1.2cm .6cm .5cm},clip]{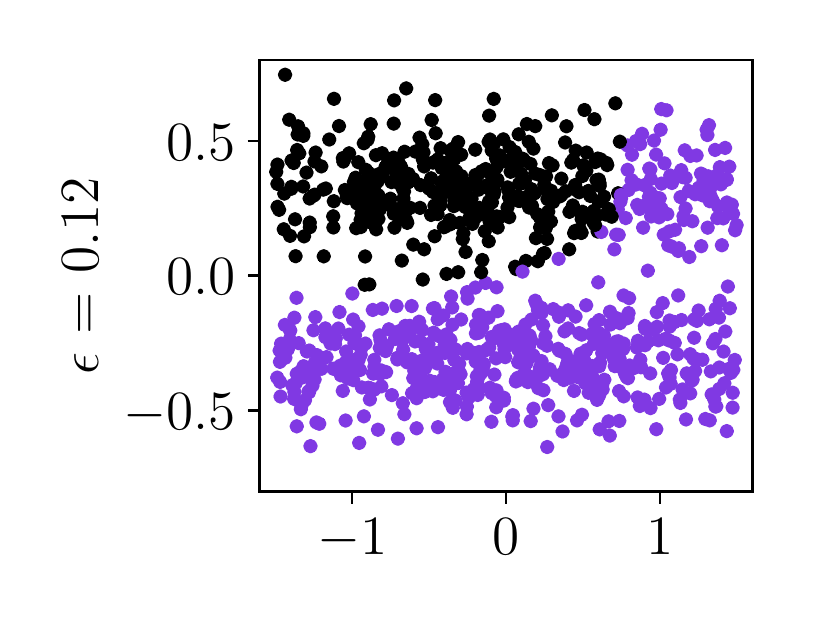} 
\includegraphics[height=2.5cm,trim={2.4cm 1.2cm .6cm .5cm},clip]{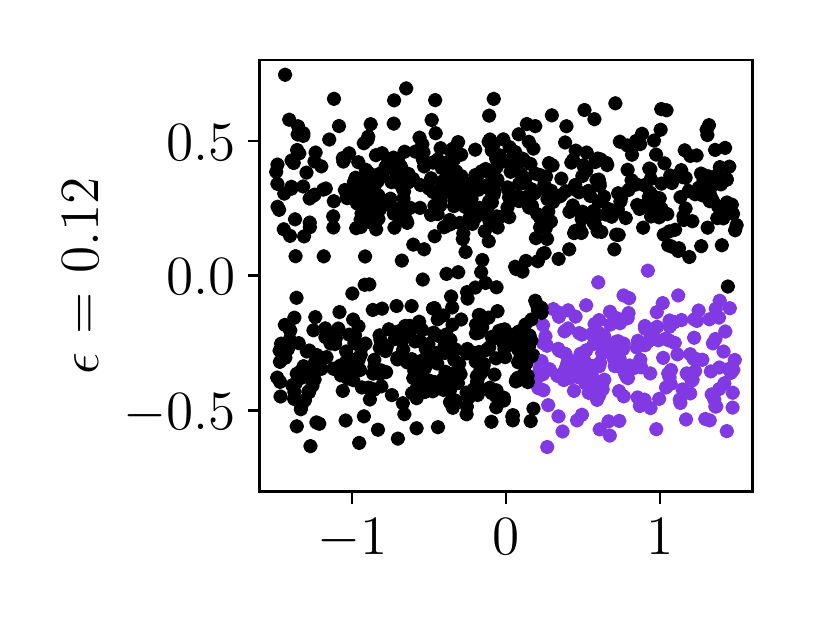} 
\caption{For the same data distribution as in Figure \ref{blueskyclustering}, we investigate how the clustering behavior of $Q_\beta$ depends on the graph connectivity length scale $\varepsilon$ for 
$\beta =0.20, 0.95, 1.00$.}
\label{blueskyepsilon}
\end{figure}

Finally, in Figure \ref{blueskyepsilon}, we investigate how the clustering behavior of $Q_\beta$ for $\rho_{\text{blue sky}}$ depends on the graph connectivity $\varepsilon$; see equation (\ref{eq:gaussian}). The columns correspond to $\beta =0.20, 0.95, 1.00$ and the rows correspond to $\varepsilon = 0.01, 0.03, [0.04,0.11]$ and $0.12$.  
{We note that for a wide range of $\varepsilon$, the diffusion dominated regime $\beta=0.2$ prefers to make shorter cuts even over parts of the domain where data are dense, which is undesirable for the data considered. On the other hand, the pure mean shift suffers, as in other examples, from the tendency to identify spurious local maxima of KDE as clusters. 
We observe the best clustering performance over the wide range of $\varepsilon$ for $\beta = 0.95$.
Considered together with other experiments, 
this suggests that adding even a small amount of diffusion goes a long way towards correct clustering.}



\subsubsection{Density vs. Geometry}

\begin{figure}
\hspace{-.5cm} \includegraphics[height=4.8cm,trim={.1cm 0.7cm 3.25cm .5cm},clip]{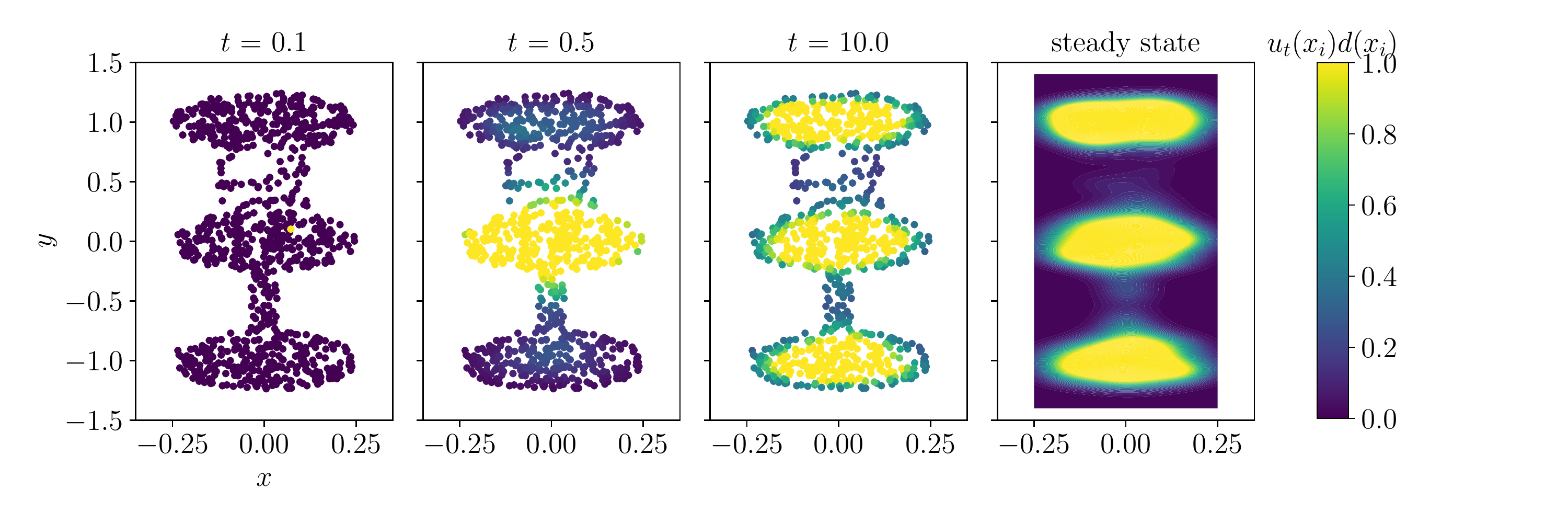} 

\caption{Illustration of graph dynamics of $Q_\beta$ on $\rho_{\text{three blobs}}$ for $n = 966$ samples, with $\beta = 0.7$ and  initial condition  $\delta_{x_i, y_i}$ for $(x_i,y_i) = (0.07,0.10)$. The markers in the first three columns represent the locations of the samples, and the colors of the markers represent the value of $u_t(x_i) d(x_i)$. In the right column, we plot  the steady state of the corresponding continuum PDE (\ref{KDEMaxwellian}).} \label{threeblobsdynamics}
\end{figure}

In Figure \ref{threeblobsdynamics}, we consider the graph dynamics of $Q_\beta$ on a two dimensional data distribution chosen to illustrate how the competing effects of density and geometry depend on the parameter $\beta$. We choose $n = 966$ samples, $\varepsilon = 0.07$, and $\delta = 0.05$,
in order to optimize agreement between the discrete dynamics and the continuum steady state. 

The data density, which we refer to as $\rho_{\text{three blobs}}$ is given by a piecewise constant function that is equal to height one on the three circles of radius $0.25$, as well as on the wide rectangle $[0.25,0.75]\times[-0.125,0.125]$ on the top. On the narrow rectangle $[-0.75,0.25]\times[-0.04,0.04]$ on the bottom, the piecewise constant function has height four. Finally, the data density is multiplied by a normalizing constant so that it integrates to one over the domain $\Omega = [-1.5,1.5]\times[-1,1]$. 

In Figure \ref{threeblobsdynamics}, we choose $\beta = 0.7$ and initial condition for the dynamics to be $\delta_{x_i, y_i}$ for $(x_i, y_i) = (0.07,0.10)$. The locations of the markers represent the samples from the data distribution, and the colors of the markers represent the value of $ u_t(x_i)d(x_i) $ at each node.  In the right column, we plot  the steady state of the corresponding continuum PDE (\ref{KDEMaxwellian}). We observe good agreement with the graph dynamics and the steady state by time $t= 10$. 

\begin{figure}
\begin{centering}
\hspace{0.6cm} $\beta = 0.70$ \hspace{1.6cm} $\beta = 0.75$ 

 \hspace{-.2cm} \includegraphics[height=12.5cm,trim={.6cm .5cm .6cm .5cm},clip]{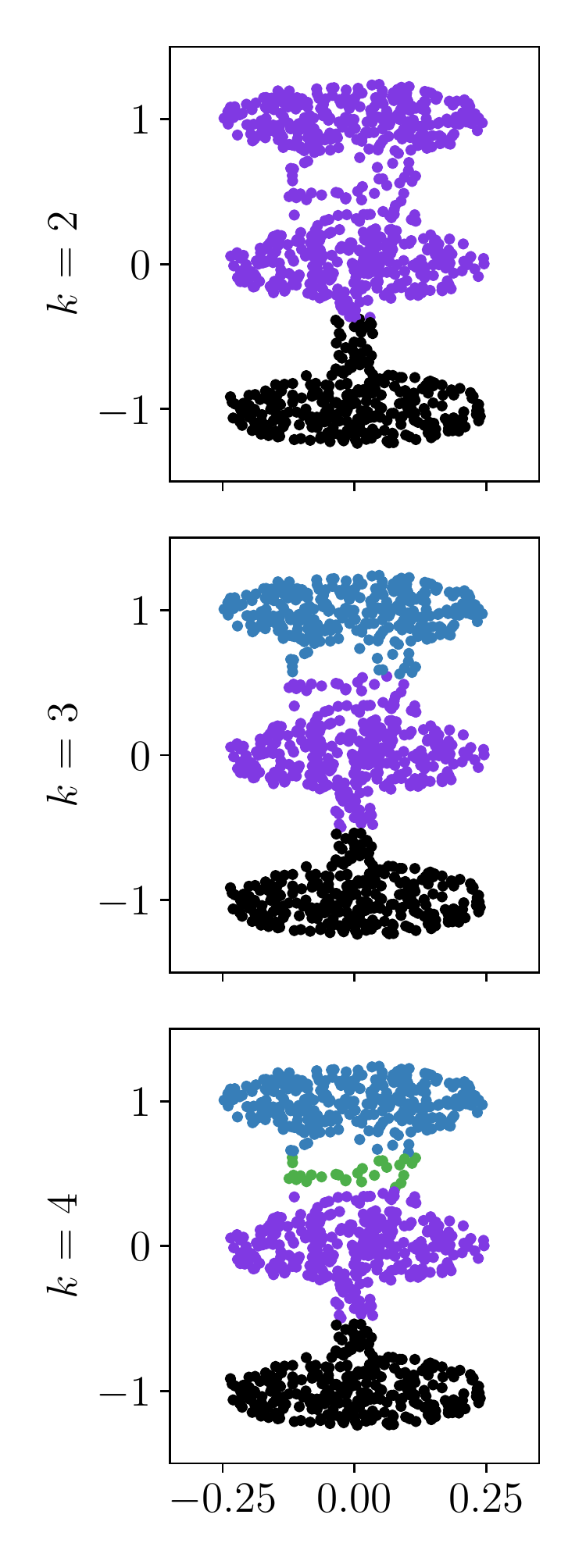} 
\includegraphics[height=12.5cm,trim={2cm .5cm .6cm .5cm},clip]{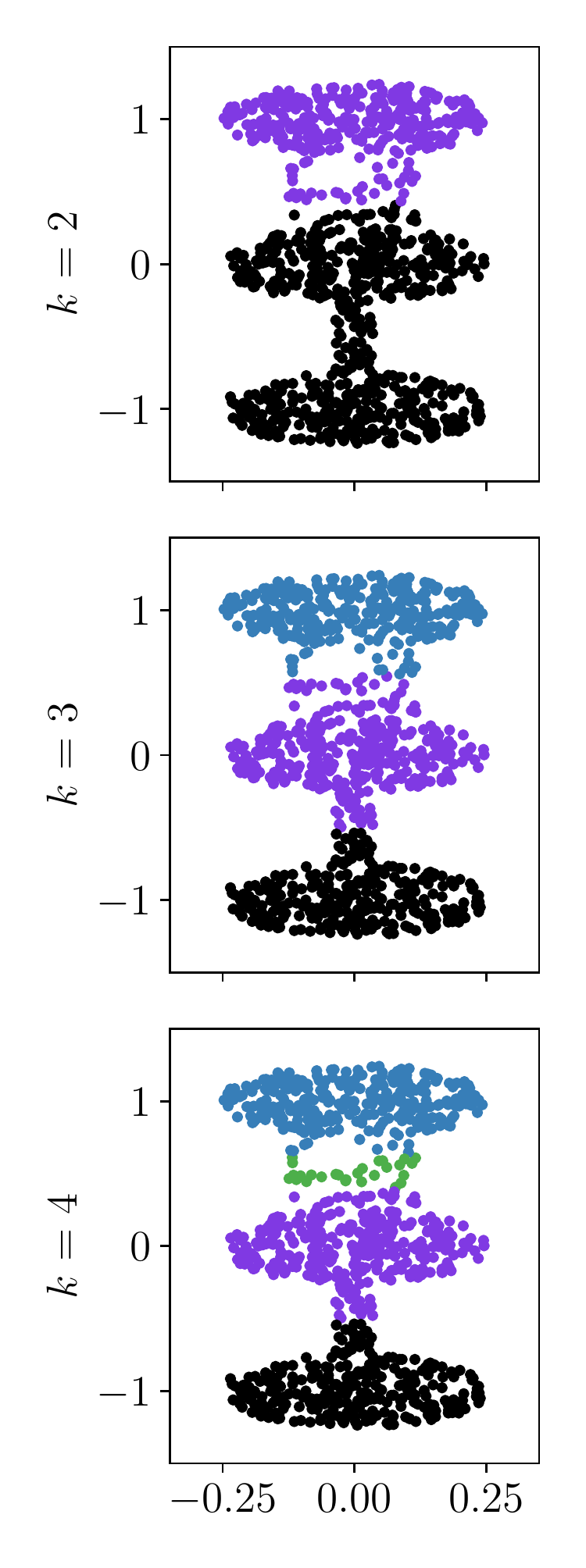} \\
  \includegraphics[height=2.45cm,trim={.6cm 1.1cm .9cm 1.1cm},clip]{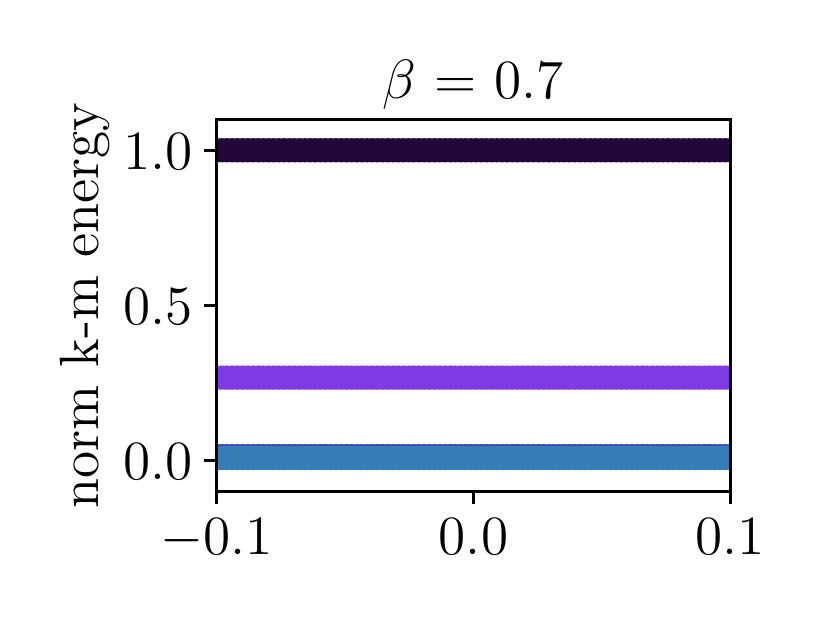}   
  \includegraphics[height=2.45cm,trim={2cm 1.1cm .9cm 1.1cm},clip]{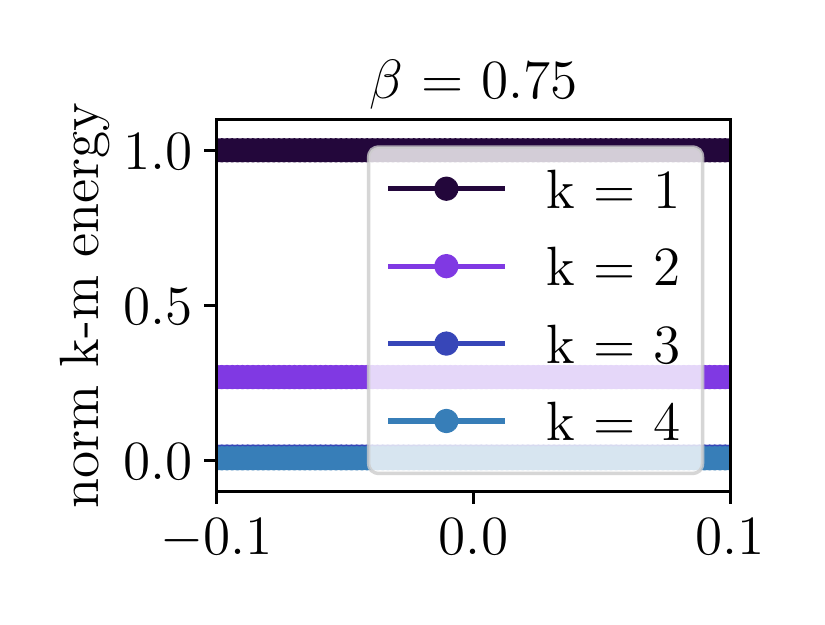}  
\end{centering}
\caption{Illustration of the clustering behavior of the  $Q_\beta$ on $\rho_{\text{three blobs}}$ for $n = 966$ samples at time $t = 10$. The first three rows show the clustering behavior for $k=2,3,4$, with each node colored according to which cluster it belongs. In the fourth row, we show the normalized $k$-means energy for each choice of $k$.} \label{threeblobsclustering}
\end{figure}

In Figure \ref{threeblobsclustering}, we show the clustering behavior of the  $Q_\beta$ on $\rho_{\text{three blobs}}$ for $n = 966$ samples at time $t= 10$. The two columns correspond to $\beta =0.7$ and $0.75$. The first three rows show the clustering behavior for $k=2,3,4$, with each node colored according to which cluster it belongs. In the fourth row, we show the normalized $k$-means energy for each choice of $k$.

This simulation provides an example of a data distribution where there is no single ``correct'' choice of clustering for $k=2$: a ``good'' clustering algorithm might seek to cut either the thin, high density rectangle on the bottom, or the wide, low density rectangle on the top. For small values of $\beta \leq 0.7$, diffusion dominates, and the clusters are chosen based on the geometry of the data, preferring to cut the thin, high density rectangle. For large values of $\beta \geq 0.75$, density dominates, and the clustering prefers to cut the wide, low density rectangle. For intermediate values of $\beta$, there is a phase transition for  which the clustering becomes unstable.



\textbf{Acknowledgements:} The authors would like to thank Eric Carlen, Paul Atzberger, Anna Little, James Murphy, and Daniel McKenzie for helpful discussions. KC was supported by NSF DMS grant 1811012 and a Hellman Faculty Fellowship. NGT was supported by NSF-DMS grant 2005797.
DS was supported by NSF grant DMS 1814991.
NGT would also like to thank the IFDS at UW-Madison and NSF through TRIPODS grant 2023239 for their support.

\bibliographystyle{abbrv}
\bibliography{bib_spect_mean_shift}

\begin{thebibliography}{10}

\bibitem{AGS}
L.~Ambrosio, N.~Gigli, and G.~Savar{\'e}.
\newblock {\em Gradient flows in metric spaces and in the space of probability
  measures}.
\newblock Lectures in Mathematics ETH Z\"urich. Birkh\"auser Verlag, Basel,
  second edition, 2008.

\bibitem{arias-casttro16meanshift}
E.~Arias-Castro, D.~Mason, and B.~Pelletier.
\newblock On the estimation of the gradient lines of a density and the
  consistency of the mean-shift algorithm.
\newblock {\em J. Mach. Learn. Res.}, 17:Paper No. 43, 28, 2016.

\bibitem{belkin2003laplacian}
M.~Belkin and P.~Niyogi.
\newblock Laplacian eigenmaps for dimensionality reduction and data
  representation.
\newblock {\em Neural computation}, 15(6):1373--1396, 2003.

\bibitem{belkin2005towards}
M.~Belkin and P.~Niyogi.
\newblock Towards a theoretical foundation for {L}aplacian-based manifold
  methods.
\newblock In {\em International Conference on Computational Learning Theory},
  pages 486--500. Springer, 2005.

\bibitem{BIK}
D.~Burago, S.~Ivanov, and Y.~Kurylev.
\newblock A graph discretization of the {L}aplace-{B}eltrami operator.
\newblock {\em J. Spectr. Theory}, 4(4):675--714, 2014.

\bibitem{calder2019improved}
J.~Calder and N.~Garc\'{i}a~Trillos.
\newblock Improved spectral convergence rates for graph laplacians on
  epsilon-graphs and k-nn graphs.
\newblock {\em arXiv preprint arXiv:1910.13476}, 2019.

\bibitem{CalderGTLewicka}
J.~Calder, N.~G. Trillos, and M.~Lewicka.
\newblock Lipschitz regularity of graph laplacians on random data clouds, 2020.

\bibitem{CarreiraPerpMeanShift2}
M.~A. Carreira-Perpi\~{n}\'{a}n.
\newblock Gaussian mean-shift is an em algorithm.
\newblock {\em IEEE Transactions on Pattern Analysis and Machine Intelligence},
  29(5):767--776, 2007.

\bibitem{Carreira16overview}
M.~A. Carreira-Perpi\~{n}\'{a}n.
\newblock Clustering methods based on kernel density estimators: mean-shift
  algorithms.
\newblock In {\em Handbook of cluster analysis}, Chapman \& Hall/CRC Handb.
  Mod. Stat. Methods, pages 383--417. CRC Press, Boca Raton, FL, 2016.

\bibitem{cheng2020tuned}
X.~Cheng and H.-T. Wu.
\newblock Convergence of graph laplacian with knn self-tuned kernels.
\newblock {\em arXiv preprint arXiv:2011.01479}, 2020.

\bibitem{chow2018entropy}
S.-N. Chow, L.~Dieci, W.~Li, and H.~Zhou.
\newblock Entropy dissipation semi-discretization schemes for {F}okker-{P}lanck
  equations.
\newblock {\em J. Dynam. Differential Equations}, 31(2):765--792, 2019.

\bibitem{Coifman1}
R.~R. Coifman and S.~Lafon.
\newblock Diffusion maps.
\newblock {\em Appl. Comput. Harmon. Anal.}, 21(1):5--30, 2006.

\bibitem{DunsonWuWu}
D.~B. Dunson, H.-T. Wu, and N.~Wu.
\newblock Spectral convergence of graph laplacian and heat kernel
  reconstruction in $l^\infty$ from random samples, 2019.

\bibitem{esposito2019nonlocal}
A.~Esposito, F.~S. Patacchini, A.~Schlichting, and D.~Slepcev.
\newblock Nonlocal-interaction equation on graphs: gradient flow structure and
  continuum limit.
\newblock {\em Arch. Ration. Mech. Anal.}, 240(2):699--760, 2021.

\bibitem{Fukunaga75MS}
K.~Fukunaga and L.~Hostetler.
\newblock The estimation of the gradient of a density function, with
  applications in pattern recognition.
\newblock {\em IEEE Transactions on Information Theory}, 21(1):32--40, 1975.

\bibitem{trillos2019error}
N.~{Garc{\'\i}a Trillos}, M.~Gerlach, M.~Hein, and D.~Slep{\v{c}}ev.
\newblock Error estimates for spectral convergence of the graph laplacian on
  random geometric graphs toward the laplace--beltrami operator.
\newblock {\em Foundations of Computational Mathematics}, pages 1--61, 2019.

\bibitem{HosseiniHoffmann}
N.~{Garc\'ia Trillos}, F.~Hoffmann, and B.~Hosseini.
\newblock Geometric structure of graph laplacian embeddings.
\newblock {\em Journal of Machine Learning Research}, 22(63):1--55, 2021.

\bibitem{trillos2018variational}
N.~Garc\'{i}a~Trillos and D.~Slep{\v{c}}ev.
\newblock A variational approach to the consistency of spectral clustering.
\newblock {\em Applied and Computational Harmonic Analysis}, 45(2):239--281,
  2018.

\bibitem{GK}
E.~Gin{\'e} and V.~Koltchinskii.
\newblock Empirical graph {L}aplacian approximation of {L}aplace-{B}eltrami
  operators: large sample results.
\newblock In {\em High dimensional probability}, volume~51 of {\em IMS Lecture
  Notes Monogr. Ser.}, pages 238--259. Inst. Math. Statist., Beachwood, OH,
  2006.

\bibitem{hein2007graph}
M.~Hein, J.-Y. Audibert, and U.~v. Luxburg.
\newblock Graph laplacians and their convergence on random neighborhood graphs.
\newblock {\em Journal of Machine Learning Research}, 8(6), 2007.

\bibitem{hein2005graphs}
M.~Hein, J.-Y. Audibert, and U.~Von~Luxburg.
\newblock From graphs to manifolds--weak and strong pointwise consistency of
  graph laplacians.
\newblock In {\em International Conference on Computational Learning Theory},
  pages 470--485. Springer, 2005.

\bibitem{Hodge2004}
V.~J. Hodge and J.~Austin.
\newblock A survey of outlier detection methodologies.
\newblock {\em Artificial Intelligence Review}, 22(2):85--126, Oct 2004.

\bibitem{matplotlib}
J.~D. Hunter.
\newblock Matplotlib: a 2d graphics environment.
\newblock {\em Comput. Sci. Eng.}, 9(3):90--95, 2007.

\bibitem{scipy}
E.~Jones, T.~Oliphant, P.~Peterson, et~al.
\newblock {\em {SciPy}: Open source scientific tools for {Python}}, 2001--.
\newblock Available at \url{http://www.scipy.org/}.

\bibitem{koontz1976graph}
W.~L.~G. Koontz, P.~M. Narendra, and K.~Fukunaga.
\newblock A graph-theoretic approach to nonparametric cluster analysis.
\newblock {\em IEEE Transactions on Computers}, (9):936--944, 1976.

\bibitem{LittleJMLR20}
A.~Little, M.~Maggioni, and J.~M. Murphy.
\newblock Path-based spectral clustering: Guarantees, robustness to outliers,
  and fast algorithms.
\newblock {\em Journal of Machine Learning Research}, 21(6):1--66, 2020.

\bibitem{Little20}
A.~Little, D.~McKenzie, and J.~Murphy.
\newblock Balancing geometry and density: Path distances on high-dimensional
  data, 2020.

\bibitem{Lu2019GraphAT}
J.~Lu.
\newblock Graph approximations to the laplacian spectra.
\newblock {\em arXiv: Differential Geometry}, 2019.

\bibitem{Maas11gradient}
J.~Maas.
\newblock Gradient flows of the entropy for finite markov chains.
\newblock {\em Journal of Functional Analysis}, 8(261):2250--2292, 2011.

\bibitem{MichelSmallEigenWittenLap19}
L.~Michel.
\newblock {About small eigenvalues of the Witten Laplacian}.
\newblock {\em Pure and Applied Analysis}, 1(2):149 -- 206, 2019.

\bibitem{NADLER2006113}
B.~Nadler, S.~Lafon, R.~R. Coifman, and I.~G. Kevrekidis.
\newblock Diffusion maps, spectral clustering and reaction coordinates of
  dynamical systems.
\newblock {\em Applied and Computational Harmonic Analysis}, 21(1):113--127,
  2006.
\newblock Special Issue: Diffusion Maps and Wavelets.

\bibitem{JordanNg01}
A.~Y. Ng, M.~I. Jordan, and Y.~Weiss.
\newblock On spectral clustering: Analysis and an algorithm.
\newblock In {\em Advances in Neural Information Processing Systems (NIPS)},
  pages 849--856. MIT Press, 2001.

\bibitem{scikit-learn}
F.~Pedregosa, G.~Varoquaux, A.~Gramfort, V.~Michel, B.~Thirion, O.~Grisel,
  M.~Blondel, P.~Prettenhofer, R.~Weiss, V.~Dubourg, J.~Vanderplas, A.~Passos,
  D.~Cournapeau, M.~Brucher, M.~Perrot, and E.~Duchesnay.
\newblock Scikit-learn: Machine learning in {P}ython.
\newblock {\em Journal of Machine Learning Research}, 12:2825--2830, 2011.

\bibitem{Schiebinger}
G.~Schiebinger, M.~J. Wainwright, and B.~Yu.
\newblock {The geometry of kernelized spectral clustering}.
\newblock {\em The Annals of Statistics}, 43(2):819 -- 846, 2015.

\bibitem{ShiMalik}
J.~Shi and J.~Malik.
\newblock Normalized cuts and image segmentation.
\newblock {\em Pattern Analysis and Machine Intelligence, IEEE Transactions
  on}, 22(8):888--905, 2000.

\bibitem{Shi2015}
Z.~Shi.
\newblock Convergence of {L}aplacian spectra from random samples.
\newblock arXiv preprint arXiv:1507.00151, 2015.

\bibitem{singer2006graph}
A.~Singer.
\newblock From graph to manifold laplacian: The convergence rate.
\newblock {\em Applied and Computational Harmonic Analysis}, 21(1):128--134,
  2006.

\bibitem{tibshirani-gap-01}
R.~Tibshirani, G.~Walther, and T.~Hastie.
\newblock Estimating the number of clusters in a data set via the gap
  statistic.
\newblock {\em Journal of the Royal Statistical Society: Series B (Statistical
  Methodology)}, 63(2):411--423, 2001.

\bibitem{BoundaryWuWu}
H.~tieng Wu and N.~Wu.
\newblock When locally linear embedding hits boundary, 2018.

\bibitem{ting2010analysis}
D.~Ting, L.~Huang, and M.~Jordan.
\newblock An analysis of the convergence of graph laplacians.
\newblock {\em arXiv preprint arXiv:1101.5435}, 2011.

\bibitem{van2009dimensionality}
L.~van~der Maaten, E.~Postma, and H.~van~den Herik.
\newblock Dimensionality reduction: A comparative review. {T}ilburg
  {U}niversity {T}echnical {R}eport, {TiCC-TR2009-005}.
\newblock 2009.

\bibitem{numpy}
S.~van~der Walt, C.~Colbert, and G.~Varoquaux.
\newblock The numpy array: a structure for efficient numerical computation.
\newblock {\em Comput. Sci. Eng.}, 13(2):22--30, 2011.

\bibitem{BerryVaughn}
R.~Vaughn, T.~Berry, and H.~Antil.
\newblock Diffusion maps for embedded manifolds with boundary with applications
  to {PDEs}, 2019.

\bibitem{vedaldi2008quick}
A.~Vedaldi and S.~Soatto.
\newblock Quick shift and kernel methods for mode seeking.
\newblock In {\em European conference on computer vision}, pages 705--718.
  Springer, 2008.

\bibitem{vonLuxburg2007Tutorial}
U.~von Luxburg.
\newblock A tutorial on spectral clustering.
\newblock {\em Statistics and Computing}, 17(4):395--416, Dec 2007.

\bibitem{vLBeBo08}
U.~von Luxburg, M.~Belkin, and O.~Bousquet.
\newblock Consistency of spectral clustering.
\newblock {\em Ann. Statist.}, 36(2):555--586, 2008.

\bibitem{Witten83}
E.~Witten.
\newblock {Supersymmetry and Morse theory}.
\newblock {\em Journal of Differential Geometry}, 17(4):661 -- 692, 1982.

\bibitem{WormellReich}
C.~L. Wormell and S.~Reich.
\newblock Spectral convergence of diffusion maps: improved error bounds and an
  alternative normalisation, 2020.

\end{thebibliography}

\end{document}